\newcommand{\bx}{\mathbf{x}}
\newcommand{\bs}{\mathbf{s}}
\newcommand{\ba}{\mathbf{a}}
\newcommand{\bo}{\mathbf{o}}
\newcommand{\br}{\mathbf{r}}
\newcommand{\xtk}{\bx_t^{k_t}}
\newcommand{\bz}{\mathbf{z}}
\newcommand{\Exp}{\mathop{\mathlarger{\mathbb{E}}}}
\newcommand{\cN}{\mathcal{N}}
\newcommand{\cK}{\mathcal{K}}
\theoremstyle{plain}
\newtheorem{theorem}{Theorem}[section]
\newtheorem{claim}{Claim}
\newtheorem{corollary}[theorem]{Corollary}
\theoremstyle{definition}
\theoremstyle{remark}
\newtheorem{remark}{Remark}
\title{Diffusion Forcing: Next-token Prediction\\Meets Full-Sequence Diffusion}
\author{%
  Boyuan Chen \\
  MIT CSAIL\\
  \texttt{boyuanc@mit.edu} \\
  \And
  Diego Marti Monso\thanks{Work done as a visiting student at MIT.} \\
  Technical University of Munich\\
  \texttt{diego.marti@tum.de} \\
  \And
  Yilun Du \\
  MIT CSAIL\\
  \texttt{yilundu@mit.edu} \\
  \And
  Max Simchowitz \\
  MIT CSAIL\\
  \texttt{msimchow@mit.edu} \\
  \And
  Russ Tedrake \\
  MIT CSAIL\\
  \texttt{russt@mit.edu} \\
  \And
  Vincent Sitzmann \\
  MIT CSAIL\\
  \texttt{sitzmann@mit.edu} \\
}
\newcommand{\algo}{{Diffusion Forcing}}
\newcommand{\algoseq}{{Causal Diffusion Forcing}}
\newcommand{\algons}{Diffusion Forcing}
\newcommand{\algshort}{DF}
\newcommand{\algshortseq}{CDF}
\newcommand{\algopar}{(Causal) Diffusion Forcing}
\renewcommand{\paragraph}{%
  \@startsection{paragraph}{4}%
  {\z@}{0ex \@plus 0ex \@minus 0.ex}{-0.5em}%
  {\normalfont\normalsize\bfseries}%
}
\begin{document}
\maketitle
\numberwithin{equation}{section}

\begin{abstract}
This paper presents \algons{}, a new training paradigm where a diffusion model is trained to denoise a set of tokens with \emph{independent} per-token noise levels.
We apply \algons{} to sequence generative modeling by training a causal next-token prediction model to generate one or several future tokens without fully diffusing past ones. Our approach is shown to combine the strengths of next-token prediction models, such as variable-length generation, with the strengths of full-sequence diffusion models, such as the ability to guide sampling to desirable trajectories. Our method offers a range of additional capabilities, such as (1) rolling-out sequences of continuous tokens, such as video, with lengths past the training horizon, where baselines diverge and (2) new sampling and guiding schemes that uniquely profit from \algons{}'s variable-horizon and causal architecture,  and which lead to marked performance gains in decision-making and planning tasks. In addition to its empirical success, our method is proven to optimize a variational lower bound on the likelihoods of all subsequences of tokens drawn from the true joint distribution. Project website: \url{https://boyuan.space/diffusion-forcing}

\end{abstract}

\section{Introduction}
Probabilistic sequence modeling plays a crucial role in diverse machine learning applications including natural language processing~\cite{gpt3,peng-etal-2023-rwkv}, video prediction~\cite{ho2022video,yang2023diffusion,flexible_diffusion} and decision making~\cite{openai_dota,dreamer}. 
Next-token prediction models in particular have a number of desirable properties. They enable the generation of sequences with varying length~\cite{10.1162/neco.1997.9.8.1735,818041,DBLP:journals/corr/abs-2006-16236} (generating only a single token or an ``infinite'' number of tokens via auto-regressive sampling), can be conditioned on varying amounts of history~\cite{818041,DBLP:journals/corr/abs-2006-16236}, support efficient tree search\cite{yao2023tree, planet,tdmpc}, and can be used for online feedback control~\cite{dreamer, openai_dota}.

Current next-token prediction models are trained via \emph{teacher forcing}~\cite{teacher_forcing}, where the model predicts the immediate next token based on a ground truth history of previous tokens.
This results in two limitations: (1) there is no mechanism by which one can guide the sampling of a sequence to minimize a certain objective, and
(2) current next-token models easily become \emph{unstable} on continuous data. For example, when attempting to auto-regressively generate a video (as opposed to text~\cite{gpt3} or vector-quantized latents~\cite{hu2023gaia}) past the training horizon, slight errors in frame-to-frame predictions accumulate and the model diverges.

\emph{Full-sequence diffusion} seemingly offers a solution.
Commonly used in video generation and long-horizon planning, one directly models the joint distribution of a fixed number of tokens by diffusing their concatenation~\cite{ho2022video, ajay2022conditional}, where the noise level is identical across all tokens.
They offer \emph{diffusion guidance}~\cite{ho2022classifierfree,dhariwal2021diffusion} to guide sampling to a desirable sequence, invaluable in decision-making (planning) applications~\cite{janner2022planning,huang2023diffusion}. 
They further excel at generating continuous signals such as video~\cite{ho2022video}.
However, full-sequence diffusion is universally parameterized via non-causal, unmasked architectures. 
In addition to restricting sampling to full sequences, as opposed to variable length generation, we show that this limits the possibilities for both guidance and subsequence generation (\Cref{fig:ability}).
Further, we demonstrate that a naive attempt at combining the best of both worlds by training a next-token prediction model for full-sequence diffusion leads to poor generations, intuitively because it does not model the fact that small uncertainty in an early token necessitates high uncertainty in a later one.

In this paper, we introduce \emph{Diffusion Forcing} ({\algshort}), a training and sampling paradigm where each token is associated with a \emph{random, independent} noise level, and where tokens can be denoised according to arbitrary, independent, per-token schedules through a shared next-or-next-few-token prediction model. Our approach is motivated by the observation that noising tokens is a form of \emph{partial masking}---zero noise means a token is unmasked, and complete noise fully masks out a token. Thus, \algshort{} forces the model to learn to ``unmask'' any collection of variably noised tokens (\Cref{fig:method}). Simultaneously, by parameterizing predictions as a composition of next-token prediction models, our system can flexibly generate varying length sequences as well as compositionally generalize to new trajectories (Figure~\ref{fig:ability}).

We implement \algshort{} for sequence generation as \emph{\algoseq{}} ({\algshortseq}), in which future tokens depend on past ones via a causal architecture.
We train the model to denoise all tokens of a sequence at once, with an independent noise level per token. 
During sampling, \algshortseq{} gradually denoises a sequence of Gaussian noise frames into clean samples where different frames may have different noise levels at each denoising step.
Like  next-token prediction models, \algshortseq{} can generate variable-length sequences; unlike next-token prediction, it does so stabily from the immediate next token to thousands of tokens in the future -- even for continuous tokens. Moreover, like full-sequence diffusion it accepts guidance towards high-reward generations. Synergistically leveraging  causality, flexible horizon, and variable noise schedules, \algshortseq{} enables a new capability, 
Monte Carlo Guidance (MCG), that dramatically improves the sampling of high-reward generations   compared to non-causal full-sequence diffusion models.  Fig.~\ref{fig:ability}  overviews these capabilities.

\begin{figure}[t]
    \includegraphics[width=\linewidth]{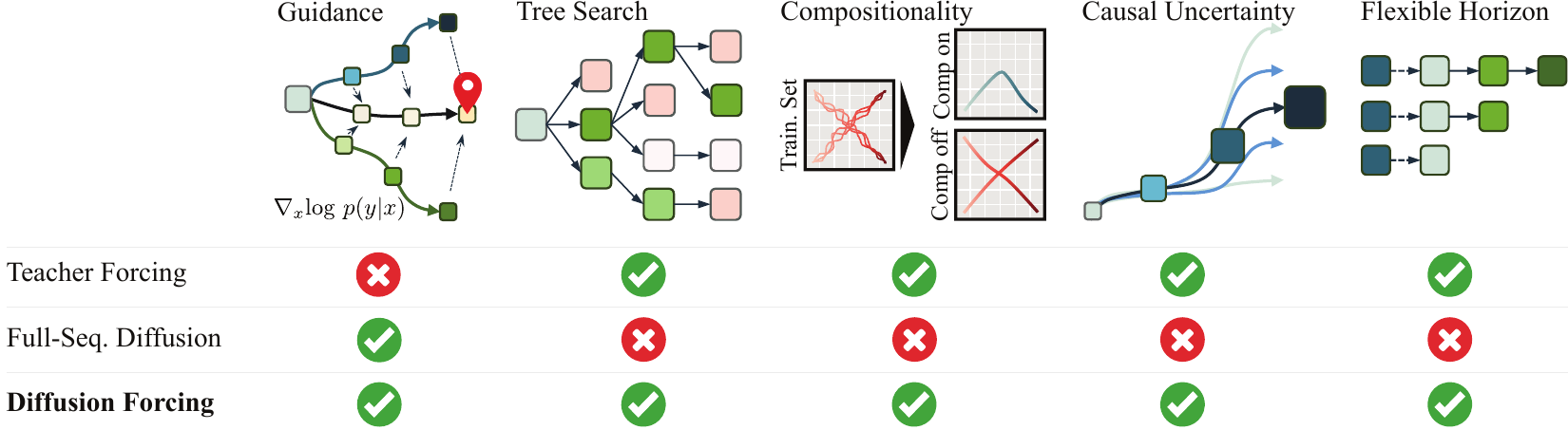}
    \caption{\textbf{\algo{} capabilities.} Today, different applications such as language modeling~\cite{gpt3}, planning~\cite{janner2022planning}, or video generation~\cite{ho2022video,yang2023diffusion} rely on \emph{either} auto-regressive next-token prediction \emph{or} full-sequence diffusion, according to their respective unique capabilities. The proposed \algons{} is a novel sequence generative model that enjoys key strengths of both model types.} 
    \label{fig:ability}
    \vspace{-5pt}
\end{figure}

In summary, our contributions are:
(1) We propose \algons, a new probabilistic sequence model that has the flexibility of next-token prediction models while being able to perform long-horizon guidance like full-sequence diffusion models.
(2) Taking advantage of \algons{}'s unique capabilities, we introduce a novel decision-making framework that allows us to use Diffusion Forcing as simultaneously a \emph{policy} (\cite{chi2023diffusion}) and as a \emph{planner} (\cite{janner2022planning}). 
(3) We formally prove that, under appropriate conditions, optimizing our proposed training objective maximizes a  lower bound on the likelihood of the joint distribution of \emph{all sub-sequences} observed at training time.
(4) We empirically evaluate \algshortseq{} across diverse domains such as video generation, model-based planning, visual imitation learning, and time series prediction, and demonstrate \algshortseq{}'s unique capabilities, such as stabilizing long-rollout autoregressive video generation, composing sub-sequences of those observed at training time with user-determined memory horizon, Monte Carlo Guidance, and more.

\section{Related Work and Preliminaries}
\label{sec:related_prelim}
We discuss related work and preliminaries for our core application, sequence generative modeling; see \Cref{app:extended_related} for further literature review.

Our method unifies two perspectives on sequence modeling:  Bayesian filtering along the time axis, denoted by subscript $t$, and diffusion along an ``uncertainty'' (or noise level) axis denoted by superscript $k$. In the following, we denote observations as $\bx \in \mathcal{X}$ and latent states as $\bz \in \mathcal Z$. 

\paragraph{Bayesian Filtering.}
Given a Hidden Markov Model (HMM) defined by latent states $\mathbf{z}_t$ and observations $\bx_t$, a Bayes filter is a probabilistic method for estimating latent states recursively over time from incoming observations. A prior model $p(\mathbf{z}_{t+1}|\mathbf{z}_{t})$ infers a belief over the next state given only the current state, and an observation model infers a belief over the next observation given the current latent state $p(\bx_{t}|\mathbf{z}_{t})$. When a new observation is made, a posterior model $p(\mathbf{z}_{t+1}|\mathbf{z}_{t}, \bx_{t+1})$ provides an updated estimation of the next latent state $\mathbf{z}_{t+1}$. When trained end-to-end with neural networks \cite{dreamer,planet}, latent states are not an estimate of any physical quantity, but a sufficiently expressive latent that summarizes past observations for predicting future observations $(\bx_{t'})_{t'>t}$ in the sequence. 

\begin{figure*}[t]
    \centering
    \includegraphics[width=\linewidth]{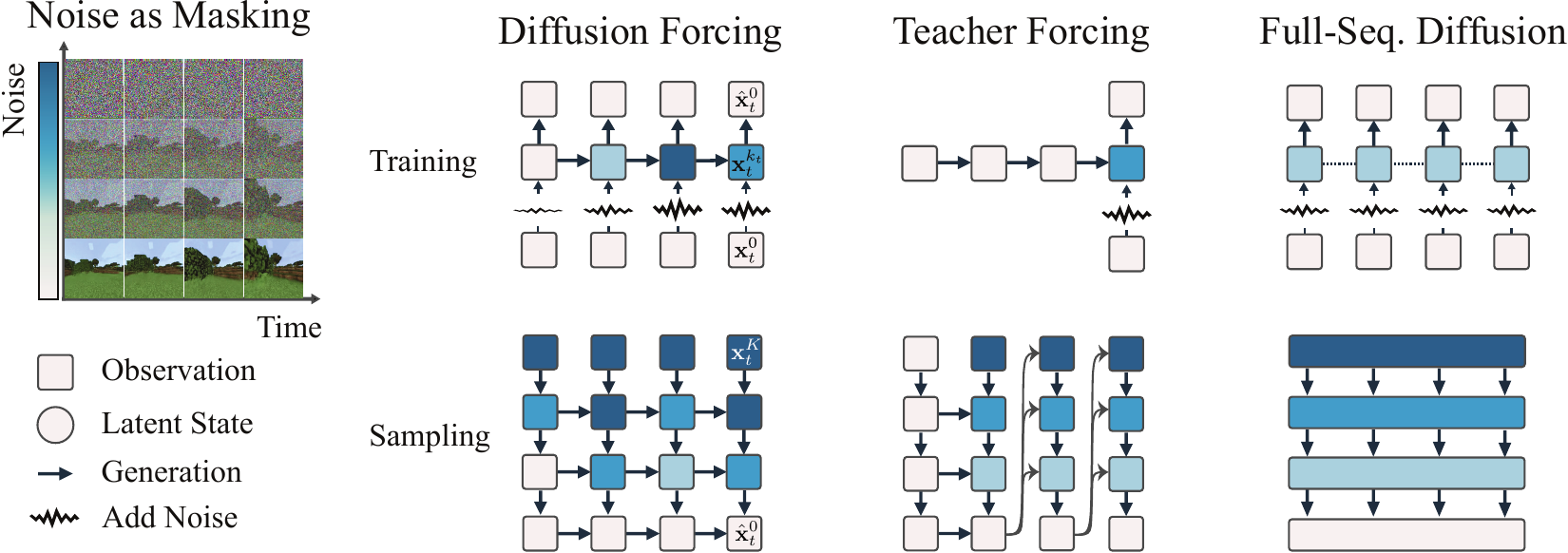}
    \caption{\textbf{Method Overview.} 
    Diffusion Forcing trains causal sequence neural networks (such as an RNN or a masked transformer) to denoise flexible-length sequences where each frame of the sequence can have a \emph{different} noise level.
    In contrast, next-token prediction models, common in language modeling, are trained to predict a single next token from a \emph{ground-truth} sequence (teacher forcing~\cite{teacher_forcing}), and full-sequence diffusion, common in video generation, train non-causal architectures to denoise all frames in a sequence at once with the \emph{same} noise level.
    \algo{} thus \emph{interleaves} the time axis of the sequence and the noise axis of diffusion, unifying strengths of both alternatives and enabling completely new capabilities (see Secs.~\ref{sec:zigzag_example},\ref{sec:method_decision_making}).
    }
    \label{fig:method}
    \vspace{-5pt}
\end{figure*}

\newcommand{\eye}{\mathbf{I}}

\newcommand{\bmu}{\bm{\mu}}
\newcommand{\beps}{\bm{\epsilon}}
\paragraph{Diffusion Models.}
Diffusion models~\cite{sohl2015deep,ho2020denoising} have proven to be highly expressive and reliable generative models.
We review their essentials here. Let $q(\bx)$ denote a data distribution of interest, and let $\bx^0 \equiv \bx \sim q$. We consider a forward diffusion process that gradually adds Gaussian noise to a data point over a series of time steps. This process is modeled as a Markov chain, where the data at each step \( k \) is noised incrementally:
\begin{equation}
    \label{eqn:forward_diffusion}
    q(\bx^k|\bx^{k-1}) = \mathcal{N}(\bx^k; \sqrt{1-\beta_k}\bx^{k-1}, \beta_k \mathbf{I})
\end{equation}
where \( \mathcal{N} \) is the normal distribution and \( \beta_k \) is the variance of the noise added at each step controlled by a schedule $\{\beta_k \in (0, 1)\}_{k=1}^K$. The process continues until the data is converted into pure noise at \( \bx^K \).
The reverse process is also a Markov chain and attempts to recreate the original data from the noise with a parameterized model $p_\theta$:
\begin{equation}
    p_\theta(\bx^{k-1}|\bx^k) = \mathcal{N}(\bx^{k-1}; \bmu(\bx^k, k), \gamma_k \eye),
\end{equation}
where the mean $\bmu$ is a model with a neural network, and where it is shown~\cite{ddpm} that one can set the covariance to the identity scaled by a fixed constant $\gamma_k$  depending on $k$. Adopting the standard exposition, we reparametrize the mean $\bmu$ in terms of noise prediction $\beps = (\sqrt{1-\bar{\alpha}_t})^{-1}\xtk-\sqrt{\bar{\alpha}_t} \bmu$. This leads \cite{ho2020denoising}  to the following least squares objective: 
\begin{equation}
    \mathcal{L}(\theta) = \mathbb{E}_{k, \bx^0, \epsilon}\left[\| \beps^k - \beps_\theta(\bx^k, k) \|^2\right],
\end{equation}
where $\bx^k = \sqrt{\bar{\alpha_t}}\bx^0 +  \sqrt{1-\bar{\alpha_t}}\epsilon^k$ and $\beps^k \sim \cN(0,\mathbf{I})$ . One can then sample from this model via Langevin dynamics $\bx^{k-1} \gets \frac{1}{\sqrt{\alpha_k}}(\bx_t^{k} - \frac{1-\alpha_k}{\sqrt{1-\bar{\alpha}_k}}\beps_{\theta}(\bx_t^{k},k) + \sigma_k \mathbf{w})$ ~\cite{ho2020denoising}. 

\paragraph{Guidance of Diffusion Models.}
Guidance~\cite{ho2022classifierfree,dhariwal2021diffusion} allows biasing diffusion generation towards desirable predictions at sampling time. 
We focus on classifier guidance~\cite{dhariwal2021diffusion}: given a classifier $c(y|\bx^k)$ of some desired $y$ (e.g. class or success indicator), one modifies the Langevin sampling ~\cite{ddpm} gradient $\beps_\theta(\bx^k, k)$ to be $\beps_\theta(\bx^k, k)-\sqrt{1-\bar{\alpha}_k}\nabla_{x^k}\log c(y|\bx^k)$. This allows sampling from the joint distribution of $\bx$ and class label $y$ without the need to train a conditional model. Other energies such as a least-squares objective comparing the model output to a desirable ground truth have been explored in applications such as decision making~\cite{dhariwal2021diffusion,janner2022planning}.

\paragraph{Next-Token Prediction Models.}
Next-token prediction models are sequence models that predict the next frame $\bx_{t+1}$ given past frames $\bx_{1:t}$. At training time, one feeds a neural network with $\bx_{1:t}$ and minimizes $||\hat{\bx}-\bx||^2$ for continuous data or a cross-entropy loss for discrete data~\cite{teacher_forcing}. At sampling time, one samples the next frame $\hat{\bx}_{t+1}$ following $p(\bx_{t+1}|\bx_{1:t})$. If one treats $\hat{\bx}_{t+1}$ as $\bx_{t+1}$, one can use the same model to predict $\bx_{t+2}$ and repeat until a full sequence is sampled. Unlike full-sequence diffusion models, next-token models do not accept multi-step guidance, as prior frames must be fully determined to sample future frames. 

\paragraph{Diffusion Sequence Models.}
Diffusion has been widely used in sequence modeling. \cite{li2022diffusionlm} use full-sequence diffusion models to achieve controllable text generation via guidance, such as generating text following specified parts of speech.  \cite{ho2022video} trains full-sequence diffusion models to synthesize short videos and uses a sliding window to roll out longer conditioned on previously generated frames. \cite{janner2022planning} uses full-sequence diffusion models as planners in offline reinforcement learning. This is achieved by training on a dataset of interaction trajectories with the environment and using classifier guidance at sampling time to sample trajectories with high rewards towards a chosen goal. \cite{rasul2021autoregressive} modifies auto-regressive models to denoise the next token conditioned on previous tokens. It trains with teacher forcing~\cite{teacher_forcing} and samples next-token auto-regressively for time series data. Most similar to our work is AR-Diffusion \cite{wu2023ardiffusion}, which trains full-sequence text diffusion with a causal architecture with linearly dependent noise level along the time axis. We provide a detailed comparision between this approach and ours  in \Cref{app:extended_related}.

\section{Method}

\subsection{Noising as partial masking}
Recall that \emph{masking} is the practice of occluding a subset of data, such as patches of an image~\cite{he2022masked} or timesteps in a sequence~\cite{bert, t5}, and training a model to recover unmasked portions. 
Without loss of generality, we can view any collection of tokens, sequential or not, as an ordered set indexed by $t$.
Training next-token prediction with teacher forcing can then be interpreted as masking each token $\bx_t$ at time $t$ and making predictions from the past  $\bx_{1:t-1}$. Restricted to sequences, we refer to all these practices as \emph{masking along the time axis}. We can also view full-sequence forward diffusion, i.e., gradually adding noise to the data $ \bx^0_{1:T} \equiv \bx_{1:T}$, as a form of \emph{partial masking}, which we refer to as \emph{masking along the noise axis}. Indeed, after $K$ steps of noising, $\bx^K_{1:T}$ is (approximately) pure white noise without information about the original data.

We establish a unified view along both axes of masking (see Fig.~\ref{fig:method}).  We denote  $\bx_{1:T}$ for a sequence of tokens, where the subscript indicates the time axis. As above, $\xtk$ denotes $\bx_t$ at noise level $k_t$ under the forward diffusion process~\eqref{eqn:forward_diffusion};  $\bx_t^0 = \bx$ is the unnoised token, and  $\bx^K_t$ is white noise $ \mathcal{N}(0, \mathbf{I})$. Thus, $(\xtk)_{1 \le t \le T}$ denotes a sequence of noisy observations where each token has a \emph{different} noise level $k_t$, which can be seen as the degree of \emph{partial masking} applied to each token through noising.

\subsection{\algons{}: different noise levels for different tokens}
\emph{\algo{}} ({\algshort}) is a framework for training and sampling arbitrary sequence lengths of noisy tokens $(\xtk)_{1 \le t \le T}$, where critically, \emph{the noise level $k_t$ of each token can vary by time step}. 
In this paper, we focus on time series data, and thus instantiate \algo{} with causal architectures (where $\xtk$ depends only on past noisy tokens), which we call \emph{\algoseq{}} ({\algshortseq{}}).
For simplicity, we focus on a minimal implementation with a vanilla Recurrent Neural Network (RNN) ~\cite{gru}. Potential transformer implementation of \algo{} is also possible but we defer its discussion to Appendix~\ref{app:transformer}.

\begin{figure}[t]
  \begin{minipage}[t]{0.5\linewidth}
    \centering
    \footnotesize
    \input{algos/training}
  \end{minipage}
  \hfill
  \begin{minipage}[t]{0.5\linewidth}
    \centering
    \footnotesize
    \input{algos/sampling}
  \end{minipage}
  \vspace{-12pt}
\end{figure}

The RNN with weights $\theta$ maintains latents $\bz_t$ capturing the influence of past tokens, and these evolve via dynamics $\bz_t \sim p_{\theta}(\bz_t |  \bz_{t-1}, \xtk, k_t)$ with a recurrent layer. When an incoming noisy observation $\xtk$ is made, the hidden state is updated in a Markovian fashion $\bz_t \sim p_{\theta}(\bz_t | \bz_{t-1}, \xtk, k_t)$\footnote{We implement $\bz_t = p_{\theta}(\bz_t | \bz_{t-1}, \xtk, k_t)$ to be deterministic, with $\bz_t$ representing a distribution over beliefs rather than a sample from it. This allows training by backpropogating through the latent dynamics in Eq.\eqref{eq:train}.}. When $k_t = 0$, this is the posterior update in Bayes filtering; whereas when $k_t = K$ (and $\bx_t^K$ is pure noise and thus uninformative), this is equivalent to modeling the ``prior distribution'' $p_{\theta}(\bz_t \mid \bz_{t-1})$ in Bayes filtering. Given latent $\bz_t$, an observation model $p_{\theta}(\bx_t^0 | \bz_t)$ predicts $\bx_t$.

\paragraph{Training.}
The dynamics model $p_{\theta}(\bz_t |  \bz_{t-1}, \xtk, k_t)$ and the observation model $p_{\theta}(\bx_t^0 | \bz_t)$ together form a RNN unit. Such unit has the same input-output behavior as a standard conditional diffusion model, using a conditioning variable $\bz_{t-1}$ and a noisy token  $\xtk$ as input to predict the noise-free $\bx_t = \bx_t^0$ and thus, indirectly, the noise $\epsilon^{k_t}$ via affine reparametrization \cite{ddpm}. 
We can thus directly train \algopar{} with the conventional diffusion training objective. We parameterize the aforementioned  unit in terms of noise prediction $\beps_{\theta}(\mathbf{z}_{t-1}, \xtk, k_t)$.
We then find parameters $\theta$ by minimizing the loss 
\begin{equation}\label{eq:train}
\mathop{\mathlarger{\mathbb{E}}}_{\substack{k_t, \bx_t, \beps_t \\ \bz_t\sim p_{\theta}(\bz_t|\bz_{t-1},\xtk,k_t)}} {\textstyle \mathlarger{\sum}_{t=1}^T}
\Big[\| \beps_t - \beps_\theta(\bz_{t-1},\xtk,k_t) \|^2\Big],
\end{equation}\vspace{-.1em}%
where we sample $k_{1:T}$ uniformly from $[K]^T$, $\bx_{1:T}$ from our training data, and $\epsilon_t \sim \cN(0,\sigma^2_{k_t} I)$ in accordance with the forward diffusion process (see Algorithm \ref{alg:diffusion_forcing_training}  for pseudocode). Importantly, the loss \eqref{eq:train} captures essential elements of Bayesian filtering and conditional diffusion. 
In Appendix~\ref{app:snr_derivation}, we further re-derive common techniques in diffusion model training for \algo{}, which proves extremely useful for video prediction experiments. In Appendix~\ref{app:independent_noise}, we discuss the need of sampling $k_{1:T}$ uniformly. Finally, we prove the validity of this objective stated informally in the following Theorem~\ref{theorem:informal} in Appendix~\ref{appendix:theory}.
\vspace{2pt}
\begin{theorem}[Informal] The \algo{} training procedure (\Cref{alg:diffusion_forcing_training}) optimizes a reweighting of an Evidence Lower Bound (ELBO) on the expected log-likelihoods $\ln p_{\bm \theta}((\bx_t^{k_t})_{1 \le t \le T})$, where the expectation is averaged over noise levels $k_{1:T} \sim [K]^T$ and $\bx_t^{k_t} $ noised according to the forward process. Moreover, under appropriate conditions, optimizing \eqref{eq:train} also maximizes a lower bound on the likelihood for \emph{all sequences of noise levels, simultaneously}. 
\label{theorem:informal}
\end{theorem}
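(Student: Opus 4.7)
The plan is to derive a per-timestep ELBO conditional on the recurrent latent $\bz_{t-1}$, reparametrize each term into a noise-prediction MSE via the standard DDPM identity, and then aggregate along the time axis. Since $\bz_t$ is a \emph{deterministic} function $f_\theta(\bz_{t-1},\bx_t^{k_t},k_t)$ (see the footnote in Section~3), the model induces a well-defined joint density $p_\theta\bigl((\bx_t^{0:K})_{t=1}^T\bigr)=\prod_{t=1}^T p_\theta(\bx_t^{0:K}\mid \bz_{t-1})$, with each factor given by the reverse chain $p(\bx_t^K)\prod_{k=1}^K p_\theta(\bx_t^{k-1}\mid \bx_t^k,\bz_{t-1})$. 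Marginalizing over all levels other than $k_t$ at each $t$ defines the density $p_\theta\bigl((\bx_t^{k_t})_{t=1}^T\bigr)$ named in the statement.

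The main step is the per-token DDPM variational argument. For each $t$, taking the forward chain $q(\bx_t^{1:K}\mid \bx_t^0)$ as the variational distribution and conditioning on the realization of $\bz_{t-1}$ gives a decomposition of $-\ln p_\theta(\bx_t^{k_t}\mid \bz_{t-1})$ into a reconstruction term, a $\theta$-free prior-matching term, and KL divergences $D_{\mathrm{KL}}\bigl(q(\bx_t^{k-1}\mid \bx_t^k,\bx_t^0)\,\big\|\,p_\theta(\bx_t^{k-1}\mid \bx_t^k,\bz_{t-1})\bigr)$. The Gaussian KL identity together with the noise-prediction parametrization turns each such KL into $w_k\,\|\beps-\beps_\theta(\bz_{t-1},\bx_t^k,k)\|^2$ with schedule-dependent weights $w_k$, exactly as in \cite{ho2020denoising}. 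Summing over $t$ yields a bound on $\ln p_\theta\bigl((\bx_t^{k_t})_{t=1}^T\bigr)$ that holds pointwise in $k_{1:T}$; taking expectation over $k_{1:T}\sim \mathrm{Unif}([K]^T)$ and over the forward-noised data recovers, up to the per-$k$ constants $w_{k_t}$, the training loss in \eqref{eq:train}. This establishes the first half.

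For the ``all sequences simultaneously'' claim I would leverage that the inequality above already holds pointwise in $k_{1:T}$; the uniform sampling of $k_{1:T}$ plays only the role of an unbiased stochastic estimator of the sum of per-schedule bounds. Under appropriate realizability assumptions on the noise predictor (so that the Bayes-optimal denoiser at every $(t,k)$ pair is achievable within the hypothesis class), a minimizer of the expected loss saturates the bound at each $(t,k)$ simultaneously and hence maximizes every fixed-$k_{1:T}$ ELBO at once.

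The main obstacle will be the careful bookkeeping around the recurrent latent $\bz_{t-1}$, whose distribution under training depends jointly on $\theta$ and on the random past noise levels $k_{1:t-1}$. I would handle this by conditioning the per-token ELBO on a realization of $\bz_{t-1}$ and then integrating the resulting path-wise bound against the law induced by the training procedure; since the forward process $q$ and the prior $p(\bx_t^K)$ are $\theta$-free, the only $\theta$-dependence survives inside $\beps_\theta$, which is precisely what \eqref{eq:train} optimizes. A secondary subtlety is that the target likelihood concerns the \emph{partially noised} tuple $(\bx_t^{k_t})_t$ rather than the clean data, which I would address by applying the DDPM argument to the truncated reverse chain $\bx_t^K \to \cdots \to \bx_t^{k_t}$ and absorbing the unused levels into a $\theta$-independent constant via the $\theta$-freeness of $q$.
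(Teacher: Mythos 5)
Your proposal follows essentially the same route as the paper's proof: factor the joint likelihood into per-timestep conditionals given the (deterministic) latent $\bz_{t-1}$, apply the truncated DDPM variational argument with the forward chain $q(\bx_t^{k_t+1:K}\mid\bx_t^{k_t})$ as the variational distribution (absorbing $\theta$-free terms into constants), convert the Gaussian KLs into weighted noise-prediction MSEs, and obtain the ``all sequences'' claim from the fact that the bound holds per noise schedule and a fully expressive network can saturate every $(t,k)$ term simultaneously. This matches the paper's Claims 1--3, \Cref{cor:elbo}, and \Cref{thm:main_elbo_general}, including your correct identification of the key bookkeeping subtlety that the law of $\bz_{t-1}$ depends on the past noise levels $k_{1:t-1}$.
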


We remark that a special case of `all sequences of noise levels' are those for which either $k_t = 0$ or $k_t = K$; thus, one can mask out \emph{any prior token} and \algshort{} will learn to sample from the correct conditional distribution, modeling the distribution of all possible sub-sequences of the training set.

\paragraph{Sampling.}
\label{sec:zigzag_example}
\algo{} sampling is depicted in \Cref{alg:diffusion_forcing_sampling}  and is defined by prescribing a noise schedule on a 2D $M \times T$ grid $\cK \in [K]^{M\times T}$; columns correspond to time step $t$ and rows indexed by $m$ determine noise-level. $\cK_{m,t}$ represents the desired noise level of the time-step $t$ token for row $m$.  
To generate a whole sequence of length $T$,  initialize the tokens $\bx_{1:T}$ to be white noise, corresponding to noise level $k = K$. We iterate down the grid row-by-row, denoising left-to-right across columns to the noise levels prescribed by $\cK$. By the last row $m = 0$, the tokens are clean, i.e. their noise level is $\cK_{0,t} \equiv 0$. \Cref{app:corner_case} discusses corner cases of this scheme; the  hyperparameters $(\alpha_k,\bar{\alpha}_k,\sigma_k)$ are set to their standard values \cite{ddpm}. The matrix $\cK$ specifies how fast each token gets denoised at every step of sequence diffusion. Since \algo{} is trained to denoise tokens of all sequences of noise levels, $\cK$ can be designed to flexibly achieve different behaviors without re-training the model.

\subsection{New Capabilities in Sequence Generation}
We now explain the new capabilities this flexible sampling paradigm has to offer. 

\begin{figure*}[h]
    \centering
    \includegraphics[width=\linewidth]{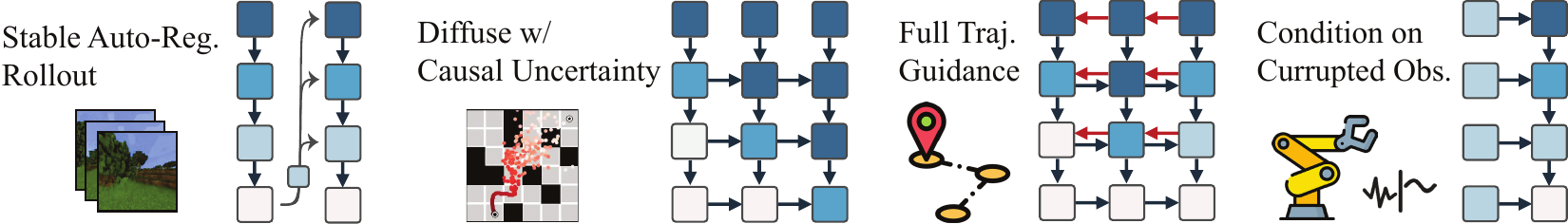}
    \vspace{-10pt}
\end{figure*}

\paragraph{Stabilizing autoregressive generation.}
\label{par:stabilizing_autoreg}
For high-dimensional, continuous sequences such as video, auto-regressive architectures are known to diverge, especially when sampling past the training horizon.
In contrast, \algo{} can stably  roll out long sequences even beyond the training sequence length by updating the latents using the previous latent associated with  slightly ``noisy tokens'' for some small noise level $0 < k \ll K$. Our experiments (Sec.~\ref{exp:video}) illustrates the resulting marked improvements in long-horizon generation capabilities;  App. \ref{app:noising_long_horizons} provides further intuition.

\paragraph{Keeping the future uncertain.} 
\label{par:zigzag}
Beginning from a sequence of white noise tokens $[\bx^K_1,\bx^K_2,\bx^K_3]^\top$, we may denoise the first token fully and the second token partially, yielding $[\bx^0_1,\bx^{K/2}_2,\bx^K_3]^\top$, then $[\bx^0_1,\bx^{0}_2,\bx^{K/2}_3]^\top$, and finally denoising all tokens fully to $[\bx^0_1,\bx^0_2,\bx^0_3]^\top$. 
Interpreting the noise level as uncertainty, this ``zig-zag'' sampling scheme intuitively encodes the immediate future as more certain than the far future. Sec.~\ref{sec:method_decision_making} describes how this leads to more effective sequence guidance.

\paragraph{Long-horizon Guidance.}
In Line 10 of Algorithm~\ref{alg:diffusion_forcing_sampling}, one may add guidance to the partially diffused trajectory $\bx_{1:T}$ as in Sec.~\ref{sec:related_prelim}. Due to the dependency of future tokens on the past, guidance gradients from future tokens can propagate backwards in time. The unique advantage of \algo{} is that, because we can diffuse future tokens without fully diffusing the past, the gradient guides the sampling of \emph{past} tokens, thereby achieving long-horizon guidance while respecting causality. We elaborate on implementation details in Appendix ~\ref{app:reward_guidance}. As we  show in \Cref{sec:exp_decision_making}, planning in this manner significantly outperforms guided full-sequence diffusion models.

\subsection{\algo{} for Flexible Sequential Decision Making}
\label{sec:method_decision_making}
The capabilities offered by \algo{} motivate our novel framework for sequential decision making (SDM), with key applications to  robotics and autonomous agents. 
Consider a Markov Decision Process defined by an environment with dynamics $ p(\bs_{t+1}|\bs_t, \ba_t)$, observation $ p(\bo_{t}|\bs_t)$ and reward $p(\br_t|\bs_t, \ba_t)$. The goal is to train a policy $\pi(\ba_t|\bo_{1:t})$ such that the expected cumulative reward of a trajectory $\Exp[\sum_{t=1}^{T} \br_t]$ is maximized. 
We assign tokens $\bx_t = [\ba_t, \br_t, \bo_{t+1}]$.  A trajectory is a sequence $\bx_{1:T}$, possibly of variable length;  training is conducted as in Algorithm \ref{alg:diffusion_forcing_training}. 
At each step $t$ of execution, past (noise-free) tokens $\bx_{1:t-1}$ are  summarized by a latent $\bz_{t-1}$.  Conditioned on this latent, we sample, via Algorithm \ref{alg:diffusion_forcing_sampling}, a plan $\hat{\bx}_{t:t+H}$, with $\hat{\bx}_{t}=[\hat{\ba}_t, \hat{\br}_t, \hat{\bo}_{t+1}]^\top$ containing predicted actions, rewards and observations. $H$ is a look-ahead window, analogous to future predictions in model predictive control \cite{garcia1989model}. After taking planned action $\hat{\ba}_t$, the environment produces a reward $\br_t$ and next observation $\bo_{t+1}$, yielding  next token $\bx_t=[\hat{\ba}_t, \br_t, \bo_{t+1}]^\top$. The latent is updated according to the posterior $p_\theta(\bz_{t}|\bz_{t-1}, \bx_t, 0)$. Our framework enables functionality as both \emph{policy} and \emph{planner}:

\paragraph{Flexible planning horizon.} \algons{} (a) can be deployed on \emph{tasks of variable horizon}, because each new action is selected sequentially, and  
(b) its lookahead window $H$ can be shortened to lower latency (using \algo{} as a \emph{policy}), or lengthened to perform long-horizon \emph{planning} (via guidance described below), without re-training or modifications of the architecture. Note that (a) is not possible for full-sequence diffusion models like Diffuser \cite{janner2022planning} with full-trajectory generation horizons, whereas diffusion policies \cite{chi2023diffusion} need fixed, small lookahead sizes, precluding (b).

\paragraph{Flexible reward guidance.} As detailed in Appendix~\ref{app:reward_guidance},
\algo{} can plan via guidance using any reward (in place of $\log c$) specified over future steps: this includes dense per-time step rewards on the entire trajectory $\sum_{t=1}^T \br_t$, dense rewards on a future lookahead $\sum_{t'=t}^{t+H}\br_t$, and sparse rewards indicating goal completion $-\|\bo_T-\mathbf{g}\|^2$. Per-time step policies cannot take advantage of this latter, longer horizon guidance.

\paragraph{Monte Carlo Guidance (MCG), future uncertainty.}
\algoseq{} allows us to influence the generation of a token $\mathbf{x}_t^k$ by guidance on the whole distribution of future $\mathbf{x}_{t+1:T}$. Instead of drawing a single trajectory sample to calculate this guidance gradient, we can draw multiple samples of the future and average their guidance gradients. We call this Monte Carlo Guidance. In the spirit of so-called shooting methods like MPPI \cite{ williams2015model}, $\mathbf{x}_t^k$ is then guided by the expected reward over the distribution of all future outcomes instead of one particular outcome.  The effect of MCG is enhanced when combined with sampling schedules that keep the noise level of future tokens high when denoising immediate next tokens (e.g. the zig-zag schedule described in Sec.~\ref{par:zigzag}), accounting for greater uncertainty farther into the future. Appendix~\ref{app:cannot_mcg} further justifies the significance of MCG, and why \algo{}  uniquely takes  advantage of it.

\begin{figure*}[t]
    \centering
    \includegraphics[width=\linewidth]{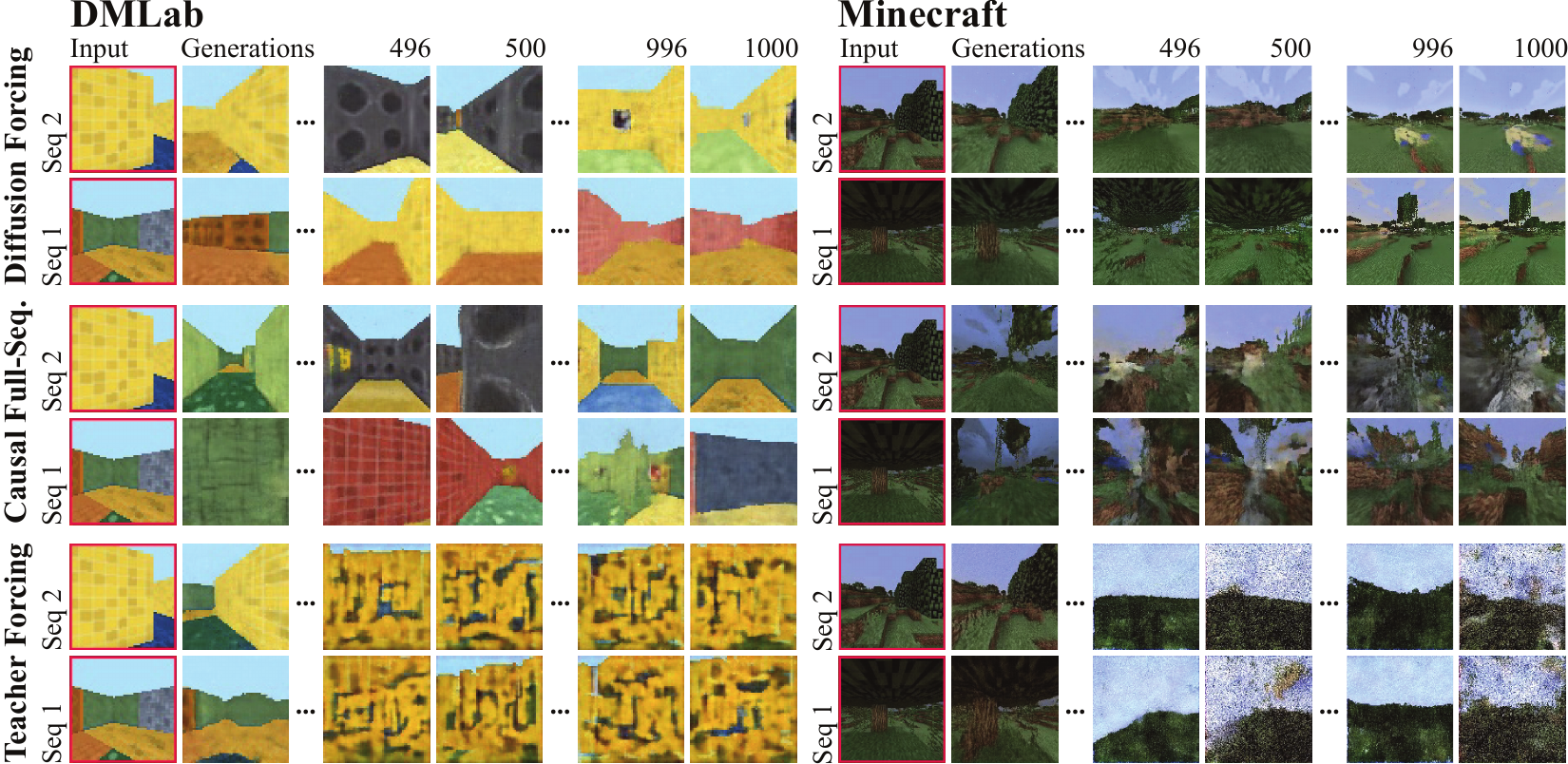}
    \caption{\textbf{Video Generation.} Among tested methods, \algons{} generations are uniquely temporally consistent and do not diverge even when rolling out well past the training horizon. Please see the \href{https://boyuan.space/diffusion-forcing}{\textcolor{blue}{project website}} for video results.}
    \label{fig:video_pred}
    \vspace{-5pt}
\end{figure*}

\section{Experiments}
We extensively evaluate \algons's merits as a generative sequence model across diverse applications in video and time series prediction, planning, and imitation learning. Please find the dataset and reproducibility details in the Appendix, as well as video results on the \href{https://boyuan.space/diffusion-forcing}{\textcolor{blue}{project website}}.

\subsection{Video Prediction: Consistent, Stable Sequence Generation and Infinite Rollout.}
\label{exp:video}
We train a convolutional RNN implementation of \algoseq{} for video generative modeling on videos of Minecraft gameplay~\cite{yan2023temporally} and DMLab navigation~\cite{yan2023temporally}. At sampling time, we perform auto-regressive rollout with stabilization proposed in Sec.~\ref{par:stabilizing_autoreg}.
We consider two baselines, both leveraging the same exact RNN architecture: a next-frame diffusion baseline trained with teacher forcing~\cite{teacher_forcing} as well as a causal full-sequence diffusion model. \cref{fig:video_pred} displays qualitative results of roll-outs generated by \algo{} and baselines starting from unseen frames for both datasets. While \algons{} succeeds at stably rolling out even far beyond its training horizon (e.g. $1000$ frames), teacher forcing and full-sequence diffusion baselines diverge quickly. Further, within the training horizon, we observe that full-sequence diffusion suffers from frame-to-frame discontinuity where video sequences jump dramatically, while \algo{} roll-outs show ego-motion through a consistent 3D environment. This highlights the ability of \algo{} to stabilize rollouts of high-dimensional sequences without compounding errors.

\begin{table*}[t]
\begin{picture}(\linewidth, 0.625\linewidth)
\put(0, 98){\includegraphics[width=\linewidth]{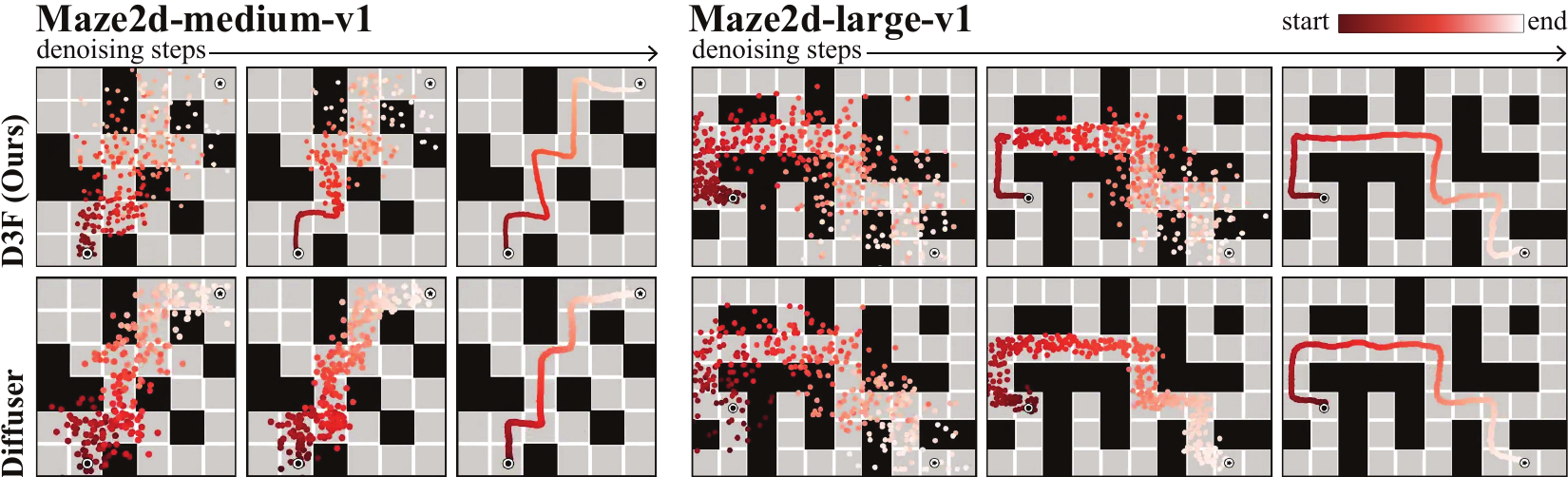} }
\put(0, 43){
\resizebox{\linewidth}{!}{%
\begin{tabular}{@{}llccccccc@{}}
    \FL
    \multicolumn{2}{c}{\textbf{Environment}} & \textbf{MPPI} & \textbf{CQL} & \textbf{IQL} & \textbf{Diffuser*} & \textbf{Diffuser w/ diffused action} & \textbf{Ours wo/ MCG} & \textbf{Ours}\ML
    Maze2D & U-Maze & 33.2 & 5.7 & 47.4 & 113.9 \(\pm\) 3.1 & 6.3 \(\pm\) 2.1& 110.1 \(\pm\) 3.9 & \textbf{116.7 \(\pm\) 2.0} \NN
    Maze2D & Medium & 10.2 & 5.0 & 34.9 & 121.5 \(\pm\) 2.7 & 13.5\(\pm\)2.3 & 136.1 \(\pm\) 10.2 & \textbf{149.4 \(\pm\) 7.5} \NN
    Maze2D & Large & 5.1 & 12.5 & 58.6 & 123.0 \(\pm\) 6.4 & 6.3 \(\pm\)2.1 & 142.8 \(\pm\) 5.6 & \textbf{159.0 \(\pm\) 2.7} \ML
    \multicolumn{2}{c}{\textbf{Single-task Average}} & 16.2 & 7.7 & 47.0 & 119.5 & 8.7 & 129.67 & \textbf{141.7}  \ML[0.11em]
    Multi2D & U-Maze & 41.2 & - & 24.8 & 128.9 \(\pm\) 1.8 & 32.8\(\pm\)1.7 & 107.7 \(\pm\) 4.9 & \textbf{119.1 \(\pm\) 4.0} \NN
    Multi2D & Medium & 15.4 & - & 12.1 & 127.2 \(\pm\) 3.4 & 22.0\(\pm\)2.7 & 145.6 \(\pm\) 6.5 & \textbf{152.3 \(\pm\) 9.9} \NN
    Multi2D & Large & 8.0 & - & 13.9 & 132.1 \(\pm\) 5.8 & 6.9 \(\pm\)1.7 & 129.8 \(\pm\) 1.5 & \textbf{167.1 \(\pm\)2.7}  \ML
    \multicolumn{2}{c}{\textbf{Multi-task Average}} & 21.5 & - & 16.9 & 129.4 & 20.6 &  127.7 & \textbf{146.2} \LL
\end{tabular}
}
}
\end{picture}

\caption{ \textbf{\algons{} for Planning.} (\textbf{top}) During sampling, \algo{} allows each time step to be denoised on different noise schedules, enabling us to account for causal uncertainty during guided planning. \algo{} keeps the far future more uncertain than the near future while Diffuser~\cite{janner2022planning} puts them at the same noise level during sampling. (\textbf{bottom)} Quantitatively, \algo{} achieves the highest average reward across runs. Diffuser fails dramatically when executing the actually generated actions, requiring a hand-crafted PD controller (indicated by the asterisk) and ignoring generated actions.}
\vspace{-10pt}
\label{fig:planning}
\end{table*}

\subsection{Diffusion Planning: MCG, Causal Uncertainty, Flexible Horizon Control.}
\label{sec:exp_decision_making}
Decision-making uniquely benefits from \algo{}'s capabilities. We evaluate our proposed decision-making framework in a standard offline RL benchmark, D4RL~\cite{d4rl}. Specifically, we benchmark \algo{} on a set of 2D maze environments with sparse reward. An agent is tasked with reaching a designated goal position starting from a random starting position. In Appendix~\ref{app:dataset_detail} we provide a detailed description of the environment. The benchmark provides a dataset of \emph{random walks} through mazes (thus stochastic). We train one model per maze. 

We benchmark the proposed decision-making framework~\ref{sec:method_decision_making} with state-of-the-art offline RL methods and the recently introduced Diffuser~\cite{janner2022planning}, a diffusion planning framework. See Fig.~\ref{fig:planning} for qualitative and quantitative results: \algshort{} outperforms Diffuser and all baselines across all $6$ environments. 

\paragraph{Benefit of Monte Carlo Guidance.}
The typical goal for an RL problem is to find actions that maximize the \emph{expected} future rewards, which we achieve through MCG. Full-sequence diffusion models such as Diffuser do not support sampling to maximize expected reward, as we formally derive in \Cref{app:cannot_mcg}. To understand MCG's importance, we ablate it in \Cref{fig:planning}. Removing MCG guidance degrades our performance, though \algons{} remains competitive even then.

\paragraph{Benefit of Modeling Causality.}
Unlike pure generative modeling, sequential decision-making takes actions and receives feedback. Due to compounding uncertainty, the immediate next actions are more important than those in the far future.
Though Diffuser and subsequent models are trained to generate sequences of action-reward-state tuples $[\ba_t, \br_t, \bo_t]$, directly executing the actions will lead to a trajectory that deviates significantly from the generated states. In other words, the generated states and actions are not causally consistent with each other. To address this shortcoming, Diffuser's implementation ignores the generated actions and instead relies on a hand-crafted PD controller to infer actions from generated states. In Table~\ref{fig:planning}, we see that Diffuser's performance drops dramatically when directly executing generated actions. In contrast, \algons's raw action generations are self-consistent,  outperforming even actions selected by combining Diffuser's state predictions with a handcrafted PD controller.

\paragraph{Benefit of Flexible Horizon.}
Many RL tasks have a fixed horizon, requiring the planning horizon to shrink as an agent makes progress in the task. \algo{} accomplishes this by design, while full-sequence models like Diffuser perform poorly even with tweaks, as we explain in Appendix~\ref{app:cond_replacement}.

\subsection{Controllable Sequential Compositional Generation}
We demonstrate that by only modifying the sampling scheme, we can flexibly compose sub-sequences of sequences observed at training time.
We consider a dataset of trajectories on a 2D, square plane, where all trajectories start from one corner and end up in the opposite corner, forming a cross shape. As shown in Fig.~\ref{fig:ability}, when no compositional behavior is desired, one can let \algshort{} keep full memory, replicating the cross-shaped distribution. When one desires compositionality, one can let the model generate shorter plans without memory using MPC, leading to the stitching of the cross's sub-trajectories, forming a V-shaped trajectory. Due to limited space, we defer the result to Appendix~\ref{app:exp_compositionality}.

\begin{figure*}[t]
    \centering
    \includegraphics[width=\linewidth]{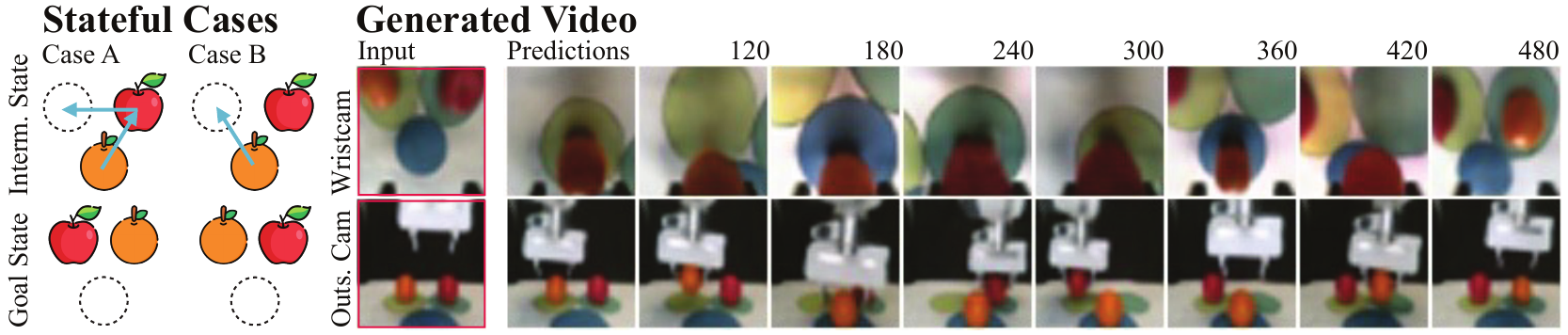}
    \caption{In our real robot task, a robot arm is asked to swap the slots of two fruits using a third slot. Since the fruits are input in random slots at the beginning, one cannot determine the next steps from a single observation without knowledge of the initial placement of the fruits. As illustrated in (a) and (b), the upper observation is the same but the desired outcome illustrated below can vary---the task thus requires remembering the initial configuration. In addition, as shown in (c), the same model that generates actions also synthesizes realistic video from just a single frame.}
    \label{fig:robot}
    \vspace{-10pt}
\end{figure*}

\subsection{Robotics: Long horizon imitation learning and robust visuomotor control}

\label{sec:exp_robot}
Finally, we illustrate that \algo{} (\algshort{}) opens up new opportunities in the visuomotor control of real-world robots. Imitation learning~\cite{chi2023diffusion} is a popular technique in robotic manipulation where one learns an observation-to-action mapping from expert demonstrations. However, the lack of memory often prevents imitation learning from accomplishing long-horizon tasks. \algshort{} not only alleviates this shortcoming but also provides a way to make imitation learning robust.

\paragraph{Imitation Learning with Memory.}
We collect a dataset of videos and actions by teleoperating with a Franka robot. In the chosen task, one needs to swap the position of an apple and an orange, using a third slot. See Fig.~\ref{fig:robot} for an illustration. The initial positions of the fruits are randomized such that there are two possible goal states. As illustrated in Fig.~\ref{fig:robot}, when one fruit is in the third slot, the desired outcome cannot be inferred from the current observation---a policy must remember the initial configuration to determine which fruit to move. In contrast to common behavior cloning methods, \algshort{} naturally incorporates memory in its latent state. We found that \algshort{} achieves $80\%$ success rate while diffusion policy~\cite{chi2023diffusion}, a state-of-the-art imitation learning algorithm without memory, fails. 

\paragraph{Robustness to missing or noisy observations.}
Because it incorporates principles from Bayes filtering, 
\algo{} can perform imitation learning while being robust to noisy or missing observations. We demonstrate this by adding visual distractions and even fully occluding the camera during execution.
\algshort{} allows us to easily indicate these observations as ``noisy'' by using $k>0$, in which case \algshort{} relies heavily on its prior model to predict actions. Consequently, the success rate is only lowered by $4\%$ to $76\%$.
In contrast, a next-frame diffusion model baseline attains a success rate of $48\%$: it must treat perturbed observations as ground truth and suffers out-of-distribution error. 

\paragraph{Potential for pre-training with video.} Finally, in parallel to generating actions, Fig.~\ref{fig:robot} illustrates that \algo{} is capable of generating a video of the robot performing the task given only an initial frame, unifying diffusion policy/imitation learning and video generative modeling and paving the way to pre-training on unlabeled video.

\subsection{Time Series Forecasting: \algo{} is a Good General-purpose Sequence Model}
In Appendix~\ref{app:exp_timeseries}, we show that \algshort{} is competitive with prior diffusion~\cite{rasul2021autoregressive} and transformer-based~\cite{rasul2021multivariate} work on multivariate time series forecasting, following the experimental setup of ~\cite{salinas2019highdimensional}.

\section{Discussion}
\label{sec:discussion}
\paragraph{Limitations.} Our current causal implementation is based on an RNN. Applications to higher-resolution video or more complex distributions likely require large transformer models following instructions in Appendix~\ref{app:transformer}. We do not investigate the scaling behavior of \algo{} to internet-scale datasets and tasks. 
\paragraph{Conclusion.} In this paper, we introduced \algo{}, a new training paradigm where a model is trained to denoise sets of tokens with independent, per-token noise levels.
Applied to time series data, we show how a next-token prediction model trained with \algo{} combines the benefits of both next-token models and full-sequence diffusion models.
We introduced new sampling and guidance schemes that lead to dramatic performance gains when applied to tasks in sequential decision making.
Future work may investigate the application of \algo{} to domains other than time series generative modeling, and scale up \algo{} to larger datasets.

\small\paragraph{Acknowledgements.} This work was supported by the National Science Foundation under Grant No. 2211259, by the Singapore DSTA under DST00OECI20300823 (3D Self-Supervised Learning for Label-Efficient Vision), by the Intelligence Advanced Research Projects Activity (IARPA) via Department of Interior/ Interior Business Center (DOI/IBC) under 140D0423C0075, and by the Amazon Science Hub.

\bibliographystyle{abbrv} 
\bibliography{arxiv}

\newpage
\appendix

\section{Theoretical Justification}
\label{appendix:theory}
\newcommand\numberthis{\addtocounter{equation}{1}\tag{\theequation}}
\newcommand{\cD}{\mathcal{D}}
\newcommand{\unifsim}{\overset{\mathrm{unif}}{\sim}}
\newcommand{\Eforward}{\Exp_{\mathrm{forward}}}
\newcommand{\EforwardD}{\Exp_{\mathrm{forward},\cD}}

\newcommand{\Epz}{\Exp_{p,\mathbf{z}_{1:T}}}
\newcommand{\I}{\mathbf{1}}
\newcommand{\rmd}{\mathrm{d}}
\newcommand{\Dkl}{\mathrm{D}_{\mathbb{KL}}}

\newcommand{\veck}{\mathbf{k}}
\newcommand{\bbK}{\mathbb{K}}
\newcommand{\ptheta}{p_{\bm{\theta}}}

In this section, we provide theoretical justification for the train of \algo. The main contributions can be summarized as follows:
\begin{itemize}
    \item We show that our training methods optimize a reweighting of the Evidence Lower Bound (ELBO) on the average log-likelihood of our data. We first establish this in full generality (\Cref{thm:main_elbo}), and then specialize to the form of Gaussian diffusion (\Cref{cor:elbo}). We show that the resulting terms decouple in such a fashion that, in the limit of a fully expressive latent and model, makes the reweighting terms immaterial. 
    \item We show that the expected likelihood over \emph{any} distribution over sequences of noise levels can be lower bounded by a sum over nonnegative terms which, when reweighted, correspond to the terms optimized in the \algo{} training objective maximizes.  Thus, for a fully expressive network that can drive all terms to their minimal value, \algo{} optimizes a valid surrogate of the likelihood of \emph{all sequences of noise levels simultaneously.}
\end{itemize}

We begin by stating an ELBO for general Markov forward processes $q(\cdot)$, and generative models $\ptheta(\cdot)$, and then specialize to Gaussian diffusion, thereby recovering our loss.  We denote our Markov forward process $q(\cdot)$ as 
\begin{align}
q(\bx^{1:K} \mid \bx^0) = \prod_{k=1}^K q(\bx^k \mid \bx^{k-1}), \label{eq:qforward}
\end{align}
and a parameterized probability model 
\begin{align}
\ptheta(((\bx^{k}_t)_{1 \le k \le K},\bz_t)_{t \ge 1})
\end{align}
We assume that $\ptheta$ satisfies the \emph{Markov property} that 
\begin{align}
&\ptheta(\bz_t, \bx_t^{k_t}\mid \bz_{1:t-1},(\bx_{s}^{k_s})_{1 \le s < t}) = \ptheta(\bz_t, \bx^{k_t}\mid \bz_{t-1})
\end{align} 
that is, the latent codes $\bz_{t-1}$ is a sufficient statistic for $\bx^{k_t}$ given the history. We say that $\ptheta$ has \emph{deterministic latents} if $\ptheta(\bz_t\mid \bz_{1:t-1},(\bx_{s}^{k_s})_{1 \le s < t},\bx_t^{k_t})$ is a Dirac delta. 
\begin{remark} In order for $\ptheta$ to have deterministic latents and correspond to a valid probability distribution, we need to view the latents $\bz_t$ not as individual variables, but as a collection of variables $\bz_t(k_{1:t})$  indexed by $t \in [T]$ and the \emph{history} of noise levels $k_{1:t} \in \{0,1,\dots,K\}^t$. In this case, simply setting $\bz_t(k_{1:t}) = (k_{1:t},(\bx_{s}^{k_s})_{1 \le s \le t}$ tautologically produces deterministic latents. The reason for indexing $\bz_t(k_{1:t})$ with $k_{1:t}$ then arises because, otherwise, $\ptheta(\bz_t \mid ((\bx_{s}^{k_s})_{1 \le s \le t},(\bx_{s}^{k_s'})_{1 \le s \le t})$ would be ill-defined unless $k_s = k_s'$ for all $1 \le s \le t$, and thus, $\ptheta$ would not correspond to a joint probability measure. The exposition and theorem that follows allow $\bz_t(k_{1:t})$ to be indexed on past noise levels $k_{1:t}$ but suppresses dependence on $k_{1:t}$ to avoid notational confusion.
\end{remark}

\subsection{Main Results}
We can now state our main theorem, which provides an evidence lower bound (ELBO) on the expected log-likelihood of partially-noised sequences $(\bx_{t}^{k_t})_{1 \le t \le T}$, under uniformly sampled levels $k_t$ and $\bx_t^{k_t}$ obtained by noising according to $q(\cdot)$ as in \eqref{eq:qforward}. Notice that this formulation does not require an explicit for of $q(\cdot)$ or $\ptheta$, but we will specialize to Gaussian diffusion in the following section. 
\newcommand{\mutheta}{\mu_{\bm{\theta}}}
\begin{theorem}\label{thm:main_elbo}  Fix $\bx_{1:T}^{0}$. Define the expectation over the forward process with random noise level $k_{1:T}$ as 
\begin{align}
\Eforward[\cdot] := \Exp_{k_{1},\dots,k_T \unifsim [K]}\Exp_{\bx_{s}^{k_s}\sim q(\bx_{s}^{k_s} \mid \bx_s^0),1 \le s \le T}[\cdot],
\end{align}
and the expectation over the latents under $\ptheta(\cdot)$ conditioned on $k_{1:T},(\bx_{s}^{k_t})_{1 \le t \le T}$ as 
\begin{align}
\Epz[\cdot] := \Exp_{\bz_{s} \sim p (\bz_{s} \mid \bz_{s-1},\bx_s^{k_s}), s\le T}\left[\cdot \mid k_{1:T}, (\bx_t^{k_t})_{1 \le t \le T}\right]
\end{align}
 Then, as long as $\ptheta$ satisfies the Markov property, 
\begin{align*}
&\Eforward[\ln  \ptheta((\bx_{t}^{k_t})_{1 \le t \le T})] \ge C(\bx_{1:T}^0)  \\
&+ \Eforward\Epz \left[\sum_{t=1}^T \left(\frac{1}{K+1}\ln \ptheta(\bx_t^{0} \mid \bx_{t}^{1},  \bz_{t-1}) +  \sum_{j=2}^{K}\frac{j}{K+1}\Dkl\left(q(\bx_{t}^{j-1} \mid \bx_t^{j},\bx_t^0) \parallel \ptheta(\bx_{t}^{j} \mid \bx_{t}^{j-1}, \bz_{t-1})\right)\right)\right], 
\end{align*}
 where $C(\bx_{1:T}^0)$ is a constant depending only on $\bx_{1:T}^0$ (the unnoised data). Moreover, if the latents are deterministic (i.e. $\ptheta(\bz_t \mid \bz_{t-1},\bx_t^{k_t})$ is a Dirac distribution),  then the inequality holds with inequality if and only if $q(\bx_{t}^{k_t+1:T} \mid \bx_t^{k_t}) \equiv \ptheta(\bx_{t}^{k_t+1:T} \mid \bx_t^{k_t},\bz_{t-1})$, i.e. the variational approximation is exact. 
\end{theorem}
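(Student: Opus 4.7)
I would prove the bound in two successive applications of Jensen's inequality, followed by an averaging step over $k_{1:T}$. First I integrate out the RNN latents $\bz_{1:T}$ using the causal filtering proposal $r(\bz_{1:T})=\prod_t \ptheta(\bz_t\mid \bz_{t-1},\bx_t^{k_t})$; this is precisely the distribution defining $\Epz$ in the theorem statement. Then, for each $t$ separately, I bound the remaining per-time-step term $\ln \ptheta(\bx_t^{k_t}\mid \bz_{t-1})$ by a standard diffusion ELBO using the proposal $q(\bx_t^{k_t+1:K}\mid \bx_t^{k_t}) = q(\bx_t^{k_t+1:K}\mid \bx_t^{k_t},\bx_t^0)$ (the two are equal by the Markov property of the forward process). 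Finally I average over $k_{1:T}\unifsim [K]^T$ to generate the claimed coefficients $j/(K+1)$.

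\textbf{Latent step.} By the Markov assumption on $\ptheta$, the joint factors as $\ptheta((\bx_t^{k_t}),\bz_{1:T})=\prod_t \ptheta(\bx_t^{k_t}\mid \bz_{t-1})\,\ptheta(\bz_t\mid \bz_{t-1},\bx_t^{k_t})$, so under the filtering proposal the latent-transition factors cancel between the joint and the proposal and Jensen collapses to $\ln \ptheta((\bx_t^{k_t}))\ge \Epz\!\left[\sum_{t=1}^T \ln \ptheta(\bx_t^{k_t}\mid \bz_{t-1})\right]$. When $\ptheta$ has deterministic latents the filtering proposal coincides with the true smoothing posterior $\ptheta(\bz_{1:T}\mid (\bx_t^{k_t}))$ and this inequality becomes an equality; this is what allows the eventual iff to depend only on diffusion-side matching.

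\textbf{Per-step diffusion ELBO.} For each $t$ I expand $\ptheta(\bx_t^{k_t:K}\mid \bz_{t-1}) = \ptheta(\bx_t^K\mid \bz_{t-1})\prod_{j=k_t+1}^K \ptheta(\bx_t^{j-1}\mid \bx_t^j,\bz_{t-1})$, apply Jensen with proposal $\prod_{j=k_t+1}^K q(\bx_t^j\mid \bx_t^{j-1})$, and regroup using the Bayes identity $q(\bx_t^j\mid \bx_t^{j-1}) = q(\bx_t^{j-1}\mid \bx_t^j,\bx_t^0)\,q(\bx_t^j\mid \bx_t^0)/q(\bx_t^{j-1}\mid \bx_t^0)$ for $j\ge k_t+2$. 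The marginals $q(\bx_t^j\mid \bx_t^0)$ telescope, leaving a reconstruction term $\mathbb{E}_q[\ln \ptheta(\bx_t^{k_t}\mid \bx_t^{k_t+1},\bz_{t-1})]$, a sum of reverse KLs $-\sum_{j=k_t+2}^K \mathbb{E}_q\Dkl(q(\bx_t^{j-1}\mid \bx_t^j,\bx_t^0)\,\|\,\ptheta(\bx_t^{j-1}\mid \bx_t^j,\bz_{t-1}))$, and a $\theta$-independent remainder $R_t(\bx_t^0)$ (the prior divergence against $\cN(0,\mathbf{I})$ plus entropy-like forward terms, all depending only on $\bx_t^0$ under the standard prior). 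Equality at this step holds iff $q(\bx_t^{k_t+1:K}\mid \bx_t^{k_t})\equiv \ptheta(\bx_t^{k_t+1:K}\mid \bx_t^{k_t},\bz_{t-1})$, which combined with the latent-step equality yields the stated iff.

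\textbf{Averaging, coefficient counting, and main obstacle.} Under $k_t\unifsim\{0,\dots,K\}$, the level-$j$ KL already appears in the per-$k_t$ bound for $k_t\in\{0,\dots,j-2\}$, contributing weight $(j-1)/(K+1)$. For each $k_t\ge 1$ I then rewrite the reconstruction as $\mathbb{E}_q[\ln \ptheta(\bx_t^{k_t}\mid \bx_t^{k_t+1},\bz_{t-1})] = -\mathbb{E}_q\Dkl(q(\bx_t^{k_t}\mid \bx_t^{k_t+1},\bx_t^0)\,\|\,\ptheta(\bx_t^{k_t}\mid \bx_t^{k_t+1},\bz_{t-1})) - H_{q,t,k_t}$, valid precisely because the outer $\Eforward$ randomizes $\bx_t^{k_t}$ via $q(\bx_t^{k_t}\mid \bx_t^0)$, with $H_{q,t,k_t}$ a $\theta$-free entropy. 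This contributes an extra $1/(K+1)$ to the weight on the level-$j$ KL at $j=k_t+1$, yielding the stated $(j-1+1)/(K+1) = j/(K+1)$ for $j=2,\dots,K$. The $k_t=0$ reconstruction cannot be converted (since $\bx_t^0$ is the fixed datum) and survives with weight $1/(K+1)$, matching the statement. All remaining $\theta$-independent pieces are absorbed into $C(\bx_{1:T}^0)$. For the ``all sequences simultaneously'' second claim, each KL summand is individually nonneg, so driving the training loss \eqref{eq:train} to zero tightens every per-sequence ELBO at once. The main obstacle is the combinatorial bookkeeping: invoking the reconstruction-to-KL conversion only for $k_t\ge 1$, correctly folding every $\theta$-free piece into $C$, and checking boundary indices $k_t=K$ (which contributes only a $\theta$-independent prior term) and $j\in\{2,K\}$ consistently with the general count.
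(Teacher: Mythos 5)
Your proposal is correct and follows essentially the same route as the paper's proof: the same latent importance-sampling/Jensen step (tight for deterministic latents), the same per-step ELBO against the forward proposal $q(\bx_t^{k_t+1:K}\mid\bx_t^{k_t})$ with the time-reversal identity and telescoping, the same conversion of the $k_t\ge 1$ reconstruction term into a KL, and the same uniform averaging producing the $j/(K+1)$ weights (your direct count $k_t\in\{0,\dots,j-1\}$ matches the paper's $\Exp[\I\{j\ge k_t\}]$ computation after its reindexing). Your version even carries the correct negative sign on the KL terms, where the paper's statement and proof write them with a $+$ sign.
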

The proof of the above theorem is given in \Cref{sec:proof:thm:main_elbo}. Remarkably, it involves \emph{only two} inequalities! The first holds with equality under deterministic latents and the second holds if and only if variational approximation is exact: $q(\bx_{t}^{k_t+1:T} \mid \bx_t^{k_t}) \equiv \ptheta(\bx_{t}^{k_t+1:T} \mid \bx_t^{k_t},\bz_{t-1})$. This tightness of the ELBO suggests that the expression in \Cref{thm:main_elbo} is a relatively strong surrogate objective for optimizing the likelihoods. 

\subsubsection{Specializing to Gaussian diffusion}
We now special \Cref{thm:main_elbo} to Gaussian diffusion. For now, we focus on the ``$\bx$-prediction'' formulation of diffusion, which is the one used in our implementation. The ``$\beps$-prediction'' formalism, used throughout the main body of the text, can be derived similarly (see Section 2 of \cite{chan2024tutorial} for a clean exposition). The following theorem follows directly by apply standard likelihood and KL-divergence computations for the DDPM \cite{ho2020denoising,chan2024tutorial} to \Cref{thm:main_elbo}.  
\newcommand{\xthet}{\hat{\bx}_{\bm\theta}}
\begin{corollary}\label{cor:elbo} Let 
\begin{align}
q(\bx^{k+1} \mid \bx_t^k) = \mathcal{N}(\bx^k; \sqrt{1-\beta_k}\bx^{k-1}, \beta_k \mathbf{I}),
\end{align}
and define $\alpha_k = (1-\beta_k)$, $\bar{\alpha}_k = \prod_{j=1}^k \alpha_j$.  Suppose that we parameterize $\ptheta(\bx_{t}^{j} \mid \bx_{t}^{j+1},\bz_{t-1}) = \cN(\mutheta(\bx_{t}^{j+1},\bz_{t-1},j),\sigma_j^2)$, where further, 
\begin{align*}
\mutheta(\bx_{t}^{j},\bz_{t-1},j) = \frac{(1 - \bar{\alpha}_{j-1})\sqrt{{\alpha}_j}}{1-\bar{\alpha}_j} \bx_t^j +  \frac{(1 - {\alpha}_{j})\sqrt{\bar{\alpha}_{j-1}}}{1-\bar{\alpha}_j}\xthet(\bx_{t}^{j},\bz_{t-1},j), \quad \sigma_j^2 := \frac{(1 - \alpha_{j})(1-\sqrt{\bar{\alpha}_{j-1}})}{1-\bar{\alpha}_j}.
\end{align*}
 Then, as long as $\ptheta$ satisfies the Markov property, we obtained
\begin{align*}
\Eforward[\ln  \ptheta((\bx_{t}^{k_t})_{1 \le t \le T})] + C(\bx_{1:T}^0)  &\ge \Eforward\Epz \left[\sum_{t=1}^T  \frac{j}{K+1}\sum_{j=1}^{K}c_j \| \hat{\bx}^0_{\bm{\theta}}(\bx_{t}^{j},\bz_{t-1},j) - \bx_t^0\|^2 \right]\\
&= \Eforward\Epz \left[\sum_{t=1}^T  \I\{k_t \ge 1\}\cdot k_tc_{k_t} \| \hat{\bx}^0_{\bm{\theta}}(\bx_{t}^{k_t},\bz_{t-1},{k_t}) - \bx_t^0\|^2 \right],
\end{align*}
where above, we define $c_j = \frac{(1 - \alpha_{j})^2\bar{\alpha}_{j-1}}{2\sigma^2(1 - \bar\alpha_{j})^2}$.
\end{corollary}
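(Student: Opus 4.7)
The plan is to start from the abstract ELBO of \Cref{thm:main_elbo} and specialize each summand to the Gaussian forward process and the parameterization of $\mutheta$ stated here. The steps are entirely standard DDPM calculations (cf.\ \cite{ho2020denoising}), and the only nontrivial bookkeeping task is to ensure that the weight $j/(K+1)$ emerging from \Cref{thm:main_elbo} is preserved through the reduction.

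First I would write down the $q$-posterior in closed form. A routine Gaussian conjugacy computation shows
\begin{align*}
q(\bx_t^{j-1}\mid \bx_t^j,\bx_t^0) = \cN\!\left(\mu_q(\bx_t^j,\bx_t^0,j),\, \sigma_j^2 \eye\right),\quad \mu_q(\bx_t^j,\bx_t^0,j) = \frac{(1-\bar{\alpha}_{j-1})\sqrt{\alpha_j}}{1-\bar{\alpha}_j}\bx_t^j + \frac{(1-\alpha_j)\sqrt{\bar{\alpha}_{j-1}}}{1-\bar{\alpha}_j}\bx_t^0,
\end{align*}
which has exactly the same affine form as the stated $\mutheta$ after swapping $\bx_t^0$ for $\xthet(\bx_t^j,\bz_{t-1},j)$. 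I would read the KL term in \Cref{thm:main_elbo} as $\Dkl(q(\bx_t^{j-1}\mid \bx_t^j,\bx_t^0)\,\|\,\ptheta(\bx_t^{j-1}\mid \bx_t^j,\bz_{t-1}))$, so that both arguments are densities on $\bx_t^{j-1}$ with the common covariance $\sigma_j^2 \eye$.

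Next, since the two Gaussians share that covariance, the KL collapses to $\tfrac{1}{2\sigma_j^2}\|\mu_q-\mutheta\|^2$. The $\bx_t^j$--contributions to the two means cancel exactly, and the remaining difference is $(1-\alpha_j)\sqrt{\bar{\alpha}_{j-1}}/(1-\bar{\alpha}_j)$ times $(\bx_t^0-\xthet)$; squaring and combining with $1/(2\sigma_j^2)$ gives precisely $c_j$, so each KL summand equals $c_j\|\xthet(\bx_t^j,\bz_{t-1},j)-\bx_t^0\|^2$. For the $j=1$ log-likelihood term, the parameterization specializes to $\mutheta(\bx_t^1,\bz_{t-1},1)=\xthet$ (the $\bx_t^1$ coefficient vanishes because $\bar{\alpha}_0=1$), so up to constants depending only on $\bx_{1:T}^0$ and the fixed schedule (all absorbed into $C(\bx_{1:T}^0)$) we recover $c_1\|\xthet-\bx_t^0\|^2$ with the matching weight $1/(K+1)$, which equals $j/(K+1)$ at $j=1$. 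Summing over $j=1,\dots,K$ inside the outer $\sum_t$ yields the first displayed line of the corollary.

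Finally, the second display is pure rewriting. Since $\Eforward$ draws $k_t$ uniformly from $\{0,1,\dots,K\}$, the identity $\frac{1}{K+1}\sum_{j=1}^K g(j) = \Exp_{k_t}\!\bigl[\I\{k_t\ge 1\}\, g(k_t)\bigr]$ applied with $g(j)=j\,c_j\,\|\xthet(\bx_t^j,\bz_{t-1},j)-\bx_t^0\|^2$ converts the inner sum to the stated indicator form. I expect the main obstacle to be purely bookkeeping: the $j=1$ boundary reconciliation (where the $q$-posterior variance formally degenerates) and the consistent tracking of the factor $j$ coming out of the telescoping structure of \Cref{thm:main_elbo}; once these are handled, the rest is routine.
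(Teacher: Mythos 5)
Your proposal is correct and follows essentially the same route as the paper: the paper's proof of this corollary is a two-line appeal to the standard ``$\bx$-prediction'' computations for DDPMs (citing a tutorial) plus the sum-to-expectation rewriting, and your write-up simply carries out those standard Gaussian-posterior and equal-covariance KL calculations explicitly, including the $j=1$ boundary case and the conversion of $\frac{1}{K+1}\sum_{j=1}^{K}$ into $\Exp_{k_t}\left[\I\{k_t \ge 1\}(\cdot)\right]$. One cosmetic caveat: with $\sigma_j^2$ as literally defined in the corollary, the covariance of $q(\bx_t^{j-1}\mid \bx_t^{j},\bx_t^0)$ does not exactly match that of $\ptheta$, so the KL contributes additional $j$-dependent constants beyond $\frac{1}{2\sigma_j^2}\|\mu_q-\mutheta\|^2$; these are absorbed into $C(\bx_{1:T}^0)$ and do not affect the conclusion.
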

\begin{proof} The first inequality follows from the standard computations for the ``$\bx$-prediction'' formulation of Diffusion (see Section 2.7 of  \cite{chan2024tutorial} and references therein). The second follows by replacing the sum over $j$ with an expectation over $k_t \unifsim \{0,1,\dots,K\}$. 
\end{proof}
We make a couple of remarks:
\begin{itemize}
    \item As noted above, \Cref{cor:elbo} can also be stated for $\beps$-prediction, or the so-called ``$\mathbf{v}$-prediction'' formalism, as all are affinely related. 
    \item Define an idealized latent $\tilde{\bz}_{t-1}$ consisting of all past tokens $(\bx_t^{k_t})$ as well as of their noise levels $k_t$. This is a sufficient statistic for $\bz_{t-1}$, and thus we can always view $\hat{\bx}^0_{\bm{\theta}}(\bx_{t}^{k_t},\bz_{t-1},{k_t}) = \hat{\bx}^0_{\bm{\theta}}(\bx_{t}^{k_t},\bar\bz_{t-1},{k_t}) $, where $\bz_{t-1}$ is just compressing $\bar \bz_{t-1}$.  When applying the expectation of $\bx_{1:T} \sim q$ to both sides of the bound in \Cref{cor:elbo}, and taking an infimum over possible function approximator $\hat{\bx}^0_{\theta}$, we obtain
    \begin{align*}
        \inf_{p_\theta} \Exp_{q}\Eforward \Epz \| \hat{\bx}^0_{\bm{\theta}}(\bx_{t}^{k_t},\bz_{t-1},{k_t}) - \bx_t^0\|^2 &= \inf_{p_\theta} \Exp_{q}\Eforward \Epz \| \hat{\bx}^0_{\bm{\theta}}(\bx_{t}^{k_t},\bar{\bz}_{t-1}) - \bx_t^0\|^2 \\
        &= \mathbf{Var}_{q}[\bx_t^0 \mid (\bx_s^{k_s})_{1 \le s \le t}, k_1,\dots,k_t ]. 
    \end{align*}
    This leads to a striking finding: with expressive enough latents and $p_{\theta}$, we can view the maximization of each term in \Cref{cor:elbo} separately across time steps. The absence of this coupling means that the weighting terms are immaterial to the optimization, and thus can be ignored. 
    \item Given the above remarks, we can optimize the ELBO by taking gradients through the objective specified by \Cref{cor:elbo}, and are free to drop any weighting terms (or rescale them) as desired. Backpropagation through $\Epz$ is straightforward due to deterministic latents. This justifies the correctness of our training objective  \eqref{eq:train} and protocol \Cref{alg:diffusion_forcing_training}.
\end{itemize}

\subsubsection{Capturing all subsequences}
\Cref{thm:main_elbo} stipulates that, up to reweighting, the \algo{} objective optimizes a valid ELBO on the expected log-likelihoods over uniformly sampled noise levels.  The following theorem can be obtained by a straightforward modification of the proof of \Cref{thm:main_elbo} generalizes this to arbitrary (possibly temporally correlated) sequences of noise.

\begin{theorem}\label{thm:main_elbo_general} Let $\cD$ be an arbitrary distribution over $[K]^T$, and define $P_t(j \mid k_{1:t-1}) := \Pr_{\cD}[k_t = j \mid k_{1:t-1}]$.
Fix $\bx_{1:T}^{0}$. Define the expectation over the forward process with random noise level $k_{1:T}$ as 
\begin{align}
\EforwardD[\cdot] := \Exp_{k_{1},\dots,k_T \sim \cD}\Exp_{\bx_{s}^{k_s}\sim q(\bx_{s}^{k_s} \mid \bx_s^0),1 \le s \le T}[\cdot],
\end{align}
and the expectation over the latent under $\ptheta(\cdot)$ conditioned on $k_{1:T},(\bx_{s}^{k_t})_{1 \le t \le T}$ as 
\begin{align}
\Epz[\cdot] := \Exp_{\bz_{s} \sim p (\bz_{s} \mid \bz_{s-1},\bx_s^{k_s}), s\le T}\left[\cdot \mid k_{1:T}, (\bx_t^{k_t})_{1 \le t \le T}\right]
\end{align}
 Then, as long as $\ptheta$ satisfies the Markov property, 
\begin{align*}
&\EforwardD[\ln  \ptheta((\bx_{t}^{k_t})_{1 \le t \le T})] \ge C(\bx_{1:T}^0) +  \EforwardD\Epz \left[\sum_{t=1}^T \Xi_t\right], \text{where } \\
&\Xi_t :=  \left(P_t(1 \mid k_{1:t-1})\ln \ptheta(\bx_t^{0} \mid \bx_{t}^{1},  \bz_{t-1}) +  \sum_{j=2}^{K}j P_t(j \mid k_{1:t-1}) \Dkl\left(q(\bx_{t}^{j-1} \mid \bx_t^{j},\bx_t^0) \parallel \ptheta(\bx_{t}^{j} \mid \bx_{t}^{j-1}, \bz_{t-1})\right)\right), 
\end{align*}
 where $C(\bx_{1:T}^0)$ is a constant depending only on $\bx_{1:T}^0$ (the noise-free data), and where the inequality is an \emph{equality} under the conditions that (a) $\ptheta(\bz_t \mid \bz_{t-1},\bx_t^{k_t})$ is a Dirac distribution (deterministic latents), and (b) $q(\bx_{t}^{k_t+1:T} \mid \bx_t^{k_t}) \equiv \ptheta(\bx_{t}^{k_t+1:T} \mid \bx_t^{k_t},\bz_{t-1})$, i.e. the variational approximation is sharp. 

 In particular, in the Gaussian case of \Cref{cor:elbo}, we have
     \begin{align*}
\EforwardD[\ln  \ptheta((\bx_{t}^{k_t})_{1 \le t \le T})] + C(\bx_{1:T}^0)  
&\ge \EforwardD\Epz \left[\sum_{t=1}^T  \I\{k_t \ge 1\} k_t c_{k_t} \| \hat{\bx}^0_{\bm{\theta}}(\bx_{t}^{k_t},\bz_{t-1},{k_t}) - \bx_t^0\|^2 \right],
\end{align*}

\end{theorem}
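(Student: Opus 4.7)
The plan is to reduce this theorem to a templated version of the argument used for \Cref{thm:main_elbo}. I would first observe that the proof of \Cref{thm:main_elbo} naturally splits into two conceptually distinct ingredients: (i) for each \emph{fixed} sequence of noise levels $(k_1,\ldots,k_T)$, an application of the Markov property of $\ptheta$ followed by a variational inequality over the latents $\bz_{1:T}$ yields
\[
\ln \ptheta\!\left((\bx_t^{k_t})_{1 \le t \le T}\right) \ge \Epz\!\left[\sum_{t=1}^T \ln \ptheta(\bx_t^{k_t} \mid \bz_{t-1})\right],
\]
which is tight when the latents are deterministic; and (ii) for each $t$ and each fixed value of $k_t$, an ELBO-style telescoping through the Markov forward chain $q(\bx_t^k \mid \bx_t^{k-1})$ lower-bounds $\ln \ptheta(\bx_t^{k_t} \mid \bz_{t-1})$ in terms of a reconstruction log-likelihood at level $1$ and the KL divergences $\Dkl(q(\bx_t^{j-1} \mid \bx_t^j,\bx_t^0) \parallel \ptheta(\bx_t^j \mid \bx_t^{j-1},\bz_{t-1}))$ for $j = 2,\dots,K$. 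Crucially, both ingredients depend only on the \emph{value} of $(k_1,\ldots,k_T)$, never on its distribution, so they carry over verbatim from the uniform case, as do the stated tightness conditions.

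The only substantive change is the final averaging step over $k_{1:T}$. In \Cref{thm:main_elbo}, averaging under $k_{1:T} \unifsim [K]^T$ and swapping the sum over levels $j$ with the expectation over $k_t$ produces the coefficient $j/(K+1)$ on the KL term at level $j$ (and $1/(K+1)$ on the $j=1$ reconstruction term), which can be read as $j$ times the uniform probability mass on level $j$. To extend this to arbitrary $\cD$, I would factor $\Pr_{\cD}[k_{1:T}]$ via the chain rule as $\prod_t P_t(k_t \mid k_{1:t-1})$, tower the expectation as $\Exp_{k_{1:t-1}}\!\Exp_{k_t \mid k_{1:t-1}}$, and redo the same sum swap separately per time step. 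The coefficient $j/(K+1)$ is then simply replaced by $j \cdot P_t(j \mid k_{1:t-1})$, and $1/(K+1)$ by $P_t(1 \mid k_{1:t-1})$, recovering exactly the summand $\Xi_t$ in the statement.

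The Gaussian specialization at the end follows mechanically from \Cref{cor:elbo}: the per-$t$, per-$j$ KL between the two Gaussians collapses to $c_j \|\hat{\bx}^0_{\bm\theta}(\bx_t^j,\bz_{t-1},j) - \bx_t^0\|^2$ by the standard DDPM computation, and the double sum $\sum_t \sum_{j=1}^{K} j\, P_t(j \mid k_{1:t-1})(\cdot)$ folds back into $\sum_t \Exp_{k_t \mid k_{1:t-1}}[k_t\,\I\{k_t \ge 1\}(\cdot)]$ under $\EforwardD$, giving the claimed bound. The main obstacle I anticipate is purely bookkeeping: one must verify that the per-time-step telescoping in (ii) genuinely commutes with the conditional expectation over $k_t$ given $k_{1:t-1}$ even when $\cD$ encodes nontrivial temporal correlations, and that the tightness conditions (deterministic latents together with $q(\bx_t^{k_t+1:T} \mid \bx_t^{k_t}) \equiv \ptheta(\bx_t^{k_t+1:T} \mid \bx_t^{k_t},\bz_{t-1})$) remain unaffected---which they do, since these conditions hold pointwise in each realization of $k_{1:T}$ and are therefore preserved under any outer averaging over $k_{1:T}$.
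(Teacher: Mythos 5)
Your high-level reduction is exactly what the paper intends: the paper gives no separate proof of this theorem and merely asserts it follows by ``a straightforward modification'' of the proof of \Cref{thm:main_elbo}, and you correctly observe that the latent-expansion step, the per-level telescoping, and the tightness conditions are all pointwise in the realization of $k_{1:T}$ and therefore carry over verbatim; the only step that changes is the final averaging of $k_t$ conditionally on $k_{1:t-1}$. The gap is in precisely that averaging step --- the ``bookkeeping'' you deferred. In the proof of \Cref{thm:main_elbo} the KL term at (pre-reindexing) level $j$ enters with the indicator $\I\{j \ge k_t\}$ and the reconstruction term with $\I\{k_t = 0\}$, so averaging over $k_t$ produces conditional \emph{cumulative} probabilities:
\begin{align*}
\Exp_{k_t \sim P_t(\cdot \mid k_{1:t-1})}\big[\I\{j \ge k_t\}\big] = \sum_{i=0}^{j} P_t(i \mid k_{1:t-1}), \qquad \Exp_{k_t \sim P_t(\cdot \mid k_{1:t-1})}\big[\I\{k_t = 0\}\big] = P_t(0 \mid k_{1:t-1}).
\end{align*}
After the reindexing $j \gets j+1$, the KL coefficient is therefore $\Pr_{\cD}[k_t \le j-1 \mid k_{1:t-1}]$, not $j\,P_t(j \mid k_{1:t-1})$, and the reconstruction coefficient is $P_t(0 \mid k_{1:t-1})$, not $P_t(1 \mid k_{1:t-1})$. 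Your reading of the uniform coefficient $j/(K+1)$ as ``$j$ times the probability mass at level $j$'' is an artifact of uniformity: for the uniform law the CDF at $j-1$ happens to equal $j$ times the pmf at $j$, but for general $\cD$ these quantities differ.

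Concretely, if $\cD$ places $k_t$ deterministically at some $m \ge 1$, the honest output of the sum swap is a bound containing the KL terms at every level $j \ge m+1$, each with coefficient $1$, and no reconstruction term; the coefficients $j\,P_t(j \mid k_{1:t-1})$ that you (and the theorem statement) assert instead retain only the single level $j = m$, weighted by $m$, plus a spurious reconstruction term when $m = 1$. So the derivation you sketch does not ``recover exactly the summand $\Xi_t$'': it recovers a CDF-weighted variant of it. To be fair, this discrepancy is inherited from the paper --- the straightforward modification genuinely proves the CDF-weighted bound, and the stated $\Xi_t$ appears to be a mis-generalization of the uniform case in which the two weightings coincide --- but a correct writeup must either prove the CDF-weighted statement and flag the divergence from the claim, or supply a different argument for the coefficients as written; asserting that the swap yields $j\,P_t(j \mid k_{1:t-1})$ is simply false for non-uniform $\cD$. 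The same caveat propagates to your Gaussian specialization, where folding $\sum_j j\,P_t(j \mid \cdot)(\cdot)$ into $\Exp\left[k_t \I\{k_t \ge 1\}(\cdot)\right]$ is a correct identity on its own but rests on the incorrect coefficients.
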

The most salient case for us is the restriction of $\cD$ to fixed sequences of noise $k_1,\dots,k_T$ (i.e. Dirac distributions on $[K]^T$). In this case, $P_t(j \mid k_{1:t-1}) = 0$ for all but $j = k_t$, and thus our training objective need not be a lower bound on $\EforwardD[\ln  \ptheta((\bx_{t}^{k_t})_{1 \le t \le T})]$. However, the terms in the lower bound are, up to reweighting, an \emph{subset} of those terms optimized in the training objective. Thus, in light of the remarks following \Cref{cor:elbo}, a fully expressive network can optimize all the terms in the loss simultaneously. We conclude that, for a fully expressive neural network, optimizing the training objective \eqref{eq:train} is a valid surrogate for maximizing the likelihood of all possible noise sequences.

\subsection{Proof of \Cref{thm:main_elbo}}\label{sec:proof:thm:main_elbo}

Define $\Exp_{< t}[\cdot]$ as shorthand for $\Exp_{k_{1:s}\unifsim [K]}\Exp_{\bx_{s}^{k_s}\sim q(\bx_{s}^{k_s} \mid \bx_s^0),1 \le s \le t-1}\Exp_{\bz_{s} \sim p (\bz_{s} \mid \bz_{s-1},\bx_s^{k_s}), s\le t} [\cdot]$. We begin with the following claim
\begin{claim}[Expanding the latents]\label{claim:first_claim} The following lower bound holds:
\begin{align}
\Eforward[\ln  \ptheta((\bx_{t}^{k_t})_{1 \le t \le T})] &\ge \sum_{t=1}^T \Exp_{<t}\Exp_{k_t \unifsim\{0,1,\dots,K\}}\Exp_{\bx_t^{k_t} \sim q(\bx_t^{k_t} \mid \bx_t^0)}\left[\ln \ptheta(\bx_{t}^{k_t} \mid \bz_{t-1})\right],
\end{align}
Moreover, this lower bound holds with equality if $\bz_{s} \sim p (\bz_{s} \mid \bz_{s-1},\bx_s^{k_s})$ is a Dirac distribution (i.e., deterministic latents).
\end{claim}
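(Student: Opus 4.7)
The plan is to establish Claim~\ref{claim:first_claim} via a two-part argument: first decompose the joint log-likelihood over the time axis by the chain rule, then use the Markov structure of $\ptheta$ together with Jensen's inequality to introduce the latent variables one step at a time.

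First I would write
\[
\ln \ptheta\bigl((\bx_t^{k_t})_{1 \le t \le T}\bigr) = \sum_{t=1}^T \ln \ptheta\bigl(\bx_t^{k_t} \mid (\bx_s^{k_s})_{1 \le s < t}\bigr),
\]
and lower-bound each summand individually. Marginalizing over $\bz_{t-1}$ and invoking the Markov property of $\ptheta$ --- which gives $\ptheta(\bx_t^{k_t}\mid \bz_{t-1},(\bx_s^{k_s})_{1 \le s < t}) = \ptheta(\bx_t^{k_t}\mid \bz_{t-1})$ --- I obtain
\[
\ptheta\bigl(\bx_t^{k_t} \mid (\bx_s^{k_s})_{1 \le s < t}\bigr) \;=\; \Exp_{\bz_{t-1} \sim \ptheta(\,\cdot\,\mid (\bx_s^{k_s})_{1 \le s < t})}\bigl[\ptheta(\bx_t^{k_t} \mid \bz_{t-1})\bigr],
\]
and Jensen's inequality (concavity of $\ln$) produces the desired pointwise bound
\[
\ln \ptheta\bigl(\bx_t^{k_t} \mid (\bx_s^{k_s})_{1 \le s < t}\bigr) \;\ge\; \Exp_{\bz_{t-1}}\bigl[\ln \ptheta(\bx_t^{k_t} \mid \bz_{t-1})\bigr].
\]

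Next I would identify the posterior $\ptheta(\bz_{t-1} \mid (\bx_s^{k_s})_{1 \le s < t})$ with the recursive filtering distribution baked into the definition of $\Epz$. By a standard Bayesian-filtering induction that repeats the Markov assumption, the true model posterior factorizes as $\prod_{s=1}^{t-1} p(\bz_s \mid \bz_{s-1}, \bx_s^{k_s})$, which is exactly the sampling law in $\Epz$. Taking $\Eforward$ of the time-summed bound and collapsing the nested expectations into the $\Exp_{<t}$ shorthand yields the inequality stated in the claim. The equality statement is then immediate from the equality condition for Jensen: when each $p(\bz_s \mid \bz_{s-1}, \bx_s^{k_s})$ is Dirac, the induction above shows the posterior over $\bz_{t-1}$ is itself a Dirac determined by the past observations, so the random quantity inside $\Exp_{\bz_{t-1}}[\cdot]$ is almost surely constant and Jensen becomes an equality.

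The main obstacle I anticipate is not the Jensen step but the bookkeeping needed to identify the ``variational'' filtering distribution appearing in $\Epz$ with the true model posterior $\ptheta(\bz_{t-1} \mid (\bx_s^{k_s})_{1 \le s < t})$. This requires carefully propagating the Markov assumption through the chain of conditionings, and in the deterministic-latent case, it depends on the convention flagged in the remark preceding \Cref{thm:main_elbo} that $\bz_t$ is implicitly indexed by the entire history of noise levels $k_{1:t}$ (so that the model is a well-defined joint distribution rather than merely a collection of conditionals). Once that identification is in place, the remaining manipulations are routine ELBO-style bookkeeping and no further inequalities are introduced --- matching the theorem's claim that only two inequalities appear in the full argument, of which this claim supplies the first.
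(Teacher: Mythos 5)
There is a genuine gap in the step you yourself flag as ``bookkeeping.'' Your chain-rule decomposition plus per-term marginalization over $\bz_{t-1}$ is valid, but the mixing measure that Jensen's inequality forces on you is the \emph{true model posterior} $\ptheta(\bz_{t-1} \mid (\bx_s^{k_s})_{s < t})$, and this does \emph{not} factorize as $\prod_{s<t} p(\bz_s \mid \bz_{s-1},\bx_s^{k_s})$ for stochastic latents. Writing out the joint, one finds
\begin{align*}
\ptheta(\bz_{1:t-1}\mid (\bx_s^{k_s})_{s<t}) \;=\; \prod_{s=1}^{t-1} p(\bz_s\mid\bz_{s-1},\bx_s^{k_s})\cdot \prod_{s=2}^{t-1}\frac{\ptheta(\bx_s^{k_s}\mid\bz_{s-1})}{\ptheta(\bx_s^{k_s}\mid\bx_{1:s-1}^{k_{1:s-1}})},
\end{align*}
i.e.\ the true posterior is the causal filtering law reweighted by likelihood ratios involving \emph{future} observations (it is a smoothing, not a filtering, distribution). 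The correction factors equal one only when each $p(\bz_s\mid\bz_{s-1},\bx_s^{k_s})$ is Dirac. Since the claim's right-hand side is an expectation under $\Epz$, which is by definition the filtering law $\bz_s \sim p(\bz_s\mid\bz_{s-1},\bx_s^{k_s})$, your bound is over a different distribution, and there is no general ordering between $\Exp_{\text{posterior}}[\ln\ptheta(\bx_t^{k_t}\mid\bz_{t-1})]$ and $\Exp_{\text{filtering}}[\ln\ptheta(\bx_t^{k_t}\mid\bz_{t-1})]$. So your argument establishes the claim only in the deterministic-latent case, whereas the claim asserts the inequality for general $\ptheta$.

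The paper avoids this by not using the chain rule at all: it writes the full likelihood as a single importance-sampling identity,
\begin{align*}
\ptheta((\bx_t^{k_t})_{1\le t\le T}) \;=\; \Exp_{\bz_s\sim p(\bz_s\mid\bz_{s-1},\bx_s^{k_s})}\left[\prod_{t=1}^T\ptheta(\bx_t^{k_t}\mid\bz_{t-1})\right],
\end{align*}
which holds exactly because the factors $p(\bz_t\mid\bz_{t-1},\bx_t^{k_t})$ in the joint cancel against the proposal density, and then applies Jensen \emph{once} to the logarithm of this expectation over the entire product. This keeps the mixing measure equal to the filtering law appearing in $\Epz$ by construction, and the equality condition for deterministic latents follows because the integrand becomes almost surely constant. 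You could repair your argument by switching to this global variational/importance-sampling formulation; the per-step chain-rule route cannot be made to produce the filtering distribution without reintroducing exactly these cancellations.
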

\begin{proof}
Let's fix a sequence $k_{1:T}$. It holds that
\begin{align}
\ptheta((\bx_{t}^{k_t})_{1 \le t \le T}) &= \int_{\bz_{1:T} } \prod_{t=1}^T p (\bx_{t}^{k_t}, \bz_{t}\mid (\bx_s^{k_s},\bz_{s})_{s < t}) \nonumber\\
    &= \int_{\bz_{1:T}} \prod_{t=1}^T p (\bx_{t}^{k_t}, \bz_{t}\mid \bz_{t-1}) \tag{Markov Property}\\
    &= \int_{\bz_{1:T}(\veck)} \prod_{t=1}^Tp (\bz_{t} \mid \bz_{t-1},\bx_t^{k_t}) \ptheta(\bx_{t}^{k_t} \mid \bz_{t-1}) \nonumber\\
    &= \Exp_{\bz_{s} \sim p (\bz_{s} \mid \bz_{s-1},\bx_s^{k_s}), s\le T} \prod_{t=1}^T \ptheta(\bx_{t}^{k_t} \mid \bz_{t-1}). \tag{Importance Sampling}
\end{align}
Thus, by Jensen's inequality, 
\begin{align*}
\ln  \ptheta((\bx_{t}^{k_t})_{1 \le t \le T}) &\ge \Exp_{\bz_{s} \sim p (\bz_{s} \mid \bz_{s-1},\bx_s^{k_s}), s\le T} \sum_{t=1}^T \ln \ptheta(\bx_{t}^{k_t} \mid \bz_{t-1}) = \Epz\left[\sum_{t=1}^T \ln \ptheta(\bx_{t}^{k_t} \mid \bz_{t-1})\right],
\end{align*}
where the inequality is and equality when $\ptheta(\bz_s \mid \bz_{s-1},\bx_s^{k_s})$ is a Dirac distribution. By applying $\Eforward$ to both sides of the above display, and invoking the Markov property of the latents, we conclude that
\begin{align*}
\Eforward[\ln  \ptheta((\bx_{t}^{k_t})_{1 \le t \le T})] &\ge \Eforward\Epz\left[\sum_{t=1}^T \ln \ptheta(\bx_{t}^{k_t} \mid \bz_{t-1})\right] \\
&\quad=  \sum_{t=1}^T \Exp_{<t}\Exp_{k_t \unifsim\{0,1,\dots,K\}}\Exp_{\bx_t^{k_t} \sim q(\bx_t^{k_t} \mid \bx_t^0)}\left[\ln \ptheta(\bx_{t}^{k_t} \mid \bz_{t-1})\right].
\end{align*}
\end{proof}
We now unpack the terms obtained from the preceding claim.
\begin{claim}[ELBO w.r.t. $q$]  It holds that
\begin{align*}
\Exp_{\bx_t^{k_t} \sim q(\bx_t^{k_t} \mid \bx_t^0)}\left[\ln \ptheta(\bx_{t}^{k_t} \mid \bz_{t-1})\right] \ge C_1(\bx_0,k_t) + \left[\Exp_{\bx_t^{k_t:K} \sim q(\bx_t^{k_t:K} \mid \bx_t^0)}  \ln\frac{\ptheta(\bx_{t}^{k_t:K} \mid \bz_{t-1})}{q(\bx_t^{k_t+1:K} \mid \bx_t^{0})}\right].
\end{align*}
where $ C_1(\bx_0,k_t) $ is a constant depending only on $\bx_0$ and $k_t$, and where the inequality holds with equality if and only if $q(\bx_{t}^{k_t+1:T} \mid \bx_t^{k_t}) \equiv \ptheta(\bx_{t}^{k_t+1:T} \mid \bx_t^{k_t},\bz_{t-1})$. 
\end{claim}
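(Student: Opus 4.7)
The plan is to apply a standard ELBO argument: marginalize the higher-noise coordinates $\bx_t^{k_t+1:K}$ out of $\ptheta(\bx_t^{k_t}\mid \bz_{t-1})$, introduce a $q$-based variational proposal, apply Jensen, and then algebraically swap the proposal $q(\bx_t^{k_t+1:K}\mid \bx_t^{k_t})$ for the $q(\bx_t^{k_t+1:K}\mid \bx_t^{0})$ that appears in the statement, absorbing the mismatch into a $\bm\theta$-independent constant $C_1(\bx_0,k_t)$.

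Concretely, I would first write $\ptheta(\bx_t^{k_t}\mid \bz_{t-1}) = \int \ptheta(\bx_t^{k_t:K}\mid \bz_{t-1})\,\rmd\bx_t^{k_t+1:K}$ and insert the proposal $q(\bx_t^{k_t+1:K}\mid \bx_t^{k_t})$, which by the Markov structure of the forward diffusion also equals $q(\bx_t^{k_t+1:K}\mid \bx_t^{k_t},\bx_t^0)$. Jensen's inequality then yields
\[
\ln \ptheta(\bx_t^{k_t}\mid \bz_{t-1}) \ge \Exp_{q(\bx_t^{k_t+1:K}\mid \bx_t^{k_t})}\ln\frac{\ptheta(\bx_t^{k_t:K}\mid \bz_{t-1})}{q(\bx_t^{k_t+1:K}\mid \bx_t^{k_t})}.
\]
Averaging over $\bx_t^{k_t} \sim q(\cdot\mid \bx_t^0)$ and invoking the chain rule $q(\bx_t^{k_t:K}\mid \bx_t^0) = q(\bx_t^{k_t}\mid \bx_t^0)\,q(\bx_t^{k_t+1:K}\mid \bx_t^{k_t})$ collapses the nested expectations into a single expectation over $\bx_t^{k_t:K}\sim q(\cdot\mid \bx_t^0)$. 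Applying the telescoping identity
\[
\ln\frac{\ptheta(\bx_t^{k_t:K}\mid \bz_{t-1})}{q(\bx_t^{k_t+1:K}\mid \bx_t^{k_t})} = \ln\frac{\ptheta(\bx_t^{k_t:K}\mid \bz_{t-1})}{q(\bx_t^{k_t+1:K}\mid \bx_t^{0})} + \ln\frac{q(\bx_t^{k_t+1:K}\mid \bx_t^{0})}{q(\bx_t^{k_t+1:K}\mid \bx_t^{k_t})}
\]
lets me define $C_1(\bx_0,k_t)$ as the $q(\cdot\mid\bx_t^0)$-expectation of the second summand; this quantity depends only on the forward process and on $(\bx_0,k_t)$ (up to sign, it is the conditional mutual information $I_q(\bx_t^{k_t+1:K};\bx_t^{k_t}\mid \bx_t^0)$), and in particular is independent of $\bm\theta$, which is exactly what the claim requires.

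For the equality condition, Jensen's inequality above is tight iff the ratio $\ptheta(\bx_t^{k_t:K}\mid \bz_{t-1})/q(\bx_t^{k_t+1:K}\mid \bx_t^{k_t})$ is $q(\cdot\mid \bx_t^{k_t})$-a.s.\ constant in $\bx_t^{k_t+1:K}$; factoring $\ptheta(\bx_t^{k_t:K}\mid \bz_{t-1}) = \ptheta(\bx_t^{k_t}\mid \bz_{t-1})\,\ptheta(\bx_t^{k_t+1:K}\mid \bx_t^{k_t},\bz_{t-1})$ shows this is equivalent to $\ptheta(\bx_t^{k_t+1:K}\mid \bx_t^{k_t},\bz_{t-1}) \equiv q(\bx_t^{k_t+1:K}\mid \bx_t^{k_t})$, matching the stated variational-exactness condition after propagating the a.e.\ qualifier through the outer expectation over $\bx_t^{k_t}$. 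The whole argument is really just routine ELBO algebra wrapped around one Jensen step, so the main subtlety is bookkeeping rather than a genuine obstacle: carefully grouping all ``$q$''-only factors into the $\bm\theta$-free quantity $C_1$, and tracking the almost-sure qualifier through the averaging.
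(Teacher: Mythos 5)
Your proposal is correct and follows essentially the same route as the paper's proof: marginalize over $\bx_t^{k_t+1:K}$, insert the proposal $q(\bx_t^{k_t+1:K}\mid\bx_t^{k_t})$, apply Jensen, collapse the nested expectations via the Markov/chain-rule structure of $q$, and absorb the log-ratio $\ln\bigl(q(\bx_t^{k_t+1:K}\mid\bx_t^{0})/q(\bx_t^{k_t+1:K}\mid\bx_t^{k_t})\bigr)$ into the $\bm\theta$-independent constant $C_1(\bx_0,k_t)$, with the identical tightness characterization from strict concavity of $\ln$. Your observation that $C_1$ is (up to sign) a conditional mutual information is a nice addition not made explicit in the paper, but the argument is otherwise the same.
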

\begin{proof}
We have that 
\begin{align*}
&\Exp_{\bx_t^{k_t} \sim q(\bx_t^{k_t} \mid \bx_t^0)}\left[\ln \ptheta(\bx_{t}^{k_t} \mid \bz_{t-1})\right]\\
 &= \Exp_{\bx_t^{k_t} \sim q(\bx_t^{k_t} \mid \bx_t^0)} \left[\ln \int  \ptheta(\bx_{t}^{k_t:K} \mid \bz_{t-1})\rmd \bx_{t}^{k_t+1:K}\right]\\
&= \Exp_{\bx_t^{k_t} \sim q(\bx_t^{k_t} \mid \bx_t^0)}\left[\ln\left(\Exp_{\bx_t^{k_t+1:K} \sim q(\bx_t^{k_t+1:K} \mid \bx_t^{k_t})}  \left[\frac{\ptheta(\bx_{t}^{k_t:K} \mid \bz_{t-1})}{q(\bx_t^{k_t+1:K} \mid \bx_t^{k_t})}\right]\right)\right] \\
&\ge \Exp_{\bx_t^{k_t} \sim q(\bx_t^{k_t} \mid \bx_t^0)}\left[\Exp_{\bx_t^{k_t+1:K} \sim q(\bx_t^{k_t+1:K} \mid \bx_t^{k_t})}\left[  \ln\frac{\ptheta(\bx_{t}^{k_t:K} \mid \bz_{t-1})}{q(\bx_t^{k_t+1:K} \mid \bx_t^{k_t})}\right]\right] \tag{(Jensen's inequality)}\\
&= \Exp_{\bx_t^{k_t:K} \sim q(\bx_t^{k_t:K} \mid \bx_t^0)}  \left[\ln\frac{\ptheta(\bx_{t}^{k_t:K} \mid \bz_{t-1})}{q(\bx_t^{k_t+1:K} \mid \bx_t^{k_t})}\right] \tag{Markov property of $q(\cdot)$} \\
&= C_1(\bx_0,k_t) + \left[\Exp_{\bx_t^{k_t:K} \sim q(\bx_t^{k_t:K} \mid \bx_t^0)}  \ln\frac{\ptheta(\bx_{t}^{k_t:K} \mid \bz_{t-1})}{q(\bx_t^{k_t+1:K} \mid \bx_t^{0})}\right],
\end{align*}
where the constant $ C_1(\bx_0,k_t) = \Exp_{\bx_t^{k_t:K} \sim q(\bx_t^{k_t:K} \mid \bx_t^0)}\left[ \ln\frac{q(\bx_t^{k_t+1:K} \mid \bx_t^{0})}{q(\bx_t^{k_t+1:K} \mid \bx_t^{k_t})}\right]$ depends only on $\bx_0$ and $k_t$. To check the conditions for equality, note that if $q(\bx_{t}^{k_t+1:T} \mid \bx_t^{k_t}) \equiv \ptheta(\bx_{t}^{k_t+1:T} \mid \bx_t^{k_t},\bz_{t-1})$, then 
\begin{align*}
\Exp_{\bx_t^{k_t+1:K} \sim q(\bx_t^{k_t+1:K} \mid \bx_t^{k_t})}\left[  \ln\frac{\ptheta(\bx_{t}^{k_t:K} \mid \bz_{t-1})}{q(\bx_t^{k_t+1:K} \mid \bx_t^{k_t})}\right] &=   \ln \ptheta(\bx_{t}^{k_t} \mid \bz_{t-1}) + \Exp_{\bx_t^{k_t+1:K} \sim q(\bx_t^{k_t+1:K} \mid \bx_t^{k_t})}\left[  \ln \ptheta(\bx_{t}^{k_t+1:K} \mid \bz_{t-1}, \bx_t^{k_t})\right]
\end{align*}
Since $\ln(\cdot)$ is strictly concave, $\Exp_{\bx_t^{k_t+1:K} \sim q(\bx_t^{k_t+1:K} \mid \bx_t^{k_t})}\left[  \ln \ptheta(\bx_{t}^{k_t} \mid \bz_{t-1})\right] = 0$ if and only if $\ptheta(\bx_{t}^{k_t+1:K} \mid \bz_{t-1}, \bx_t^{k_t}) = q(\bx_t^{k_t+1:K} \mid \bx_t^{k_t})$. 
\end{proof}
\begin{claim}[Computing the expected ELBO]
\begin{align*}
&\Exp_{\bx_t^{k_t:K} \sim q(\bx_t^{k_t:K} \mid \bx_t^0)} \ln \frac{ \ptheta(\bx_{t}^{k_t:K} \mid \bz_{t-1})}{q(\bx_t^{k_t+1:K} \mid \bx^{0}_t)} \\
&=C_3(\bx_0,k_t) + \I\{k_t = 0\}\ln \ptheta(\bx_t^{0} \mid \bx_{t}^{1},  \bz_{t-1}) +  \sum_{j=1}^{K-1}\I\{j \ge k_t\}\Dkl\left(q(\bx_{t}^{j} \mid \bx_t^{j+1},\bx_t^0) \parallel \ptheta(\bx_{t}^{j} \mid \bx_{t}^{j+1}, \bz_{t-1})\right)\nonumber,
\end{align*}
where $C_2(\bx_0,k_t)$ is some other constant depending on $\bx_0$ and $k_t$.
\end{claim}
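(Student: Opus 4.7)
The plan is to execute a standard DDPM-style ELBO decomposition, adapted to (i) the shortened noise range $k_t, k_t+1, \dots, K$ rather than the usual $0, 1, \dots, K$, and (ii) the additional conditioning on $\bz_{t-1}$ throughout. The argument uses two classical moves --- Markov factorizations of the numerator and denominator, and Bayes reversal of the forward process --- together with one new boundary step at the starting index $j = k_t$.

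First I would factor
\[
\ptheta(\bx_t^{k_t:K}\mid\bz_{t-1}) \;=\; \ptheta(\bx_t^K\mid\bz_{t-1})\prod_{j=k_t}^{K-1}\ptheta(\bx_t^j\mid\bx_t^{j+1},\bz_{t-1})
\]
using the reverse-Markov structure of $\ptheta$, and
\[
q(\bx_t^{k_t+1:K}\mid\bx_t^0) \;=\; q(\bx_t^K\mid\bx_t^0)\prod_{j=k_t+1}^{K-1} q(\bx_t^{j}\mid\bx_t^{j+1},\bx_t^0)
\]
by inserting the (redundant) conditioning on $\bx_t^0$ into each forward transition (Markovianity), applying the Bayes identity $q(\bx_t^{j}\mid\bx_t^{j-1},\bx_t^0)= q(\bx_t^{j-1}\mid\bx_t^j,\bx_t^0)\,q(\bx_t^j\mid\bx_t^0)/q(\bx_t^{j-1}\mid\bx_t^0)$, and telescoping the marginal ratios. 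Taking the log of the ratio and the expectation under $q$, the level-$K$ factors contribute $\Exp_q\ln\ptheta(\bx_t^K\mid\bz_{t-1})/q(\bx_t^K\mid\bx_t^0)$, which under the standard DDPM choice $\ptheta(\bx_t^K\mid\bz_{t-1})\equiv\cN(0,I)$ depends only on $\bx_t^0$ and is absorbed into $C_3$; and the interior indices $j = k_t+1,\dots,K-1$ pair up into terms that each contribute $\pm\Dkl\bigl(q(\bx_t^j\mid\bx_t^{j+1},\bx_t^0)\,\big\|\,\ptheta(\bx_t^j\mid\bx_t^{j+1},\bz_{t-1})\bigr)$, the sign being negative under the standard ELBO convention although I follow the apparent convention of the target formula.

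The remaining nontrivial step is the boundary $j = k_t$, where the numerator leaves an unpaired factor $\ln\ptheta(\bx_t^{k_t}\mid\bx_t^{k_t+1},\bz_{t-1})$. If $k_t = 0$ this is exactly the stated $\I\{k_t = 0\}\ln\ptheta(\bx_t^0\mid\bx_t^1,\bz_{t-1})$ summand, so I would leave it as is. If $k_t \ge 1$ I would ``complete the KL'' by adding and subtracting $\ln q(\bx_t^{k_t}\mid\bx_t^{k_t+1},\bx_t^0)$: by Markovianity of $q$, the outer conditional of $\bx_t^{k_t}$ given $\bx_t^{k_t+1:K}$ equals $q(\bx_t^{k_t}\mid\bx_t^{k_t+1},\bx_t^0)$, so the difference becomes the $j = k_t$ summand and extends the KL sum's effective range from $j \ge k_t + 1$ to $j \ge k_t$, matching the stated $\I\{j \ge k_t\}$ indicator. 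The subtracted piece $-\Exp_q\ln q(\bx_t^{k_t}\mid\bx_t^{k_t+1},\bx_t^0)$ is the entropy of a Gaussian Bayes posterior that, once averaged over $\bx_t^{k_t+1}$, depends only on $\bx_t^0$, $k_t$, and the diffusion schedule $\{\bar{\alpha}_j\}$, and is therefore absorbed into $C_3(\bx_t^0, k_t)$. The main obstacle I anticipate is simply the bookkeeping around this boundary --- in particular, verifying that the absorbed entropy is genuinely $\theta$-independent and that the $k_t = 0$ and $k_t \ge 1$ cases merge cleanly into the single stated formula via the $\I\{k_t = 0\}$ and $\I\{j \ge k_t\}$ indicators; everything else is routine manipulation of log-ratios.
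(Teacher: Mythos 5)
Your proposal is correct and follows essentially the same route as the paper's proof: factorize the reverse chain $\ptheta(\bx_t^{k_t:K}\mid\bz_{t-1})$ and Bayes-reverse/telescope the forward chain $q(\bx_t^{k_t+1:K}\mid\bx_t^0)$, absorb the unparameterized level-$K$ ratio into the constant, pair the interior indices into KL terms, and complete the KL at the boundary $j=k_t$ when $k_t\ge 1$ while leaving the reconstruction log-likelihood when $k_t=0$. Your observation about the sign of the $\Dkl$ terms (they arise as $-\Dkl$, contrary to the $+\Dkl$ written in the statement) is also accurate, so no further changes are needed.
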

\begin{proof}
The proof invokes similar manipulations to the standard ELBO derivation for diffusion, but with a few careful modifications to handle the fact that we only noise to level $k_t$. As is standard, we  require the identity 
\begin{align}
q(\bx_{t}^{j} \mid \bx_t^{j-1},\bx_t^0) = q(\bx_{t}^{j-1} \mid \bx_t^{j},\bx_t^0) \cdot \frac{q(\bx_{t}^{j} \mid \bx_t^0)}{q(\bx_{t}^{j-1} \mid  \bx_t^0)}. \label{eq:time_reversal}
\end{align}
\paragraph{Part 1: Expanding the likelihood ratios}.  Using the above identity, we obtain
\begin{align*}
&\ln \frac{ \ptheta(\bx_{t}^{k_t:K} \mid \bz_{t-1})}{q(\bx_t^{k_t+1:K} \mid \bx^{0}_t)} \\
&= \ln  p (\bx_t^{K} \mid \bz_{t-1}) + \ln \frac{\ptheta(\bx_t^{k_t} \mid \bx_{t}^{k_t+1},\bz_{t-1})}{q(\bx_{t}^{k_t+1} \mid \bx_t^{0})} + \sum_{j=k_{t}+2}^{K}\ln \frac{\ptheta(\bx_{t}^{j-1} \mid \bx_{t}^{j},\bz_{t-1})}{q(\bx_{t}^{j} \mid \bx_t^{j-1},\bx_t^{0})}\\
&\overset{(i)}{=} \ln  p (\bx_t^{K}\mid \bz_{t-1}) + \ln \frac{\ptheta(\bx_t^{k_t} \mid \bx_{t}^{k_t+1},\bz_{t-1})}{q(\bx_{t}^{k_t+1} \mid  \bx_t^{0})} + \sum_{j=k_{t}+2}^{K}\left(\ln \frac{\ptheta(\bx_{t}^{j-1} \mid \bx_{t}^{j},\bz_{t-1})}{q(\bx_{t}^{j-1} \mid \bx_t^{j},\bx_t^{k_t})} + \ln \frac{q(\bx_{t}^{j-1} \mid \bx_t^{0})}{q(\bx_{t}^{j} \mid \bx_t^{0})}\right)\\
&\overset{(ii)}{=} \ln  p (\bx_t^{K}\mid \bz_{t-1}) + \ln \frac{\ptheta(\bx_t^{k_t} \mid \bx_{t}^{k_t+1},  \bz_{t-1})}{q(\bx_{t}^{k_t+1} \mid  \bx_t^{0})} + \ln \frac{q(\bx_{t}^{k_t+1} \mid \bx_t^{k_t})}{q(\bx_{t}^{K} \mid \bx_t^{k_t})} + \sum_{j=k_{t}+1}^{K-1}\ln \frac{\ptheta(\bx_{t}^{j} \mid \bx_{t}^{j+1}, \bz_{t-1})}{q(\bx_{t}^{j} \mid \bx_t^{j+1},\bx_t^0)} \\
&= \frac{\ln  p (\bx_t^{K}\mid \bz_{t-1})}{q(\bx_{t}^{K} \mid \bx_t^{k_t})} + \ln \ptheta(\bx_t^{k_t} \mid \bx_{t}^{k_t+1},  \bz_{t-1})  + \sum_{j=k_{t}+1}^{K-1}\ln \frac{\ptheta(\bx_{t}^{j} \mid \bx_{t}^{j+1}, \bz_{t-1})}{q(\bx_{t}^{j} \mid \bx_t^{j+1},\bx_t^0)} \\
&= \ln (q(\bx_t^{k_t} \mid \bx_{t}^{k_t+1})^{\I\{k_t \ge 1\}})  + 
\ln\frac{  p (\bx_t^{K}\mid \bz_{t-1})}{q(\bx_{t}^{K} \mid \bx_t^{k_t})} + \ln \frac{\ptheta(\bx_t^{k_t} \mid \bx_{t}^{k_t+1},  \bz_{t-1})}{q(\bx_t^{k_t} \mid \bx_{t}^{k_t+1})^{\I\{k_t \ge 1\}}}  + \sum_{j=k_{t}+1}^{K-1}\ln \frac{\ptheta(\bx_{t}^{j} \mid \bx_{t}^{j+1}, \bz_{t-1})}{q(\bx_{t}^{j} \mid \bx_t^{j+1},\bx_t^0)},
\end{align*}
where $(i)$ uses \ref{eq:time_reversal}, $(ii)$ invokes a cancellation in the telescoping sum, and the final display follows from the computation 
\begin{align}
q(\bx_t^{k_t} \mid \bx_{t}^{k_t+1})^{\I\{k_t \ge 1\}} = \begin{cases} 1 & k_t = 0 \\
q(\bx_t^{k_t} \mid \bx_{t}^{k_t+1}) & k_t \ge 1
\end{cases}.
\end{align}
Observe that, because we don't parameterize $p (\bx_t^{K}\mid \bz_{t-1})$,  $\ln (q(\bx_t^{k_t} \mid \bx_{t}^{k_t+1})^{\I\{k_t \ge 1\}})  + 
\frac{\ln  p (\bx_t^{K}\mid \bz_{t-1})}{q(\bx_{t}^{K} \mid \bx_t^{k_t})}$ can be regarded as some constant $C'(\bx_t^{k_t},\bx_t^{k_t+1},\bx_t^K)$. Thus, 
\begin{align}
&\ln \frac{ \ptheta(\bx_{t}^{k_t:K} \mid \bz_{t-1})}{q(\bx_t^{k_t+1:K} \mid \bx^{0}_t)} = C'(\bx_t^{k_t},\bx_t^{k_t+1},\bx_t^K) + \ln \frac{\ptheta(\bx_t^{k_t} \mid \bx_{t}^{k_t+1},  \bz_{t-1})}{q(\bx_t^{k_t} \mid \bx_{t}^{k_t+1})^{\I\{k_t \ge 1\}}}  + \sum_{j=k_{t}+1}^{K-1}\ln \frac{\ptheta(\bx_{t}^{j} \mid \bx_{t}^{j+1}, \bz_{t-1})}{q(\bx_{t}^{j} \mid \bx_t^{j+1},\bx_t^0)} \label{eq:thing2}
\end{align}

\paragraph{Part 2: Taking expecations.} We can now simplify to taking expectations. Observe that 
\begin{align*}
\Exp_{\bx_t^{k_t:K} \sim q(\bx_t^{k_t:K} \mid \bx_t^0)}\ln \frac{\ptheta(\bx_{t}^{j} \mid \bx_{t}^{j+1}, \bz_{t-1})}{q(\bx_{t}^{j} \mid \bx_t^{j+1},\bx_t^0)} = \Dkl\left(q(\bx_{t}^{j} \mid \bx_t^{j+1},\bx_t^0) \parallel \ptheta(\bx_{t}^{j} \mid \bx_{t}^{j+1}, \bz_{t-1})\right),
\end{align*}
and similarly,
\begin{align*}
\Exp_{\bx_t^{k_t:K} \sim q(\bx_t^{k_t:K} \mid \bx_t^0)} \ln \frac{\ptheta(\bx_t^{k_t} \mid \bx_{t}^{k_t+1},  \bz_{t-1})}{q(\bx_t^{k_t} \mid \bx_{t}^{k_t+1})^{\I\{k_t \ge 1\}}}  = \begin{cases} \ln \ptheta(\bx_t^{0} \mid \bx_{t}^{1},  \bz_{t-1}) & k_t = 0 \\
\Dkl\left(q(\bx_{t}^{k_t} \mid \bx_t^{k_t+1},\bx_t^0) \parallel \ptheta(\bx_{t}^{k_t} \mid \bx_{t}^{j+1}, \bz_{t-1})\right) & k_t \ge 1.
\end{cases}
\end{align*}
Finally, $\Exp_{\bx_t^{k_t:K} \sim q(\bx_t^{k_t:K} \mid \bx_t^0)}  C'(\bx_t^{k_t},\bx_t^{k_t+1},\bx_t^K)$ is a constant $C_2(k_t,\bx_0)$ depending only on $k_t,\bx_0$. 
Thus, from \eqref{eq:thing2}
\begin{align}
&\Exp_{\bx_t^{k_t:K} \sim q(\bx_t^{k_t:K} \mid \bx_t^0)} \ln \frac{ \ptheta(\bx_{t}^{k_t:K} \mid \bz_{t-1})}{q(\bx_t^{k_t+1:K} \mid \bx^{0}_t)}\nonumber\\
&= C_2(k_t,\bx_0) + \I\{k_t = 0\}\ln \ptheta(\bx_t^{0} \mid \bx_{t}^{1},  \bz_{t-1}) +  \sum_{j=\max\{1,k_t\}}^{K-1}\Dkl\left(q(\bx_{t}^{j} \mid \bx_t^{j+1},\bx_t^0) \parallel \ptheta(\bx_{t}^{j} \mid \bx_{t}^{j+1}, \bz_{t-1})\right)\nonumber\\
&= C_2(k_t,\bx_0) + \I\{k_t = 0\}\ln \ptheta(\bx_t^{0} \mid \bx_{t}^{1},  \bz_{t-1}) +  \sum_{j=1}^{K-1}\I\{j \ge k_t\}\Dkl\left(q(\bx_{t}^{j} \mid \bx_t^{j+1},\bx_t^0) \parallel \ptheta(\bx_{t}^{j} \mid \bx_{t}^{j+1}, \bz_{t-1})\right)\nonumber.
\end{align}
\end{proof}

\paragraph{Completing the proof of the ELBO.} We are now ready to complete the proof. By combining the previous two claims, we have
\begin{align*}
&\Exp_{\bx_t^{k_t} \sim q(\bx_t^{k_t} \mid \bx_t^0)}\left[\ln \ptheta(\bx_{t}^{k_t} \mid \bz_{t-1})\right] \\
&\quad\ge C_3(\bx_0,k_t) + \I\{k_t = 0\}\ln \ptheta(\bx_t^{0} \mid \bx_{t}^{1},  \bz_{t-1}) +  \sum_{j=1}^{K-1}\I\{j \ge k_t\}\Dkl\left(q(\bx_{t}^{j} \mid \bx_t^{j+1},\bx_t^0) \parallel \ptheta(\bx_{t}^{j} \mid \bx_{t}^{j+1}, \bz_{t-1})\right),
\end{align*}
where $C_3(\bx_0,k_t) = C_1(\bx_0,k_t)+C_2(\bx_0,k_t)$ and where again, the above is an equality when $q(\bx_{t}^{k_t+1:T} \mid \bx_t^{k_t}) \equiv \ptheta(\bx_{t}^{k_t+1:T} \mid \bx_t^{k_t},\bz_{t-1})$. Taking an expectation over $k_t\unifsim\{0,1,\dots,K\}$, we have
\begin{align}
\Exp_{k_t \unifsim\{0,1,\dots,K\}}[\I\{k_t = 0\}] = \frac{1}{K+1}, \quad \Exp_{k_t \unifsim\{0,1,\dots,K\}}\I\{j \ge k_t\} = \frac{j+1}{K+1}.
\end{align}
and consequently,
\begin{align*}
&\Exp_{k_t \unifsim\{0,1,\dots,K\}}\Exp_{\bx_t^{k_t} \sim q(\bx_t^{k_t} \mid \bx_t^0), 1 \le t \le T} \ln \ptheta((\bx_{t}^{k_t})_{1 \le t \le T})\\
&\quad\ge C_4(\bx_t^0) +\frac{1}{K+1}\ln \ptheta(\bx_t^{0} \mid \bx_{t}^{1},  \bz_{t-1}) +  \sum_{j=1}^{K-1}\frac{j+1}{K+1}\Dkl\left(q(\bx_{t}^{j} \mid \bx_t^{j+1},\bx_t^0) \parallel \ptheta(\bx_{t}^{j} \mid \bx_{t}^{j+1}, \bz_{t-1})\right)
\end{align*}
Invoking \Cref{claim:first_claim},
\begin{align*} 
&\Eforward[\ln  \ptheta((\bx_{t}^{k_t})_{1 \le t \le T})] \\
&\ge \sum_{t=1}^T \Exp_{<t}\Exp_{k_t \unifsim\{0,1,\dots,K\}}\Exp_{\bx_t^{k_t} \sim q(\bx_t^{k_t} \mid \bx_t^0)}\left[\ln \ptheta(\bx_{t}^{k_t} \mid \bz_{t-1})\right]\\
&=\sum_{t=1}^T \Exp_{<t} \left[C_4(\bx_t^0) +\frac{1}{K+1}\ln \ptheta(\bx_t^{0} \mid \bx_{t}^{1},  \bz_{t-1}) +  \sum_{j=1}^{K-1}\frac{j+1}{K+1}\Dkl\left(q(\bx_{t}^{j} \mid \bx_t^{j+1},\bx_t^0) \parallel \ptheta(\bx_{t}^{j} \mid \bx_{t}^{j+1}, \bz_{t-1})\right)\right]
\end{align*}
We conclude by observing that $\sum_{t=1}^T \Exp_{<t} \left[C_4(\bx_t^0)\right]$ is a constant $C(\bx_{1:T}^0)$, and that 
\begin{align*}
&\Exp_{<t}\left[\ln \ptheta(\bx_t^{0} \mid \bx_{t}^{1},  \bz_{t-1})\right] = \Eforward \Epz\left[\ln \ptheta(\bx_t^{0} \mid \bx_{t}^{1},  \bz_{t-1})\right]\\
&\Exp_{<t}\left[\Dkl\left(q(\bx_{t}^{j} \mid \bx_t^{j+1},\bx_t^0) \parallel \ptheta(\bx_{t}^{j} \mid \bx_{t}^{j+1}, \bz_{t-1})\right)\right] \\
&= \Eforward \Epz\left[\Dkl\left(q(\bx_{t}^{j} \mid \bx_t^{j+1},\bx_t^0) \parallel \ptheta(\bx_{t}^{j} \mid \bx_{t}^{j+1}, \bz_{t-1})\right)\right],
\end{align*}
since both terms only depend on $k_{1:t-1},(\bx_s^{k_s})_{1 \le s \le t-1}$ and $\bz_{1:t-1}$. We conclude then that 
\begin{align*}
&\Eforward[\ln  \ptheta((\bx_{t}^{k_t})_{1 \le t \le T})] \ge C(\bx_{1:T}^0) \\
&+ \Eforward\Epz \left[\sum_{t=1}^T \left(\frac{1}{K+1}\ln \ptheta(\bx_t^{0} \mid \bx_{t}^{1},  \bz_{t-1}) +  \sum_{j=1}^{K-1}\frac{j+1}{K+1}\Dkl\left(q(\bx_{t}^{j} \mid \bx_t^{j+1},\bx_t^0) \parallel \ptheta(\bx_{t}^{j} \mid \bx_{t}^{j+1}, \bz_{t-1})\right)\right)\right], 
\end{align*}
as needed. Lastly, we recall that the above is an \emph{equality} under the conditions that \newline (a) $\ptheta(\bz_t \mid \bz_{t-1},\bx_t^{k_t})$ is a Dirac distribution, and (b) $q(\bx_{t}^{k_t+1:T} \mid \bx_t^{k_t}) \equiv \ptheta(\bx_{t}^{k_t+1:T} \mid \bx_t^{k_t},\bz_{t-1})$, and we reindex $j \gets j+1$ to ensure consistency with indexing in standard expositions of the diffusion ELBO. 

\qed

\label{app:intuition}
\section{Additional Intuitions and Explainations}
\subsection{Extension to transformer backbone}
\label{app:transformer}
While this paper focuses on a causal implementation of \algo{} with RNNs, it's easy to adopt \algo{} with modern architectures like transformers. One can simply modify a transformer-based sequence diffusion model to train with independent noise levels across tokens and follow the techniques listed in Section \ref{app:snr_derivation}. A strict implementation of causal \algo{} would involve a causal attention mask on the transformer. However, \algo's fractional masking can do something more interesting: Consider the scenario that we use a transformer without a causal mask. We can still implement causality by controlling noise. By labeling the future as full white noise, there is no information leaked into the past tokens. By labeling future tokens as free of noise, we make the model completely non-causal. By labeling the future tokens as noisy, a slight amount of information about the future is provided for the prediction of past tokens. This effectively states that one only needs a non-causal architecture, but controlling fractional noise of the future, to achieve partial or complete causality. These extensions are beyond the scope of this paper, but we already verified their effectiveness and thus provide them as intuitions for future works.

\subsection{The need for independent noise levels}
\label{app:independent_noise}
When training \algo{}, we choose to sample per-token noise level following i.i.d uniform distribution from $[1,2...K]$. One may wonder about the necessity of this choice. Here we discuss the unique abilities of independent noise and the compute overhead added by it. 

The use of independent noise confers a number of special capabilities in our model, including stabilization of autoregressive rollout~\ref{par:stabilizing_autoreg}, modeling causal uncertainty~\ref{par:zigzag}, and removing the need for expensive reconstruction guidance when conditioning on context~\ref{app:cond_replacement}. None of these capabilities can be achieved by full-sequence diffusion. AR-diffusion~\cite{wu2023ar} and Rolling Diffusion~\cite{ruhe2024rolling} can only achieve the first and third one. There are more sampling-time applications such as flexible frame interpolation. Finally, we also saw the practical benefits of using independent noise in hyperparameter tuning. One can simply try different sampling schemes to figure out the most effective one for their applications. All these capabilities only require training the model once with \algo{}. In contrast, any tuning of the sampling scheme would require re-training the model for AR-diffusion and Rolling Diffusion.  

On the other hand, we didn't observe much computing overhead when comparing \algo{} to full-sequence diffusion, as soon as one closely follows our training techniques like~\ref{app:snr_derivation}. The empirical evidence is based on our experiments with an experimental transformer implementation of \algo{} and is thus not fully consistent with the main paper. However, we present the high-level descriptions below for readers interested in more insights: The complexity added by independent noise levels is in the temporal dimension. Therefore, we first adopt a standard technique for video diffusion models - image pre-training, to abstract away the complexity of the image pixels themselves. Then the complexity left is temporal prediction only. We then take the pre-trained image-only model and continue training it on video data. It turns out the sampling result of \algo{} with fewer training steps in this second stage is already better than that of full-sequence diffusion at convergence. We speculate that the better result is due to the same data-augmentation effect described in prior works~\cite{kingma2024understanding}. This shows that the overhead added by independent noise is well-warranted when considering the overall training compute (including image pre-training).

\subsection{Guidance as planning}
\label{app:reward_guidance}
As stated in Section~\ref{sec:related_prelim}, one can use the gradient of the logarithmic of a classifier $\log c(y|\bx_t^k)$ to guide the sampling process of diffusion model towards samples with a desired attribute $y$. For example, $y$ can refer to the indicator of a success event.  However, we can consider the logarithmic of a more general \emph{energy function} $c(\bx_t^k)$. This has the interpretation as $\Pr(y | \bx_t^k)$, where $\Pr[ y= 1 \mid \bx_t^k] = e^{c(\bx_t^k)}$. Some popular candidate energies include
\begin{align}
    c(\bx_t^k) =\Exp\left[\sum_{t' > t} \br_{'}(\bx_{t'}^{k_{t'}}) \mid \bx_t^{k}\right],  \label{eq:cost_to_go_guidance}
\end{align}
corresponding to a cost-to-go; we can obtain unbiased estimates of this gradient by using cumulative reward $\tilde  c(\bx_t^k) =\sum_{t' > t} \br_{'}(\bx_{t'}^{k_{t'}})$. We can also use goal distance $c = - \|\bx_T^{k_T} - \mathbf{g}\|^2$ as a terminal reward. We provide details about the guidance function deployed in the maze2d planning experiment in Appendix~\ref{app:maze_guidance}.

\subsection{Noising and stabilizing long-horizon generations}\label{app:noising_long_horizons}
\newcommand{\ksmall}{k_{\mathrm{small}}}
Here, we explain in detail how we use noising to stabilize long-horizon generation. At each time $t$, during the denoising, we maintain a latent $\bz_{t-1}^{\ksmall}$ from the previous time step, with $0 < \ksmall \ll K$ corresponding to some small amount of noise. We then do \emph{next token} diffusion to diffuse the token $\bx_t$ across noise levels $\bx_t^{K},\bx_{t}^{K-1},\dots,\bx_t^0$ (corresponding to \Cref{alg:diffusion_forcing_sampling} with horizon $T = 1$, initial latent $\bz_{t-1}^k$, and noise schedule $\cK_{m,1} = m$); this process also produces latents $\bz_t^K,\bz_t^{K-1},\dots,\bz_t^0$ associated with each noise level. From these, we use the latent $\bz_t^{\ksmall}$ to repeat the process. This noised latent can be interpreted as an implementation of conditioning on $\bx_t^{\ksmall}$ in an autoregressive process. In a potential transformer implementation of \algo{} as we discussed in Appendix~\ref{app:transformer}, one can instead run a forward diffusion on a fully diffused token to achieve stabilization. 

It is widely appreciated that adding noise to data ameliorates long-term compounding error in behavior cloning applications \cite{ke2021grasping,laskey2017dart}, and even induces robustness to non-sequential adversarial attacks \cite{cohen2019certified}. In autoregressive video generation, the noised  $\bx_t^{\ksmall}$ is in-distribution for training, because \algo{} trains from noisy past observation in its training objective.
Hence, this method can be interpreted as a special case of the DART algorithm for behavior cloning \cite{laskey2017dart}, where the imitiator (in our case, video generator) is given actions (in our case, next video frames) from noisy observations (in our case, noised previous frames). Somewhat more precisely, because we use both tokens at training time to train \algo, and using slightly noised tokens for autoregression at test time, our approach inherits the theoretical guarantee of the HINT algorithm \cite{block2023provable}.

\subsection{Why Monte Carlo Guidance relies on \algo}
\label{app:cannot_mcg}
Monte Carlo Guidance provides substantial variance reduction in our estimate of cost-to-go guidance \eqref{eq:cost_to_go_guidance}. This technique crucially relies on the ability to roll out future tokens from current ones to use these sample rollouts to get Monte Carlo estimates for gradients. This is not feasible with full-sequence diffusion, because this requires denoising all tokens in tandem; thus, for a given fixed noise level, there is no obvious source of randomness to use for the Monte Carlo estimate. It may be possible to achieve variable horizon via the trick proposed in the following subsection to simulate future rollouts, but to our knowledge, this approach is nonstandard.

\subsection{Does the replacement technique lead to flexible horizons in full-sequence diffusion?}
\label{app:cond_replacement}
A naive way to obtain flexible horizon generation in full-sequence diffusion is via the ``replacement trick'': consider a full sequence model trained to diffuse $\bx_{1:T}$, which we partition into $\bx_{1:t-1},\bx_{t:T}]$. Having diffused tokens $\bx_{1:t-1}$, we can attempt to denoise tokens of the form $[\tilde \bx_{1:t-1}^{k},\bx_{t:T}^k]$, where we \emph{fix} $\tilde \bx_{1:t-1}^{k} = \bx_{1:t-1}$ to be the previously generated token, and only have score gradients update the remaining $\bx_{t:T}^k$.  One clear disadvantage of this method is inefficiency - one still needs to diffuse the whole sequence even when there is one step left at $t=T-1$. What's more, \cite{ho2022video} points out that this approach of conditioning, named ``conditioning by replacement'', is both mathematically unprincipled and can lead to inconsistency in the generated sequence. The best fix proposed by~\cite{ho2022video} incorporates an additional gradient term with respect to $\bx_{t:T}$ at every diffusion step; this is still an incomplete fix and suffers the computation cost of an extra backward propagation for every sampling step.

\subsection{Further connection to Bayesian filtering}
The core idea of \algo{} can be interpreted as using diffusion to construct an interpolation between prior distribution and posterior distribution of a Bayes filter. Consider the hybrid distribution $p(\bz_t|\bz_{t-1}, \bx_t^k)$. When $k=0$, this hybrid distribution becomes the posterior $p(\bz_t|\bz_{t-1}, \bx_t)$. On the other hand, when $k=K$, the hybrid distribution becomes $p(\bz_t|\bz_{t-1}, \mathbf{n})$ for $ \mathbf{n}\sim \mathcal{N}(0, \mathbf{I})$. Since the independent Gaussian noise term $ \mathbf{n}$ contains no information about $ \bz$, this is exactly the prior distribution $p(\bz_t|\bz_{t-1})$. By varying $k$ between $K$ and $0$, the same neural network can parameterize everything between prior and posterior.

\begin{figure}
    \centering
    \includegraphics[width=0.8\textwidth]{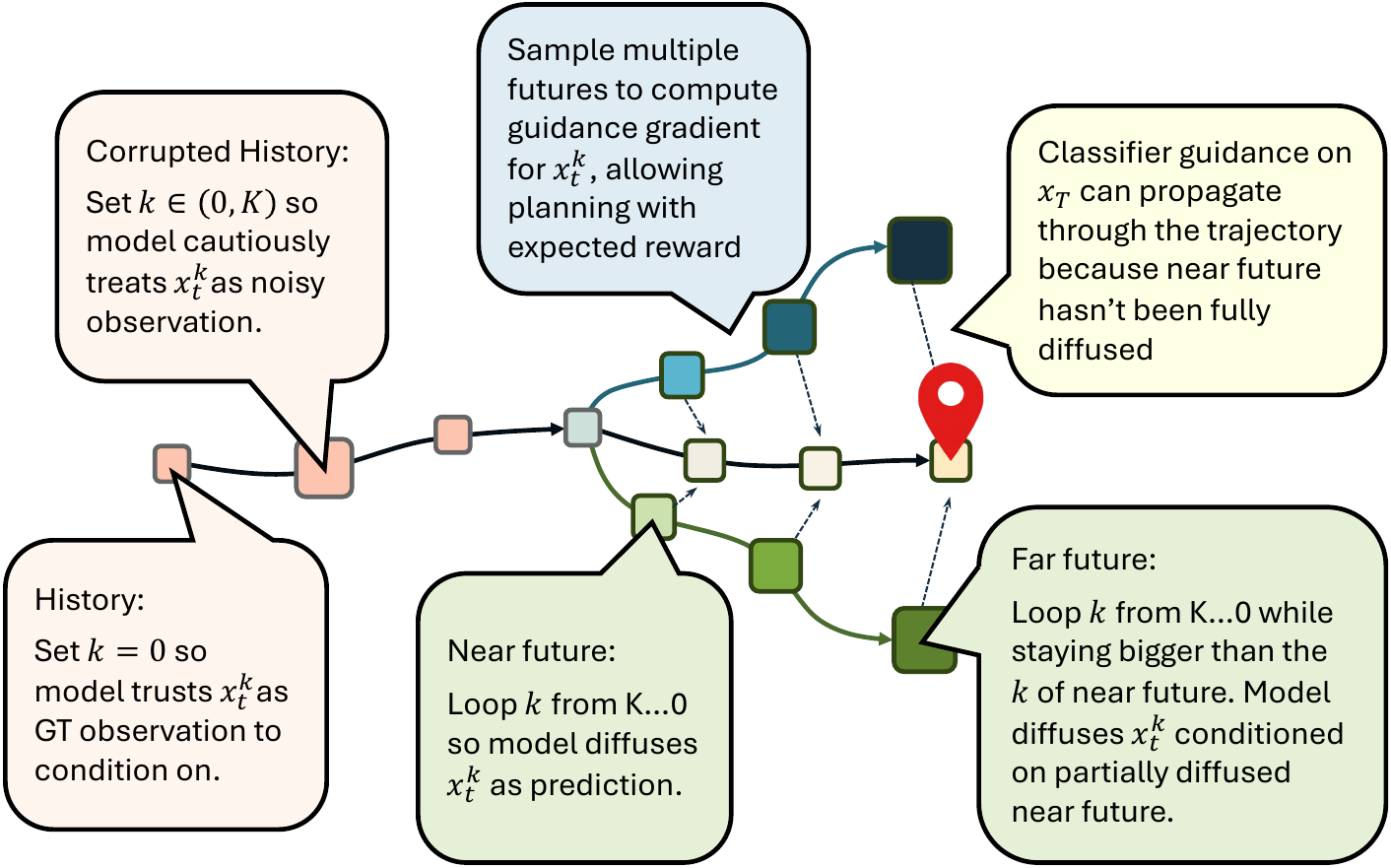}
    \caption{\algo{} is trained on independent level of noises at different timesteps. As a result, we can control the noise level $k$ to achieve different effects on conditioning and prediction.}
    \label{fig:ability_in_seq}
\end{figure}

\subsection{Connection to other sequence training schemes}
Noise as masking provides a unified view of different sequence training schemes. The following exposition uses a length $3$ sequence as an example: We always start with fully masked sequence $[\bx_1^K,\bx_2^K,\bx_3^K]$ with the goal of denoising it a ``clean sequence'' of zero noise. $[\bx_1^0,\bx_2^0,\bx_3^0]$. Assume all diffusions are sampled with $3$-step DDIM.

\paragraph{Autoregressive.} In teacher forcing, one trains a model to predict the next token conditioned on prior observations. One can train next-token diffusion models with teacher forcing such as ~\cite{rasul2021autoregressive}: feed neural network with past observations as well as a current observation and ask it to predict clean current observation. A typical training pair can have the input of $[\bx_1^0,\bx_2^0,\bx_3^{K}]^\top$ and target of $[\bx_1^0,\bx_2^0,\bx_3^{0}]^\top$.

At sampling time, one fully diffuses the next token before adding the diffused observation to history to perform an autoregressive rollout. The diffusion process would thus look like 
\begin{align*}
&[\bx_1^K,\bx_2^K,\bx_3^K]^\top\\
&[\bx_1^{K//2},\bx_2^K,\bx_3^K]^\top,\\
&[\bx_1^{0},\bx_2^{K},\bx_3^{K}]^\top,\\
&[\bx_1^{0},\bx_2^{K//2},\bx_3^{K}]^\top\\
&[\bx_1^{0},\bx_2^{0},\bx_3^{K}]^\top,\\
&[\bx_1^{0},\bx_2^{0},\bx_3^{K//2}]^\top,\\
&[\bx_1^{0},\bx_2^{0},\bx_3^{0}]^\top.
\end{align*}
Notably, \algo{} can also perform this sampling scheme at sampling time for applications like imitation learning, when one wants to diffuse the next action as fast as possible.

\paragraph{Full Sequence Diffusion.}
Full sequence diffusion models accept a noisy sequence and denoises level-by-level
\begin{align*}
&[\bx_1^K,\bx_2^K,\bx_3^K]^\top\\
&[\bx_1^{K//2},\bx_2^{K//2},\bx_3^{K//2}]^\top,\\
&[\bx_1^{0},\bx_2^{0},\bx_3^{0}]^\top.
\end{align*}
Notably, \algo{} can also perform this sampling scheme at sampling time.

\paragraph{Diffusion Forcing with causal uncertainty}
As shown in Figure~\ref{fig:method}, to model causal uncertainty, \algo{} keeps the far future more uncertain than the near future by having a larger noise level $k$, at any time of diffusion. An example pattern looks like this:
\begin{align*}
&[\bx_1^K,\bx_2^K,\bx_3^K]^\top\\
&[\bx_1^{K//2},\bx_2^K,\bx_3^K]^\top,\\
&[\bx_1^{0},\bx_2^{K//2},\bx_3^{K}]^\top,\\
&[\bx_1^{0},\bx_2^{0},\bx_3^{K//2}]^\top\\
&[\bx_1^{0},\bx_2^{0},\bx_3^{0}]^\top
\end{align*}
Notable, ~\cite{wu2023ardiffusion} is the first one to propose such a linear uncertainty sampling scheme for causal diffusion models, although \algo{} provides a generalization of such scheme in combination with other abilities.

\paragraph{Diffusion Forcing with stablization}
Previously we introduced the autoregressive sampling scheme that \algo{} can also do. However, such a scheme can accumulate single-step errors because it treats predicted $\bx$ as ground truth observation. \algo{} addresses this problem by telling the model that generated images should be treated as noisy ground truth, as shown in~\ref{fig:method}. 

It first fully diffuses the first token, 
\begin{align*}
&[\bx_1^K,\bx_2^K,\bx_3^K]^\top\\
&[\bx_1^{K//2},\bx_2^K,\bx_3^K]^\top,\\
&[\bx_1^{0},\bx_2^{K},\bx_3^{K}]^\top\\
\end{align*}
Then, it feed the diffused $\bx_1^0$ into the model but tell it is of a slightly higher noise level, as $\bx_1^{1}$ to diffuse $\bx_2$.
\begin{align*}
&[\bx_1^{1},\bx_2^{K//2},\bx_3^{K}]^\top\\
&[\bx_1^{1},\bx_2^{0},\bx_3^{K}]^\top
\end{align*}
Then, it feeds the diffused $\bx_2^0$ into the model but tells it is of a higher noise level, as $\bx_2^{1}$.
\begin{align*}
&[\bx_1^{1},\bx_2^{1},\bx_3^{K//2}]^\top,\\
&[\bx_1^{1},\bx_2^{1},\bx_3^{0}]^\top.
\end{align*}

\section{Extended Related Work}\label{app:extended_related}

\paragraph{Reconstructing masked tokens.} Masked Autoencoders for images~\cite{he2022masked} and videos~\cite{feichtenhofer2022masked} are a popular method for representation learning in pixel space. They have been extended to perform diffusion to generate masked patches conditioned on unmasked ones~\cite{wei2023diffusion,gao2023masked}.

\paragraph{Casting Image Generation as Sequence Generation.} \cite{van2016conditional,chen2020generative} show that even generative modeling of non-sequential data, such as images, can be fruitfully cast as sequence generative modeling.

\paragraph{Non-Diffusion Probabilistic Sequence Models.}
\cite{chung2015recurrent} parameterize token-to-token transitions via a variational auto-encoder. This makes them probabilistic, but does not directly maximize the joint probability of sequences, but rather, enables sampling from the distribution of single-step transitions.

\paragraph{Sequence Diffusion with Varying Noise Levels.} Most similar to our work is AR-Diffusion~\cite{wu2023ardiffusion} which similarly aims to train next-token prediction models for sequence diffusion. Key differences are that AR-Diffusion proposes a noise level that is \emph{linearly} dependent on the position of each word in the sequence, while our critical contribution is to have each noise level be \emph{independent}, as this uniquely enables our proposed sampling schemes, such as stabilizing auto-regressive generation and conditioning on corrupted observations. Further, AR-Diffusion only explores language modeling and does not explore guidance, while we investigate Diffusion Forcing as a broadly applicable sequence generative model with particular applications to sequential decision-making. In particular, we introduce Monte-Carlo Guidance as a novel guidance mechanism. Another closely related work is Rolling Diffusion \cite{ruhe2024rolling}, which proposes to diffuse a sequence with near future more certain and far future more uncertain, resembling the causally uncertain sampling scheme of \algo. Like AR-Diffussion, Rolling Diffusion's training noise levels are linearly dependent on the positions of tokens and must use the exact same noise level scheme at sampling time. It, therefore, shares the aforementioned limitations of AR-Diffusion as well.

\section{Additional Method Details}

\subsection{Fused SNR reweighting}
\label{app:snr_derivation}
SNR reweighting~\cite{min_snr} is a widely used technique to accelerate the convergence of image diffusion models. In short, it reweighs the diffusion loss proportional to the signal-to-noise ratio (SNR) of noisy $\bx^k$. In \algo, conditioning variable $\bz_{t-1}$ can also contain a non-trivial amount of information about $\bx_t$, in addition to $\xtk$. For example, in a deterministic markovian system, if $\bx_{t-1}^{k_{t-1}}$ has its noise level $k_{t-1}=0$, the posterior state $\bz_{t-1}$ contains all the information needed to predict $\bx^0_t$ regardless of the noise level of $\xtk$.

Therefore we \textbf{re-derive} SNR reweighting to reflect this change in \algo. We call this technique Fused SNR reweighting. We follow the intuition of original SNR reweighting to loosely define SNR in a sequence with independent levels of noises at different time steps. Denote $S_t$ as the normalized SNR reweighting factor for $\xtk$ following its normal derivation in diffusion models. For example, if one uses min snr strategy ~\cite{min_snr}, its reweighting factor will always fall between $[0, C]$ which we divide by $C$ to get $S_t\in [0, 1]$. Define signal decay factor $0<\gamma<1$, measuring what proportion of signal in $\bx_{t-1}^{k_{t-1}}$ contribute to denoising $\xtk$. This is the simple exponential decay model of sequential information. Now, define cumulated SNR recursively as the running mean of $S_t$: $\bar{S}_t=\gamma \bar{S}_{t-1} + (1 -\gamma)S_{t}$ to account for signals contributed by the entire noisy history to the denoising at time step $t$. The other factor that contributes to the denoising is $S_t$ of noisy observation $\xtk$. To combine them, we use a simplified model for independent events. Notice $S_t$ and $\bar{S}_t$ always falls in range $[0, 1]$, and therefore can be reinterpreted as probabilities of having all the signal one needs to perfect denoise $\xtk$. Since the noise level at $t$ is independent of prior noise levels, we can view $S_t$ and $\bar{S}_{t-1}$ as probabilities of independent events and thus can composed to define a joint probability $S'_t=1-(1-S_t)(1-\bar{S}_{t-1})$, and we use this $S'_t$ as our fused SNR reweighting factor for diffusion training. 

In our experiments, we choose to follow the min-SNR reweighting strategy ~\cite{min_snr} to derive the $S$. Our Fused SNR reweighting proves extremely useful to accelerate the convergence of video prediction, while we didn't observe a boost on non-image domains so we didn't use it there.

\subsection{Architecture}
\label{app:video_method}
\paragraph{Video Diffusion} We choose both the raw image $\bx$ and latent state $\bz$ to be 2D tensors with channel, width, and height. For simplicity, we use the same width and height for $\bx$ and $\bz$. We then implement the transition model $p(\xtk|\bz_{t-1})$ with a typical diffusion U-net~\cite{DBLP:journals/corr/abs-2102-09672}. We use the output of the U-net as the input to a gated recurrent unit (GRU) and use $\bz_{t-1}$ as the hidden state feed into a GRU. The output of GRU is treated as $\bz_t$. For observation model $p(\bx_t|\bz_t)$, we use a $1$-layer resnet~\cite{resnet} followed by a conv layer. We combine these two models to create an RNN layer, where the latent of a particular time step is $\bz_{t-1}$, input is $\xtk$ and output is $\hat{\bx}$. One can potentially obtain better results by training \algo{} with a causal transformer architecture. However, since RNN is more efficient for online decision-making, we also stick with it for video prediction and it already gives us satisfying results.

We choose the number of channels in $\bz$ to be $16$ for DMlab and $32$ for Minecraft. In total, our Minecraft model consists of $36$ million parameters and our DMlab model consists of $24$ million parameters. We can potentially obtain a better Minecraft video prediction model with more parameters, but we defer that to future works to keep the training duration reasonable ($<1$ day). In maze planning, the number of total parameters is $4.33$ million.

\paragraph{Non-Video Diffusion}
For non-spatial $\bx$ that is not video nor images, we use residue MLPs~\cite{DBLP:journals/corr/abs-2105-03404} instead of Unet as the backbone for the dynamics model. Residue MLP is basically the ResNet~\cite{resnet} equivalent for MLP. Similar to video prediction, we feed the output of resMLP into a GRU along with $\bz_{t-1}$ to get $\bz_t$. Another ResMLP serves as the observation model.

\subsection{Diffusion parameterization}
In diffusion models, there are three equivalent prediction objectives, $\bx_0$, $\epsilon$~\cite{ho2020denoising}, and $v$ parameterization~\cite{vparameterization}.  Different objectives lead to different reweighting of loss at different noise levels, together with SNR reweighting. For example, $\epsilon$ parameterization and $v$ parameterization are essential in generating pixel data that favors high-frequency details. 

In our experiments, we use $v$ parameterization for video prediction and found it essential to both convergence speed and quality.

We observe that $\bx_0$ parameterization is strongly favorable in planning and imitation learning, likely because they don't favor an artificial emphasis on high-frequency details. We observe the benefits of v-parameterization in time-series prediction.

\subsection{Noise schedule}
We use sigmoid noise schedule~\cite{chen2023importance} for video prediction, linear noise schedule for maze planning, and cosine schedule for everything else. 

\subsection{Implementation Details of Sampling with Guidance}

\paragraph{Corner case of sampling noise}
\label{app:corner_case}
In our sampling algorithm, due to the flexibility of the scheduling matrix $\mathcal{K}$, there are corner cases when $\xtk$ is required to stay at its same noise level during a sampling step. The core question of this corner case is whether we should update$\xtk$ at all. One option is just copying over the old value. The other option is to run a backward diffusion followed by a forward diffusion back to its old noise level to resample under the diffusion process. While we conclude this can be an open question, we prefer the later approach, resampling, and use it in Monte Carlo Guidance to generate multiple samples. We note that even if one takes the first approach, the guidance gradient can still flow back in the time steps before $t$ as the dynamics model $p(\bz_t|\xtk, \bz_{t-1})$ can still propagate the guidance gradient to $\bz_{t-1}$.

Other than Monte Carlo Guidance, this corner case only happens when $k_t=0$ or $k_t=K$ throughout our experiments. That is, we chose our $\mathcal{K}$ such that once any token gets diffused slightly, it will keep diffusing. In the case of $k_t=K$, keeping $\xtk$ at the same noise level implies it will stay as white noise, and we don't even need to sample another white noise. In case $k_t=0$, the time step is already completely diffused either approach should give us the same result so we just opt for copying over for simplicity.

\paragraph{Guidance for maze planning}
\label{app:maze_guidance}
In maze planning, our main baseline Diffuer~\cite{janner2022planning} discards the reward from the dataset and directly plans with the goal position and velocity. We adopt the same convention for \algo{}. One can perform guidance on goal position using log-likelihood $||\mathbf{p}_T-\mathbf{g}||$, but a flexible horizon model should not require users to manually specify a $T$ to reach its goal, instead we want it to try to reach the goal for any possible horizon. Therefore we use the reward model $\sum_t ||\mathbf{p}_T-\mathbf{g}||$ so any time step can be the final step to reach the goal. This objective is challenging due to the non-convex nature of 2D maze, but we found \algo{} can still reliably find plans without bumping into walls. However, we also observe that the agent tend to leave the goal location due to the nature of the provided dataset - the goal location is just one possible waypoint for the robot to pass through, and there are no trajectories that simply stay at the goal. We also tried this reward for guidance with Diffuser, but it didn't work even with a good amount of tuning.

\subsection{Performance Optimization}
Accelerating the diffusion sampling of \algo{} is similar to that of normal diffusion models. We adopt DDIM~\cite{ddim} sampling for the diffusion of each token. While we use $K=1000$ steps of diffusion, we sample with only $100$ DDIM for video prediction and $50$ for non-video domains.

While \algo{} can be implemented with transformers, we use an RNN as the backbone for \algo{} experiments it's widely used in decision-making for its flexibility and efficiency in online decision-making systems. To further reduce training time and GPU memory usage, we use frame-stacking to stack multiple observed images as a single $\bx$. This is due to the fact that adjacent tokens can be very similar - e.g. recording the same motion at higher fps can lead to this. We deem that it's wasteful if we roll out the dynamics model multiple times to generate almost identical tokens. For video datasets, we manually examine how many time steps it takes to require a minimal level of prediction power instead of copying frames over. There is another reason why we use frame stacking - many diffusion model techniques such as different noise schedules are designed to model $\bx$ with correlated elements or redundancy. Low-dimensional systems may need drastically different hyperparameters when they lack the data redundancy these techniques are tested on. Frame stacking is thus also helpful for our non-image experiments so we can start with canonical hyperparameters of diffusion models. We use a frame stack of $4$ for DMlab video prediction, $8$ for Minecraft, and $10$ for maze planning.

At sampling time, we also have a design choice to reduce compute usage, as reflected in line 8 of Algorithm~\ref{alg:diffusion_forcing_sampling}. In line 8, we directly assign $\bz_t^{\text{new}}$ to $\bz_t$, instead of recalculating $\bz_t$ with posterior model $p(\bz_t|\bz_{t-1}, \bx_t^{\text{new}}, k-1)$. Since the model is trained to condition on $\bz_t$ estimated from arbitrary noisy history, we recognize that both are valid approaches. The reason why the choose line $8$ is twofold. First, it cuts the compute by half, avoiding computing posterior every step. Second, this happens to be what we want for stabilization - $\bz_t^{\text{new}}$ already contains the information of the clean $\bx_t^{\text{new}}$ under our simplified observation model, and happens to be estimated with $k=k_t$, a noise level higher than that of $\bx_t^{\text{new}}$. This happens to implement the behavior we want for stabilization.

\subsection{Sampling schedule for causal uncertainty}
Inference is depicted in \Cref{alg:diffusion_forcing_sampling} and Figure~\ref{fig:method}. In Equation~\ref{eq:pyramid}, we illustrate a specific instantiation of the $\mathcal{K}$ matrix we used for causal planning. For simplicity, we denote the case where a latent $\bz_0$ is given and aim to generate $\bx_{1:H+1}$. 
\begin{align}
\cK^{\mathrm{pyramid}}=
\begin{bmatrix}
K & K & K & ... & K\\
K-1 & K & K & ... & K\\
K-2 & K-1 & K & ... & K\\
\vdots& \vdots &\vdots  &\ddots & \vdots \\
1 & 2 & 3 & ... & H \\
0 & 1 & 2 & ... & H-1 \\
\vdots& \vdots &\vdots  &\ddots & \vdots \\
0 & 0 & 0 & ... & 1 \\
0 & 0 & 0 & ... & 0
\end{bmatrix} 
\label{eq:pyramid}
\end{align}

\algo{} begins by sampling our sequences as white noise with noise level $K$. It then denoises along each row $m=1,\dots, M$ of $\cK$ in decreasing order. It does so by proceeding sequentially through frames $t=1,\dots, T $, updating the latent (Line 5 of Algorithm~\ref{alg:diffusion_forcing_sampling}), and then partially applying the backward process to noise level $k = \cK_{m,t}$ dictated by the scheduling matrix $\cK$  (Line 6-7 of Algorithm~\ref{alg:diffusion_forcing_sampling}). We call a $\cK$ like this pyramid scheduling, as the tokens in the far future are kept at higher noise level than near future.

\subsection{Metrics for Maze Planning}
We report the episode reward of \algo{} for different maze planning environments in Table~\ref{fig:planning}. However, we found that the episode reward isn't necessarily a good metric: Intuitively, maze planning should reward smart agents that can find the fastest route to the goal, not a slow-walking agent that goes there at the end of the episode. The dataset never contains data on the behavior of staying at the goal, so agents are supposed to walk away after reaching the goal with sequence planning methods. Diffuser may had an unfair advantage of just generating slow plans, which happens to let the agent stay in the neighborhood of the goal for more steps and get a very high reward as a result. This metric seems to exploit flaws in the environment design - a good design would involve a penalty of longer time taken to reach the goal. Therefore, in future works based on our paper, we encourage alternative metrics like the time it takes to reach the goal for the first time, which \algo{} excels at.

\subsection{Implementation Details of Timeseries Regression}
We follow the implementation of pytorch-ts, where the validation set is a random subset of the training set with the same number of sequences as the test set. We use early stopping when validation crps-sum hasn't increased for 6 epochs. We leverage the same architecture (1 mlp and 4 grus) as well as a batch size of 32.

\subsection{Compute Resources}
All of our experiments use $fp16$ mixed precision training. Time series, maze planning, compositionally, and visual imitation experiments can be trained with a single $2080Ti$ with $11$GB of memory. We tune the batch size such that we fully use the memory of GPUs. This translates to a batch size of $2048$ for maze planning and compositional experiments, and $32$ for visual imitation learning. While we use early stopping on the validation set for time series experiments, we did not carefully search for the minimal number of training steps required, though the model usually converges between $50$k to $100k$ steps. The above environments thus usually take $4-8$ hours to train although there is without doubt a significant potential for speed up.

Video prediction is GPU intensive. We use $8$ A100 GPUs for both video prediction datasets. We train for $50K$ steps with a batch size of $8\times 16$. It usually takes $12$ hours to converge at $40K$ steps of training (occasional validation time also included).

\section{Additional Experiment Results }
\label{app:exp_timeseries}
\subsection{Multivariate Probabilistic Time Series Forecasting} \label{appendix:ts}

\begin{figure}
    \centering
    \includegraphics[width=\textwidth]{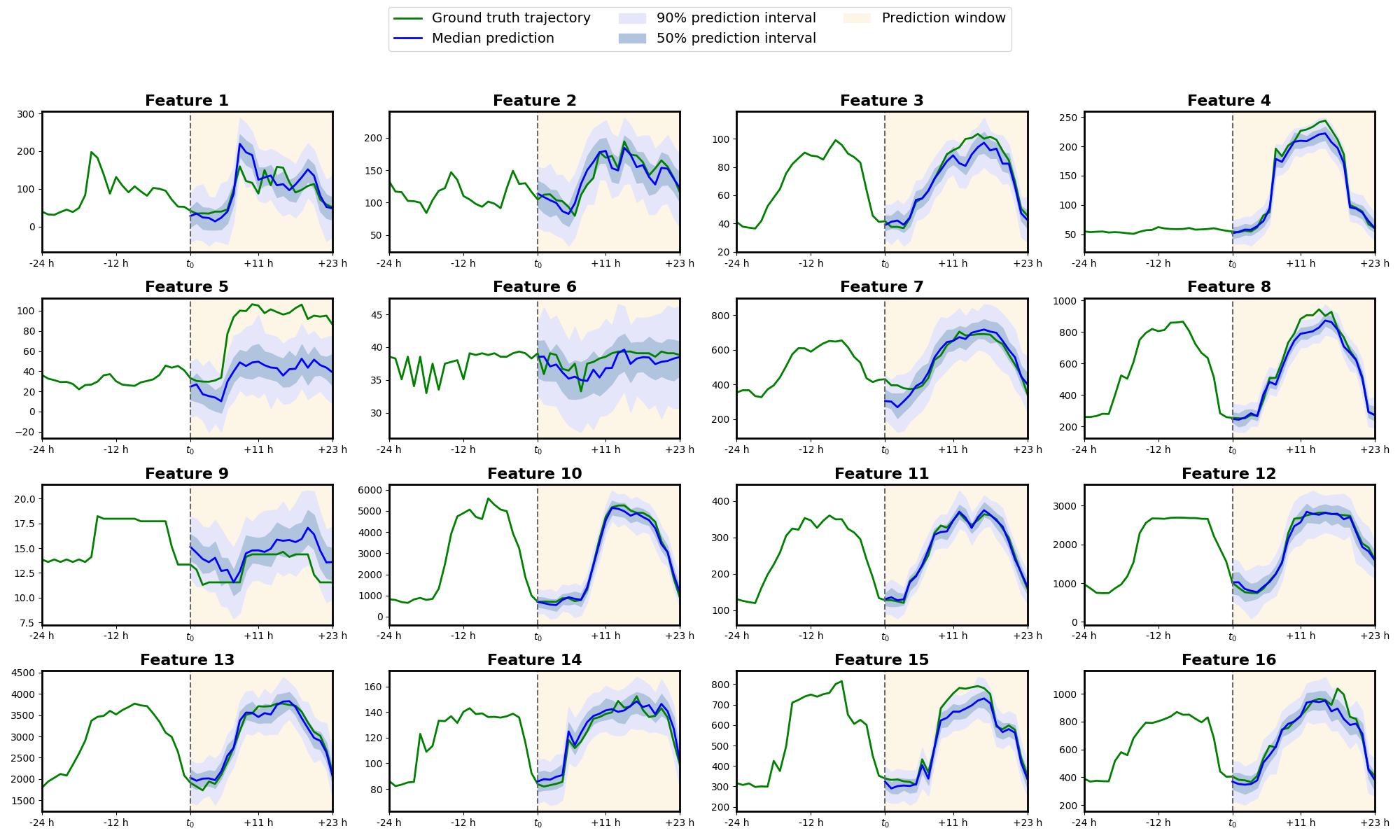}
    \caption{Prediction intervals of \algo{} for the first prediction window of the test set in the Electricity time series dataset. Only the first 16 features out of 370 are plotted.}
    \label{fig:ts_results}
\end{figure}
To illustrate \algo's new training objective does not degrade it as a generic sequence model, we evaluate \algo{} on high-dimensional and long-horizon sequence prediction tasks in time series prediction. We adopt multiple time series datasets with real-world applications from GluonTS~\cite{gluonts} and evaluate \algo{} with strong baselines with standard metrics in this domain. In this section, we mainly focus on the results and analysis. For a detailed description of datasets and the metric, we refer the reader to Appendix~\ref{app:dataset_timeseries}.

\paragraph{Problem Formulation}
Let $\boldsymbol{X} = \left\{ \mathbf{x}_t \right\}_{t=1}^T$ be a sequence (multivariate time series) of $D$-dimensional observations $\mathbf{x}_t \in \mathbb{R}^D$ of some underlying dynamical process, sampled in discrete time steps $t \in \left\{1, \dots, T\right\}$, where $T \in \mathbb{N}$. In the problem setting of probabilistic time series forecasting, the sequence $\boldsymbol{X} = \left\{ \boldsymbol{X}_c, \boldsymbol{X}_p \right\}$ is split into two subsequences at time step $t_0 \in \mathbb{N}$ with $1 < t_0 \leq T$: the context window $\boldsymbol{X}_c := \left\{ \mathbf{x}_t \right\}_{t=1}^{t_0-1}$ (also called history or evidence) of length $t_0-1$, and the prediction window $\boldsymbol{X}_p := \left\{ \mathbf{x}_t \right\}_{t=t_0}^{T}$ of length~$T - t_0 + 1$ (also known as the prediction horizon). Then, the task is to model the conditional joint probability distribution
\begin{equation} \label{eq:prediction_pdf}
    q{\left( \mathbf{x}_{t_0:T} \mid \mathbf{x}_{1:t_0-1} \right)} := \prod_{t=t_0}^T{q{\left( \mathbf{x}_t \mid \mathbf{x}_{1:t-1} \right)}}
\end{equation}
over the samples in the prediction window. If we know the distribution in~\eqref{eq:prediction_pdf}, we can sample forecast prediction sequences given some initial context from the evidence sequence. However, most time-dependent data generation processes in nature have complex dynamics and no tractable formulation of $q{\left( \mathbf{x}_{t_0:T} \mid \mathbf{x}_{1:t_0-1} \right)}$. Instead, we construct a statistical model that approximates the generative process in~\eqref{eq:prediction_pdf} and estimates quantiles via Monte Carlo sampling of simulated trajectories. In this way, confidence levels or uncertainty measures can be calculated, and point forecasts can be produced as the mean or median trajectory~\cite{hyndman2008forecasting}.

\paragraph{Results.}

\ctable[
    caption = {Results for time series forecasting. We report the test set $\text{CRPS}_\textbf{sum}$ (the lower, the better) of comparable methods on six time series datasets. We measure the mean and standard deviation of our method from five runs trained with different seeds.},
    label = {tab:results_ts},
    pos = ht,
    doinside = \scriptsize,
    center,
    star
]{lccccccc}{}{
    \FL
     Method & Exchange & Solar & Electricity & Traffic & Taxi & Wikipedia
    \ML
    VES~\cite{hyndman2008forecasting} & 0.005 $\pm$ 0.000 & 0.900 $\pm$ 0.003 & 0.880 $\pm$ 0.004 & 0.350 $\pm$ 0.002 & - & - \NN
    VAR~\cite{luetkepohl2007new} & 0.005 $\pm$ 0.000 & 0.830 $\pm$ 0.006 & 0.039 $\pm$ 0.001 & 0.290 $\pm$ 0.001 & - & - \NN
    VAR-Lasso~\cite{luetkepohl2007new} & 0.012 $\pm$ 0.000 & 0.510 $\pm$ 0.006 & 0.025 $\pm$ 0.000 & 0.150 $\pm$ 0.002 & - & 3.100 $\pm$ 0.004 \NN
    GARCH~\cite{weide2002garch} & 0.023 $\pm$ 0.000 & 0.880 $\pm$ 0.002 & 0.190 $\pm$ 0.001 & 0.370 $\pm$ 0.001 & - & - \NN
    DeepAR~\cite{salinas2020deepar} & - & 0.336 $\pm$ 0.014 & 0.023 $\pm$ 0.001 & 0.055 $\pm$ 0.003 & - & 0.127 $\pm$ 0.042 \NN
    LSTM-Copula~\cite{salinas2019highdimensional} & 0.007 $\pm$ 0.000 & 0.319 $\pm$ 0.011 & 0.064 $\pm$ 0.008 & 0.103 $\pm$ 0.006 & 0.326 $\pm$ 0.007 & 0.241 $\pm$ 0.033 \NN
    GP-Copula~\cite{salinas2019highdimensional} & 0.007 $\pm$ 0.000 & 0.337 $\pm$ 0.024 & 0.025 $\pm$ 0.002 & 0.078 $\pm$ 0.002 & 0.208 $\pm$ 0.183 & 0.086 $\pm$ 0.004 \NN
    KVAE~\cite{krishnan2016structured} & 0.014 $\pm$ 0.002 & 0.340 $\pm$ 0.025 & 0.051 $\pm$ 0.019 & 0.100 $\pm$ 0.005 & - & 0.095 $\pm$ 0.012 \NN
    NKF~\cite{bezenac2020normalizing} & - & 0.320 $\pm$ 0.020 & 0.016 $\pm$ 0.001 & 0.100 $\pm$ 0.002 & - & 0.071 $\pm$ 0.002 \NN
    Transformer-MAF~\cite{rasul2021multivariate} & 0.005 $\pm$ 0.003 & 0.301 $\pm$ 0.014 & 0.021 $\pm$ 0.000 & 0.056 $\pm$ 0.001 & 0.179 $\pm$ 0.002 & 0.063 $\pm$ 0.003 \NN
    TimeGrad~\cite{rasul2021autoregressive} & 0.006 $\pm$ 0.001 & 0.287 $\pm$ 0.020 & 0.021 $\pm$ 0.001 & {0.044} $\pm$ {0.006} & 0.114 $\pm$ 0.020 & 0.049 $\pm$ 0.002 \NN
    ScoreGrad sub-VP SDE~\cite{yan2021scoregrad} & 0.006 $\pm$ 0.001 & \textbf{0.256} $\pm$ \textbf{0.015} & \textbf{0.019} $\pm$ \textbf{0.001} & 0.041 $\pm$ 0.004 & {0.101} $\pm$ {0.004} & \textbf{0.043} $\pm$ \textbf{0.002} \NN
    \textbf{Ours} & \textbf{0.003} $\pm$ \textbf{0.001} & {0.289} $\pm$ {0.002} & {0.023} $\pm$ {0.001} & \textbf{0.040} $\pm$ \textbf{0.004} & \textbf{0.075} $\pm$ \textbf{0.002} & 0.085 $\pm$ 0.007
    \LL
}

We evaluate the effectiveness of \algo{} as a sequence model on the canonical task of multivariate time series forecasting by following the experiment setup of ~\cite{salinas2019highdimensional, rasul2021multivariate, rasul2021autoregressive, tang2021probabilistic, yan2021scoregrad}
Concretely, we benchmark \algo{} on the datasets Solar, Electricity, Traffic, Taxi, and Wikipedia. These datasets have different dimensionality, domains, and sampling frequencies, and capture seasonal patterns of different lengths. The features of each dataset are detailed in Table~\ref{tab:ts_data}. We access the datasets from GluonTS~\cite{gluonts}, and set the context and prediction windows to the same length for each dataset. Additionally, we employ the same covariates as~\cite{rasul2021autoregressive}.
We evaluate the performance of the model quantitatively by estimating the Summed Continuous Ranked Probability Score $\operatorname{CRPS}_\text{sum}$ via quantiles. As a metric, $\operatorname{CRPS}_\text{sum}$ measures how well a forecast distribution matches the ground truth distribution.  We provide detailed descriptions of the metric in Appendix~\ref{app:dataset_timeseries}.
We benchmark with other diffusion-based methods in time series forecastings, such as TimeGrad~\cite{rasul2021autoregressive} and the transformer-based Transformer-MAF~\cite{rasul2021multivariate}. In particular, the main baseline of interest, TimeGrad~\cite{rasul2021autoregressive}, is a next-token diffusion sequence model trained with teacher forcing.
We track the $\operatorname{CRPS}_\text{sum}$ metric on the validation set and use early stopping when the metric has not improved for 6 consecutive epochs, while all epochs are fixed to 100 batches across datasets. We then measure the $\operatorname{CRPS}_\text{sum}$ on the test set at the end of the training, which we report in Table~\ref{tab:results_ts}. We use the exact same architecture and hyperparameters for all time series datasets and experiments.
\algo{} outperforms all prior methods except for ~\cite{yan2021scoregrad} with which Diffusion Forcing is overall tied, except for the Wikipedia dataset, on which Diffusion Forcing takes fourth place. 
Note that time series is not the core application of \algo{}, and that we merely seek to demonstrate that the Diffusion Forcing objective is applicable to diverse domains with no apparent trade-off in performance over baseline objectives. 

\subsection{Additional results in compositional generation}
\label{app:exp_compositionality}
Since \algo{} models the joint distribution of any subset of a sequence, we can leverage this unique property to achieve compositional behavior - i.e., \algo{} can sample from the distribution of \emph{subsets} of the trajectory and compose these sub-trajectories into new trajectories. 

In particular, we show that we can also have flexible control over how compositional \algo{} is. As shown in \ref{fig:compositionality}, consider a dataset of trajectories on a 2D, square plane, where all trajectories start from one corner and end up in the opposite corner, forming a cross shape. When no compositional behavior is desired, one can let the models replicate the cross-shaped distribution by allowing full memory of the HMM model. 
When one desires compositional such as generating a V-shaped trajectory, which stitches two sub-trajectories together, one can let the model generate shorter plans with no-memory context using MPC. (Add figures).

\begin{figure}[h]
    \centering
    \begin{subfigure}[t]{.3\linewidth}
        \centering\includegraphics[width=\linewidth]{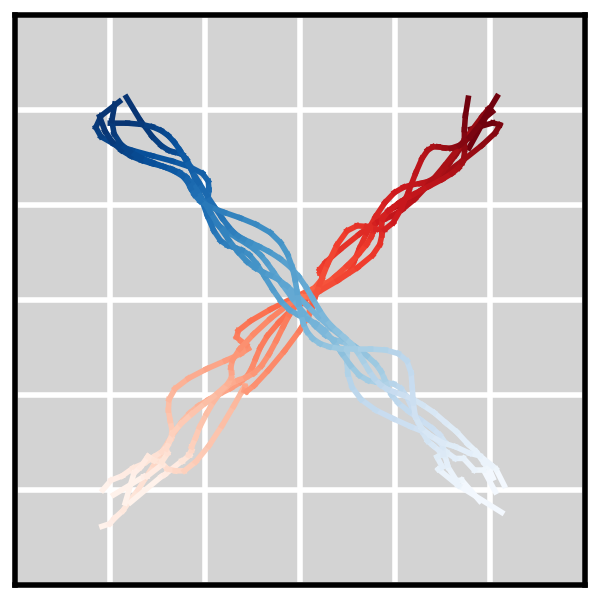}
        \caption{Dataset} \label{fig:compositionality_dataset}
    \end{subfigure}
    \begin{subfigure}[t]{.3\linewidth}
        \centering\includegraphics[width=\linewidth]{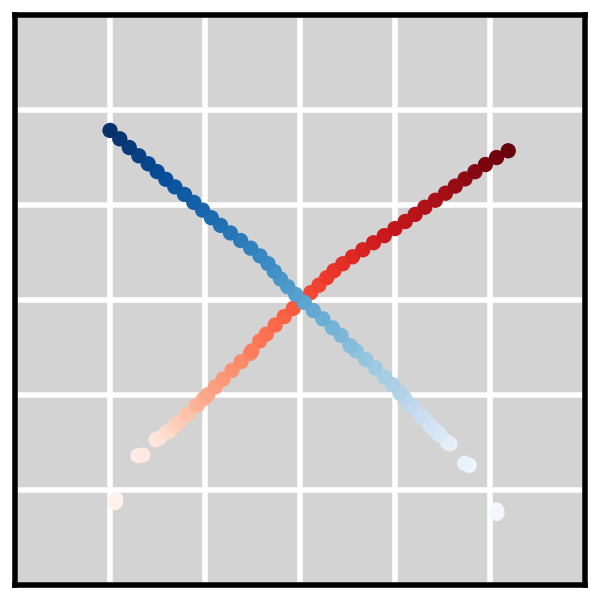}
        \caption{W/ memory} \label{fig:compositionality_w_memory}
    \end{subfigure}
    \begin{subfigure}[t]{.3\linewidth}
        \centering\includegraphics[width=\linewidth]{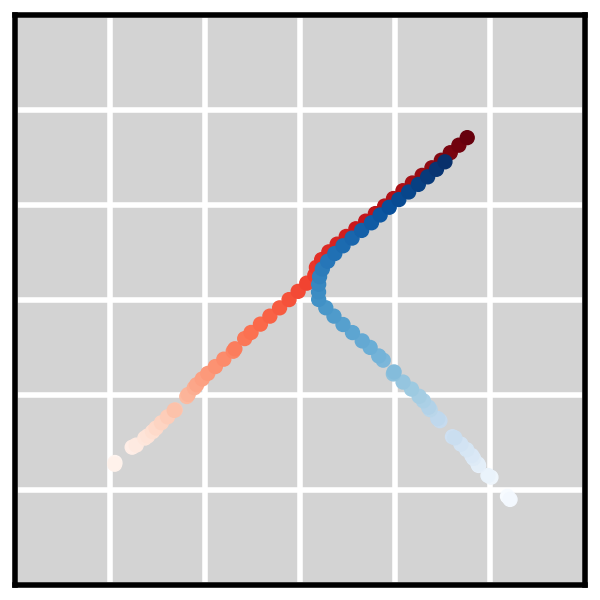}
        \caption{W/o memory} \label{fig:compositionality_wo_memory}
    \end{subfigure}
    \caption{Given a dataset of trajectories~(\subref{fig:compositionality_dataset}), \algo{} models the joint distribution of all subsequences of arbitrary length. At sampling time, we can sample from the trajectory distribution by sampling \algo{} with full horizon~(\subref{fig:compositionality_w_memory}) or recover Markovian dynamics by disregarding previous states~(\subref{fig:compositionality_wo_memory}).}
    \label{fig:compositionality}
\end{figure}

\subsection{Additional results in video prediction (wo/ cherry picking)}
\paragraph{Infinite Rollout without sliding window}
\algo{} can rollout longer than maximum training horizon\emph{without sliding window}. That is, we run \algo{}'s RNN continuously without ever reinitializing $\bz_0$. This is a surprising effect we observed from the rollout stabilization property of \algo{}. In Figure~\ref{fig:minecraft_long_0},~\ref{fig:dmlab_long_0}, we use \algo{} to generate video sequences of length $180$ and visualize subsampled sequences. Notably, \algo{} used in these visualizations is trained with a maximum length of $72$ frames for Minecraft and $36$ frames for DMLab, illustrating it can rollout 2x-5x times longer than it's trained on \emph{without sliding window}. In addition, we also tried rolling these models out for $2000$ frames and without seeing the model blowing up on both datasets. There are occasional cases where the Minecraft agent gets stuck and the entire screen is the ``dirt'' block, but this is more of a dataset issue~\ref{app:dataset_detail} and the agent is able to recover after it turns around. 

\begin{figure}[h]
    \centering
    \includegraphics[width=\textwidth]{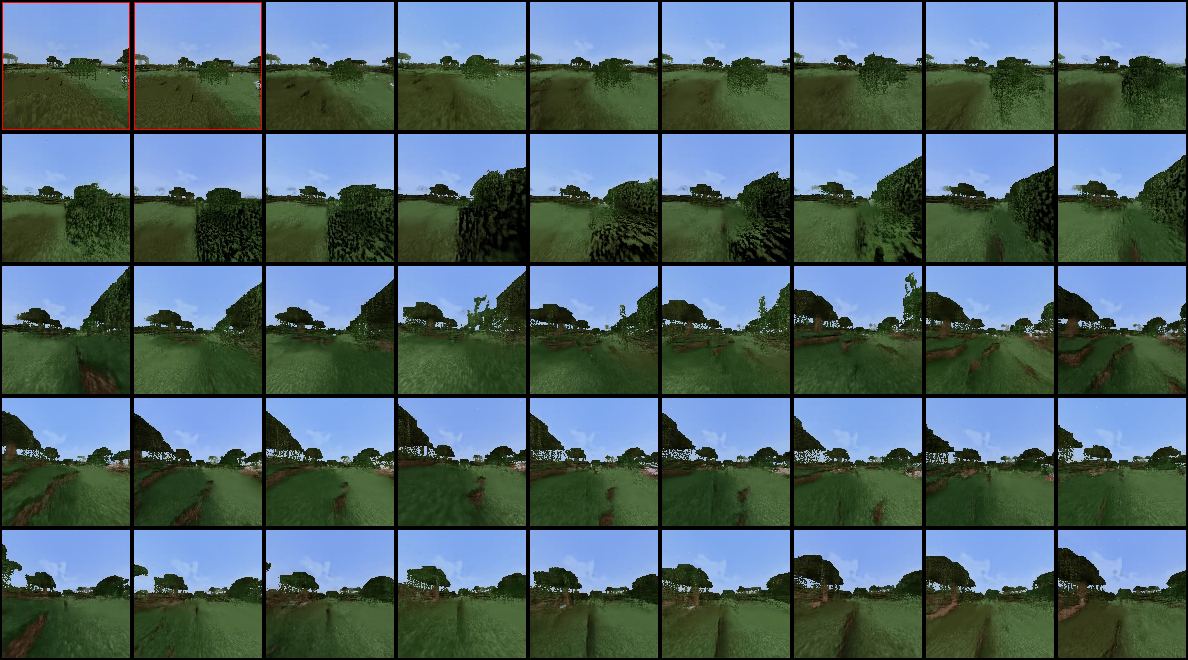}
    \caption{Visualization shows \algo{} trained on $72$ frames is able to rollout $180$ frames on Minecraft dataset \emph{without sliding window}. The visualization shows a non-cherry-picked subsampling of these $180$ frames, although \algo{} can roll out much longer (such as 2000 frames) on this dataset.}
    \label{fig:minecraft_long_0}
\end{figure}
\begin{figure}[h]
    \centering
    \includegraphics[width=\textwidth]{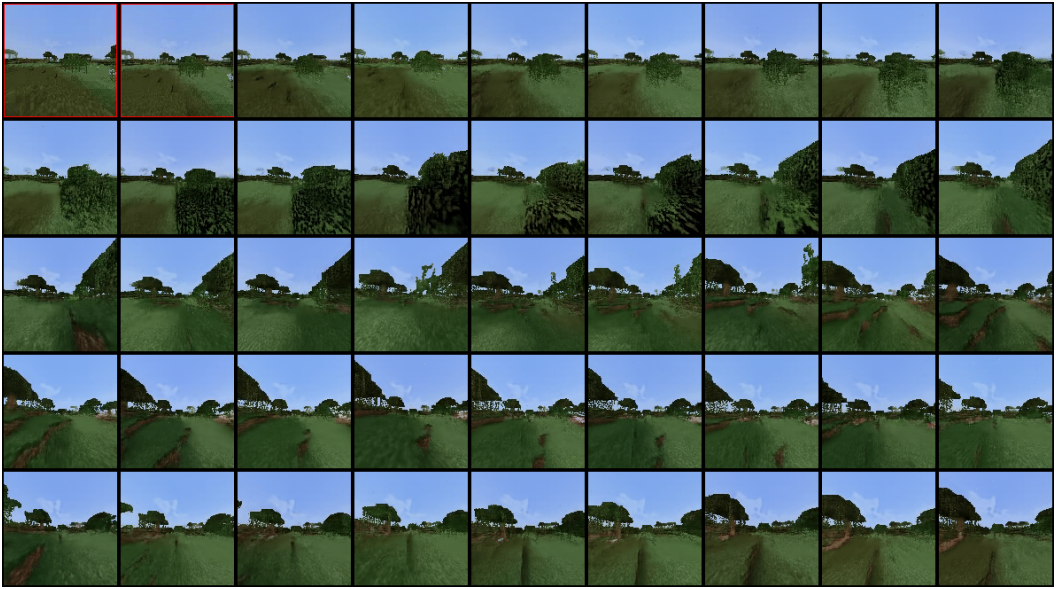}
    \caption{\algo{} trained on $72$ frames is able to rollout $180$ frames on Minecraft dataset \emph{without sliding window}. The visualization shows a non-cherry-picked subsampling of these $180$ frames, although \algo{} can roll out much longer (such as 2000 frames) on this dataset. The first few frames marked in red are the ground truth images of the dataset used for conditioning.}
    \label{fig:minecraft_long_1}
\end{figure}
\begin{figure}[h]
    \centering
    \includegraphics[width=\textwidth]{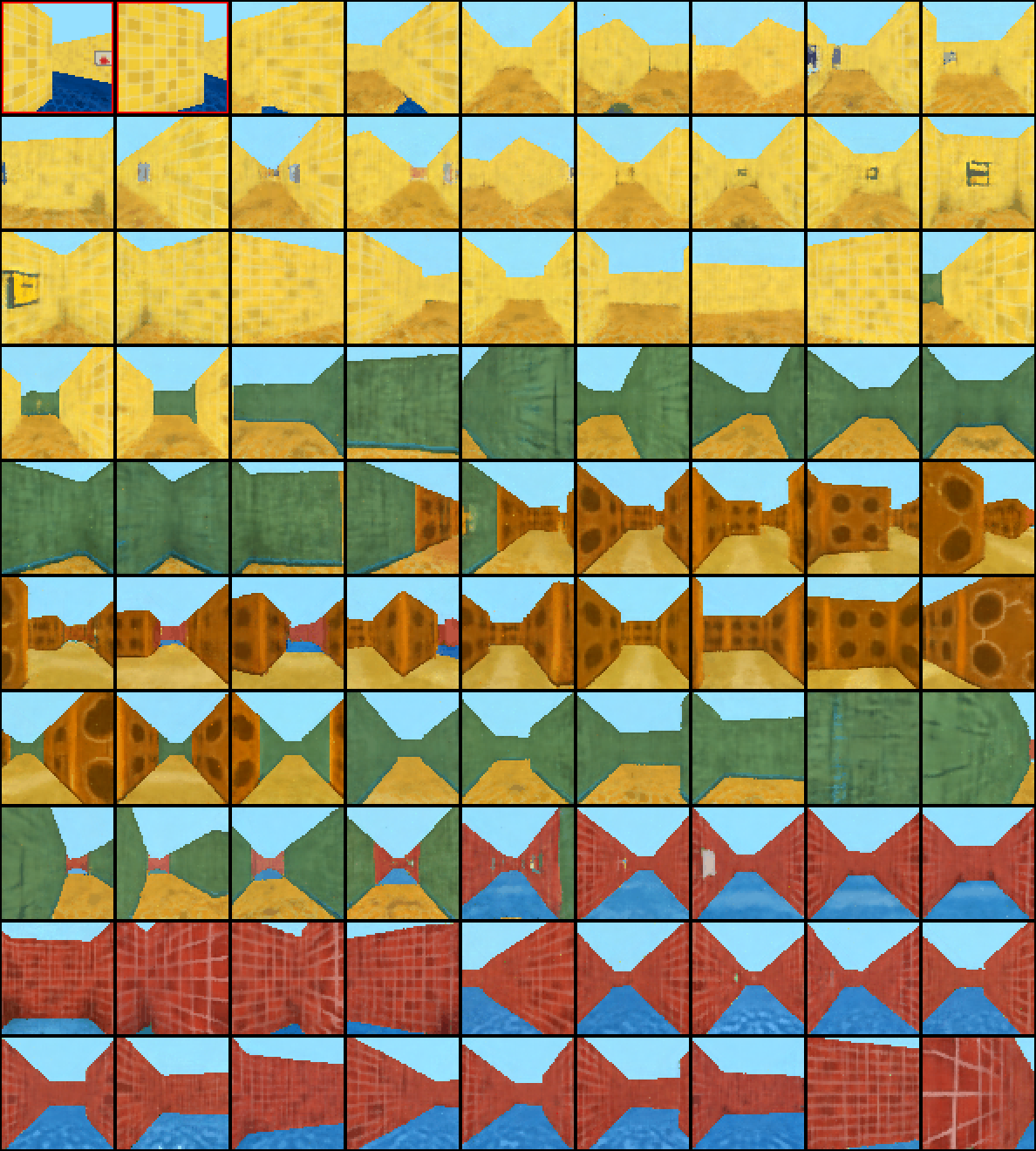}
    \caption{Visualization shows \algo{} trained on $36$ frames is able to rollout $180$ frames on DMLab dataset \emph{without sliding window}. The visualization shows a non-cherry-picked subsampling of these $180$ frames, although \algo{} can roll out almost infinitely on this dataset. The first few frames marked in red are the ground truth images of the dataset used for conditioning.}
    \label{fig:dmlab_long_0}
\end{figure}
\begin{figure}[h]
    \centering
    \includegraphics[width=\textwidth]{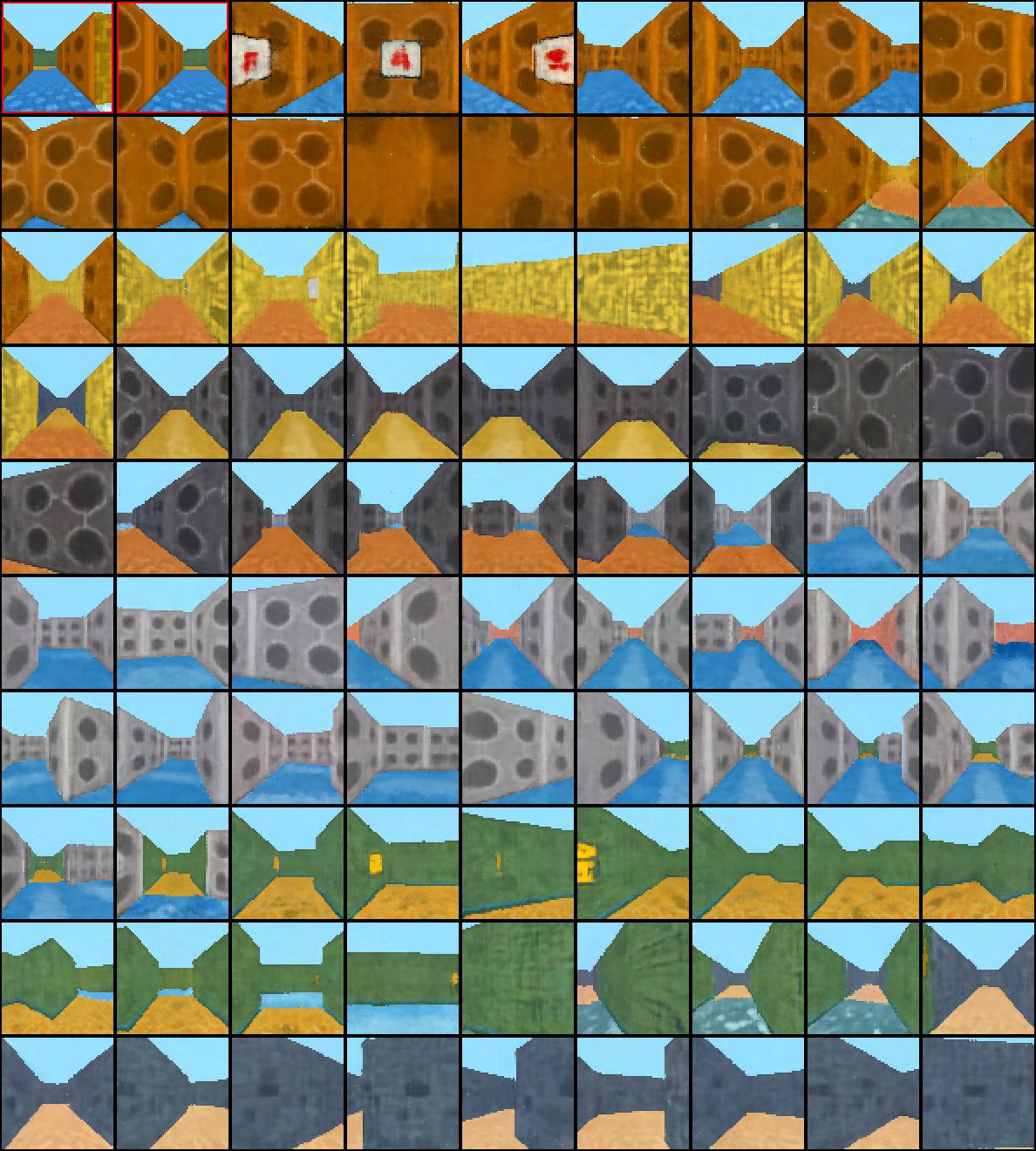}
    \caption{Visualization shows \algo{} trained on $36$ frames is able to rollout $180$ frames on DMLab dataset \emph{without sliding window}. The visualization shows a non-cherry-picked subsampling of these $180$ frames, although \algo{} can roll out almost infinitely on this dataset. The first few frames marked in red are the ground truth images of the dataset used for conditioning.}
    \label{fig:dmlab_long_1}
\end{figure}

\paragraph{Consistency}
We also present additional results where we only generate within our maximum training length. As shown in figure~\ref{fig:minecraft_short}~\ref{fig:dmlab_short}, \algo{} can generate consistent videos. Results are not cherry-picked.
\begin{figure}[h]
    \centering
    \includegraphics[width=\textwidth]{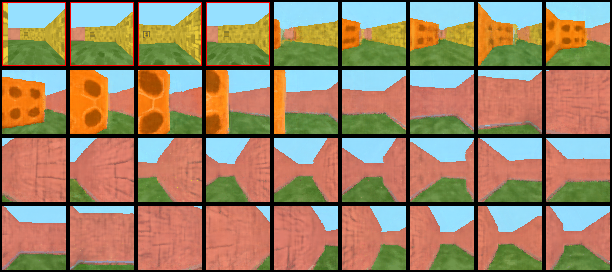}
    \includegraphics[width=\textwidth]{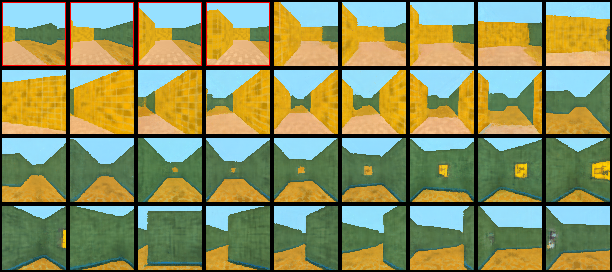}
    \caption{Additional non-cherry-picked video prediction results on DMLab dataset, generated within maximum training length. The first few frames marked in red are the ground truth images of the dataset used for conditioning.}
    \label{fig:dmlab_short}
\end{figure}

\begin{figure}[h]
    \centering
    \includegraphics[width=\textwidth]{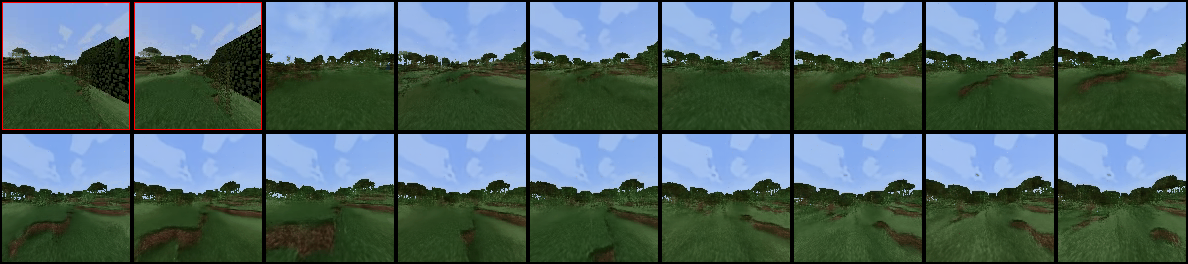}
    \includegraphics[width=\textwidth]{figures/appendix_vis/minecraft_df_0.png}
    \caption{Additional non-cherry-picked video prediction results on the Minecraft dataset, generated within maximum training length. The first few frames marked in red are the ground truth images of the dataset used for conditioning.}
    \label{fig:minecraft_short}
\end{figure}

\newpage

\subsection{Additional results in planning}
We provide some additional visualizations of causal planning in ~\ref{fig:df_plans_additional}. We also present additional visualization of \algo{} performing model predictive control in action. As shown in figure~\ref{fig:mpc_medium_0}, \algo{} can generate plans of shorter horizons since it's flexible horizon.
\begin{figure}[t]
    \centering
    \includegraphics[width=\textwidth]{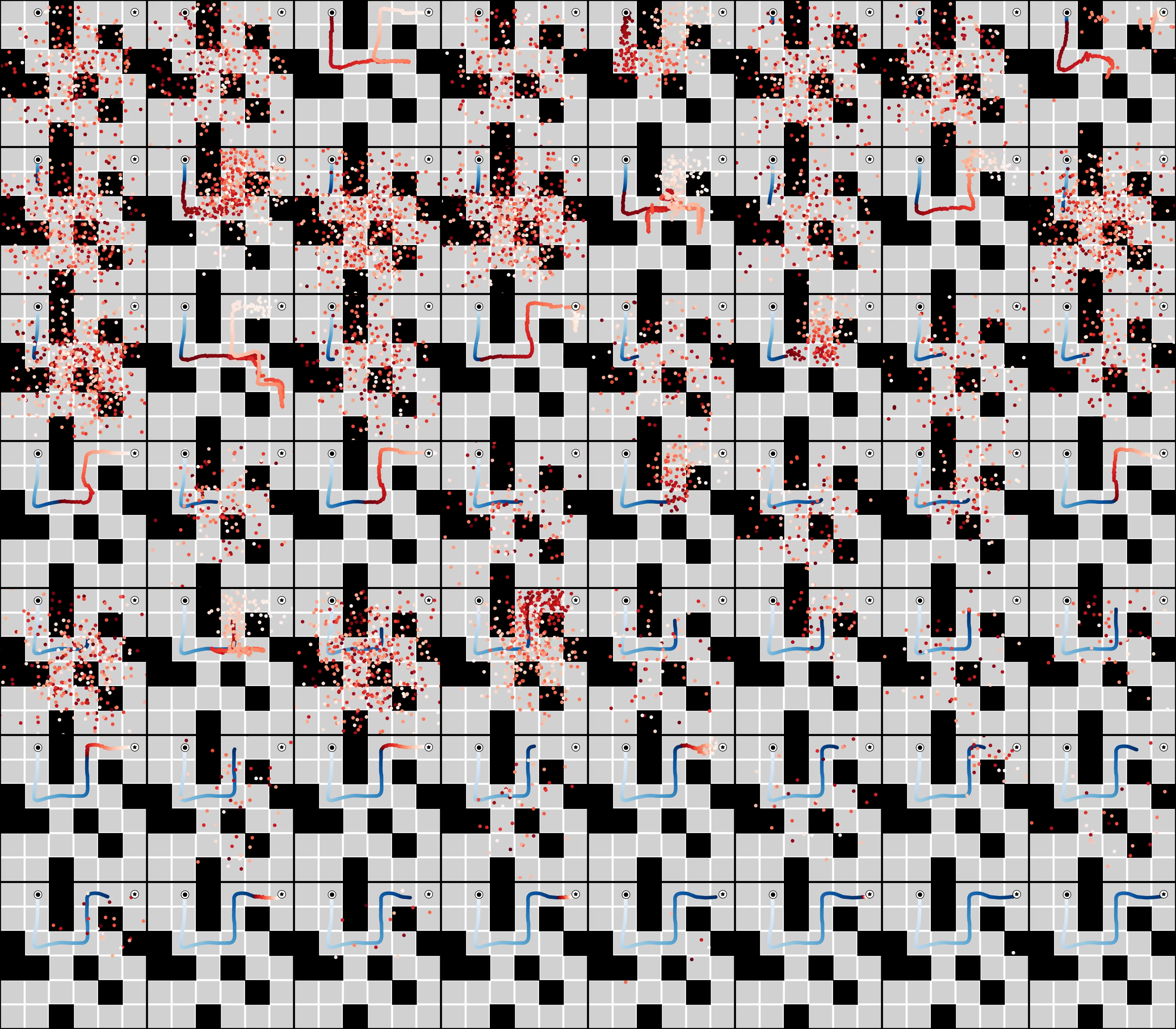}
    \caption{Example MPC planning for maze medium environment. Blue indicated trajectories actually executed already. Red is the plan.}
    \label{fig:mpc_medium_0}
\end{figure}

\begin{figure}[t]
    \centering
    \includegraphics[width=\textwidth]{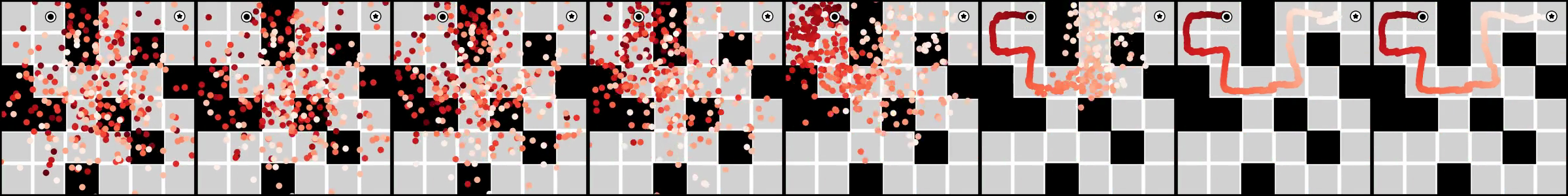}
    \includegraphics[width=\textwidth]{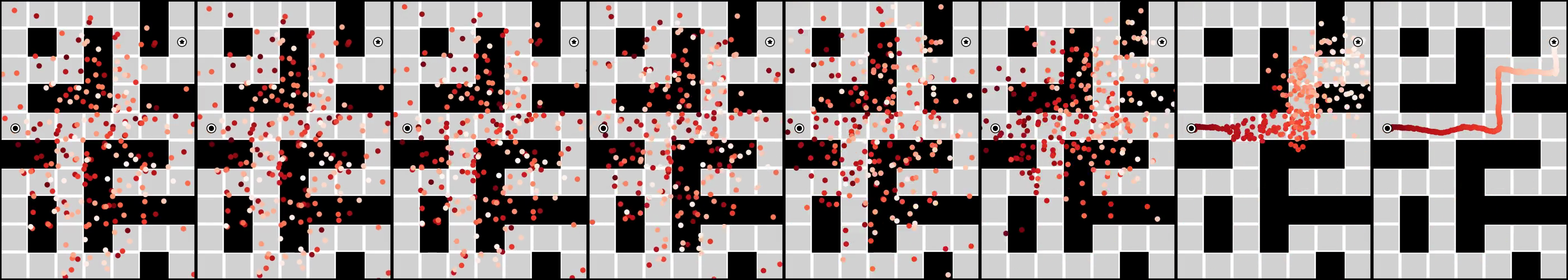}
    \caption{Example plans generated for maze medium (above) and maze large (below) environments.}
    \label{fig:df_plans_additional}
\end{figure}
\newpage

\subsection{Real robot experiment setup}
In Figure~\ref{fig:robot_currupted} we visualize our robot experiment setup with corruption on observation. The dataset is collected when the target bag isn't present, while we test with such a bag in the scene zero-shot for the imitation learning experiment with observation corruption. The typical failure mode is when the robot no longer reacts to the visual clues of the randomized location of objects. We didn't observe the robot act wildly due to visual distractors.

\begin{figure}[h]
    \centering
    \includegraphics[width=0.5\textwidth]{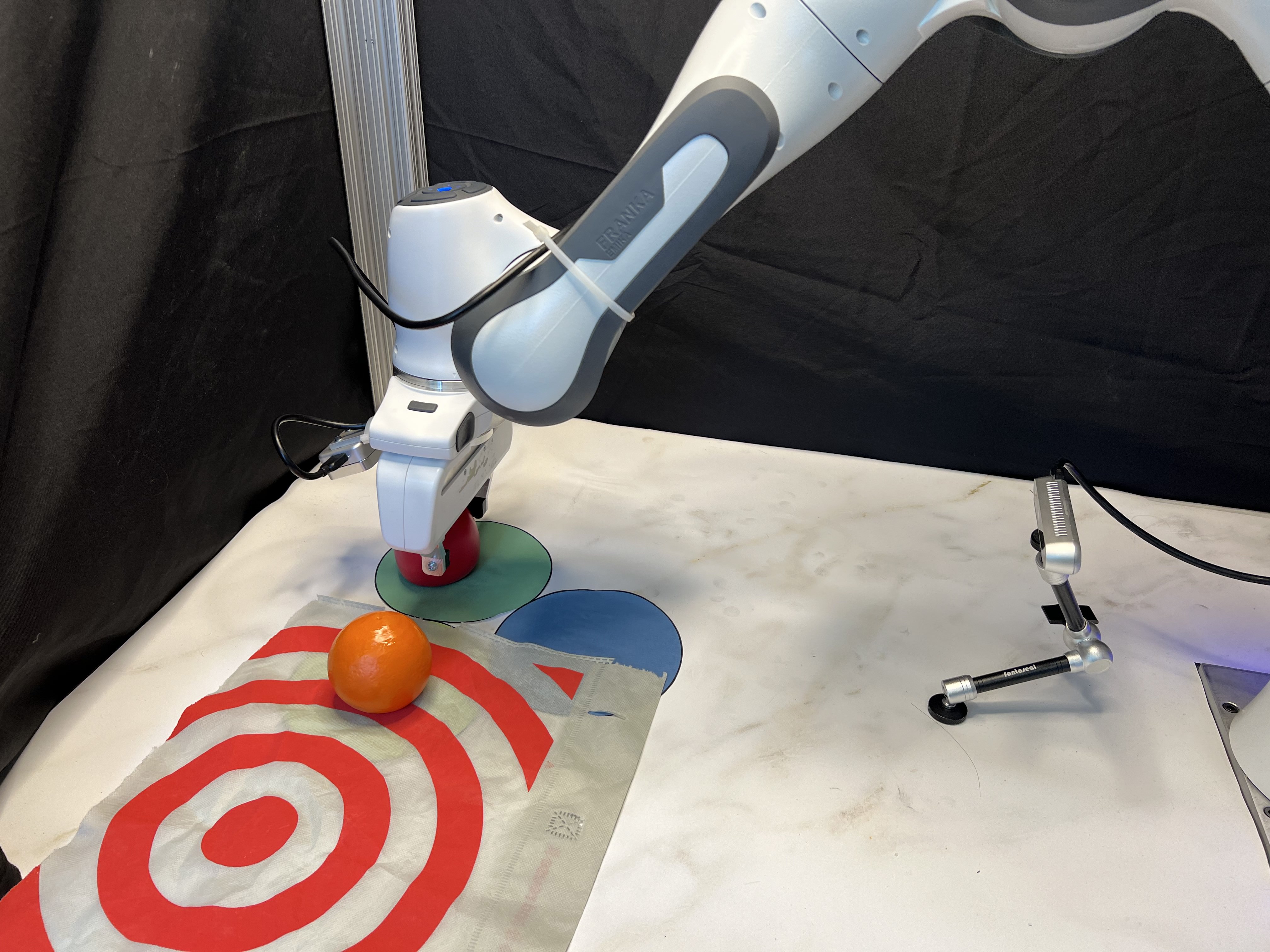}
    \caption{We randomly throw a target bag on the table as a strong visual distractor. \algo{} can be prompted to treat observation as corrupted rather than ground truth.}
    \label{fig:robot_currupted}
\end{figure}

\label{app:dataset_detail}
\section{Additional details about datasets}
\subsection{Dataset for video diffusion}
We adopt the video prediction dataset Minecraft and DMlab used by TECO\cite{yan2023temporally}.

\paragraph{Minecraft Navigation}
The Minecraft navigation dataset consists of first-person-view videos of random walks in the Minecraft `swamp` biome. The agent walks via a technique called `sprint jump` which allows it to jump across blocks without getting stuck at 1 block obstacles. The agent walks straight most of the time, with small chances of turning left or right. The height and width of the video is $128$ pixels and we trim long videos to subsequences of $72$ frames. The dataset comes with paired action data but we discard them to bring more stochasticity to the prediction task. Due to limited compute, we only train on about $10\%$ of the total subsequences.

One problem we noticed about the dataset is when the agent runs into obstacles with a height of 2 blocks or more. In this case, the agent will get stuck and the entire video sequence will consist of grey granite patterns or brown dirty patterns. This leads to a huge amount of frames with these patterns, making video models predict meaningless frames. Yet, we deem this as a problem of this dataset itself.  

\paragraph{DMLab Navigation}
Deepmind Lab navigation dataset consists of random walks in a 3D maze environment. For DMLab, the resolution is $64$ pixels and we use subsequences of $48$ frames. We also disregard the provided actions due to training.

We note that the VQ-VAE latent that stable video diffusion~\cite{blattmann2023stable} diffuses is also only $128\times128\times 3$, indicating \algo{} has the potential to scale up to higher resolution images with pre-trained image encoder and decoders. Due to the sheer size of the datasets, we only use about $10\%$ of the total data sequences for training due to limited computing, as we observe that doing so already allows us to make good generations from initial frames from the test set.

\subsection{Dataset for planning}
\label{app:dataset_planning}
D4RL~\cite{d4rl} is a standard offline RL benchmark featuring a wide range of reinforcement learning environments. Each environment is associated with a provided dataset of offline interactions with the environment featuring state, action, and reward trajectories. 

Like Diffuer~\cite{janner2022planning}, we choose the 3 maze environments as they are challenging long-horizon, multi-modal, sparse reward problems uniquely suited for visualization and evaluating planning algorithms. The IDs for the 3 used environments are ``maze2d-medium-v1'', ``maze2d-large-v1'', ``maze2d-umaze-v1''. In each environment, one controls the acceleration of a robot to walk it towards a goal. The observation space is $4$ dimensional, featuring 2D location and velocity. The action space is 2D acceleration. The agent always receives a random start location and the goal is to reach a fixed goal position for each maze. The agent receives a reward of 1 if it is within a circle of radius 0.5 centered at the goal state, and 0 otherwise. 

The offline RL dataset for the maze environments consists of random walks in the maze. Specifically, the authors first designate all intersections and turn in the maze as waypoints and code an agent to navigate between waypoints with some randomization. As a result, the random walks are generated in a way that the path is collision-free with the walls. The random walks introduce stochasticity to the dataset, as trajectories in the dataset are never towards a specific goal. 

There are a few choices adopted from our main baseline Diffuser~\cite{janner2022planning}: we disregard the reward in the dataset and plan with goals only. We also evaluate a multi-goal variant of each environment (labeled as ``multi'' in Table~\ref{fig:planning}), where the goal is randomized just like the starting position. 

\subsection{Dataset for robot learning}
We choose a long horizon robotic manipulation task as described in Section~\ref{sec:exp_robot}: Consider a tabletop with three slots where we can place objects. One places an apple at slot A or slot B randomly, and then places an orange at the other slot between A and B. A robot is challenged to swap the position of two fruits using the third slot C. That is, it can only move a fruit to an empty slot at a time. For example, when the apple is at slot A and the orange is at slot B, it may move the apple to slot C, leaving slot A empty. Then move the orange to slot A and finally move the apple from slot C to slot B. In figure~\ref{fig:robot}, we illustrate the non-markovian property of the task: When the apple is at slot B and the orange is at slot C, one cannot tell what the immediate action is without knowing the initial positions of objects.

We put stickers on the table indicating a circular region occupied by any slot. Each circular region is designed to be about double the diameter of a fruit. To make sure the task requires visual feedback, we also randomize the location of a fruit inside the slot. We collected $150$ expert demonstrations of a Franka robot performing the task using VR teleoperation and impedance control. Among them, each initial slot configuration makes up half of the dataset. We record videos from two camera views, one from a hand camera and one in the front capturing all three slots. Each demonstration also comes with $6$ dof actions of the robot hand. During the data collection, since one successful demonstration will swap the position of two objects, its end configuration will naturally serve as the starting configuration of the other randomized location, which we leverage to save time. 

Each demonstration comprises $500-600$ frames and actions. We train \algo{} on the entire sequence. However, since adjacent frames are visually close, we pad and downsample the videos to $40$ frames where each frame is bundled with $15$ actions.

\subsection{Dataset for time series }
\label{app:dataset_timeseries}
\ctable[
    caption = {Characteristics of the GluonTS datasets used to benchmark \algo{} in the domain of time series forecasting.},
    label = {tab:ts_data},
    pos = ht,
    doinside = \scriptsize,
    center,
    star
]{lccccc}{}{
    \FL
    Dataset & Dimension & Domain & Frequency & Steps & Prediction length
    \ML
    Exchange         & 8             & $\mathbb{R}^+$  & BUSINESS DAY            & 6,071 & 30 \NN
    Solar            & 137           & $\mathbb{R}^+$  & HOUR           & 7,009 & 24 \NN
    Electricity      & 370           & $\mathbb{R}^+$  & HOUR           & 5,833 & 24 \NN
    Traffic          & 963           & (0,1)           & HOUR           & 4,001 & 24 \NN
    Taxi             & 1,214         & $\mathbb{N}$     & 30-MIN         & 1,488 & 24 \NN
    Wikipedia        & 2,000         & $\mathbb{N}$     & DAY            & 792 & 30
    \LL
}

We use a set of time series datasets accessible via GluonTS~\cite{gluonts}, which are adopted from prior works like \cite{DBLP:journals/corr/YuRD15,DBLP:journals/corr/LaiCYL17,SALINAS20201181}. These datasets capture real-world data of high-dimensional dynamics like monetary exchange rates or the electricity grid. In Table~\ref{tab:ts_data}, we provide a summary of the features of these datasets, such as the dimensionality, the domains, the sampling frequency, the length of the multivariate sequence in the training set, and the prediction length. We access the datasets in Table~\ref{tab:ts_data} via GluonTS and wrap the data processing functions implemented in GluonTS in our own dataloaders. Each dataset consists of one long multivariate sequence, which is the training split, and a set of short sequences that make up the test split. We construct a validation set of the same cardinality as the held-out test set as a randomly sampled subset of subsequences from the training set. All splits are normalized by the mean and the standard deviation of the features in the training split.

\paragraph{Covariates}
Often, statistical models that approximate~\eqref{eq:prediction_pdf} benefit from manually curated features as additional input to the observations. A sequence of covariates $\boldsymbol{C} = \left\{ \mathbf{c}_t \right\}_{t=1}^T$ can be constructed to help the model recognize seasonal patterns and other temporal dependencies. We follow the implementation in~\cite{rasul2021multivariate} to construct the covariate sequence as a function of the frequency of each dataset in Table~\ref{tab:ts_data}. As such, our covariates are composed of lagged inputs, as well as learned embeddings and handcrafted temporal features that encode information such as the hour of the day or the day of the month, depending on the sampling rate of the particular time series that is being modeled. Therefore, covariates are known for the entire interval~$\left[1, T\right]$, even at inference. We can easily incorporate covariates into the probabilistic framework as
\begin{equation} \label{eq:prediction_pdf_covariates}
    q{\left( \mathbf{x}_{t_0:T} \mid \mathbf{x}_{1:t_0-1}, \mathbf{c}_{1:T} \right)} := \prod_{t=t_0}^T{q{\left( \mathbf{x}_t \mid \mathbf{x}_{1:t_0-1}, \mathbf{c}_{1:T} \right)}} \text{.}
\end{equation}
The benefit obtained from covariates is highly dependent on the characteristics of both the dataset and the model used, as well as the feature engineering practices followed.

\paragraph{Metric}
The Continuous Ranked Probability Score (CRPS)~\cite{james1976scoring} is a scoring function that measures how well the forecast distribution matches the ground truth distribution:
\begin{equation*}
    \operatorname{CRPS}(F, x) = \int_\mathbb{R} \left(F(z) - \mathbb{I}\left\{ x \leq z \right\} \right)^2 \dd{z} \text{,}
\end{equation*}
where $F(z)$ is the univariate cumulative distribution function (CDF) over the predicted value, $x$ is a ground truth observation, and $\mathbb{I}\left\{ x \leq z \right\}$ is the indicator function that is one if $x \leq z$ and zero otherwise. By summing the $D$-dimensional time series along the feature dimension for simulated samples (resulting in $\hat{F}_\text{sum}(t)$) and ground truth data (as $\sum_i x^0_{i,t}$), we can report the $\operatorname{CRPS}_\text{sum}$
\begin{equation*}
    \operatorname{CRPS}_\text{sum} = \mathbb{E}_{t \sim \mathcal{U}\left(t_0, T\right)} \left[\operatorname{CRPS}{\left(\hat{F}_\text{sum}(t), \sum_i x^0_{i,t} \right)}\right]
\end{equation*}
as the average over the prediction window. The lower the $\operatorname{CRPS}_\text{sum}$ value, the better the predicted distribution match the data distribution.

First, we manually sum the time series along the feature dimension and estimate the CDF $\hat{F}_\text{sum}(t)$ via 19 quantile levels at each time step $t$ from 100 sampled trajectories. We then use the implementation in GluonTs~\cite{gluonts} to compute the $\operatorname{CRPS}$, which we report as $\operatorname{CRPS}_\text{sum}$ in Table~\ref{tab:results_ts}. While we aggregate the data manually, we verify that the numerical error relative to the GluonTS implementation remains orders of magnitude below the precision threshold of the reported metric.

\end{document}